\documentclass{article}
\usepackage[final,nonatbib]{neurips_modified}
 
 \usepackage[utf8]{inputenc} \usepackage{url}            \usepackage{booktabs}       \usepackage{nicefrac}       \usepackage{microtype}      

\usepackage{fancyhdr}
\usepackage[colorlinks,citecolor=blue,urlcolor=blue,linkcolor=blue,bookmarks=false]{hyperref}
\usepackage{amsfonts,epsfig,graphicx}
\usepackage{afterpage}
\usepackage{amsmath,amssymb,amsthm} 
\usepackage[T1]{fontenc} 
\usepackage{epsf} 
\usepackage{graphics} 
\usepackage{amsfonts,amsmath}
\usepackage[sort,numbers]{natbib} 
\usepackage{psfrag,xspace}
\usepackage{color,etoolbox}
\usepackage{subfigure} 
\usepackage{algpseudocode}
\usepackage{algorithm}
\usepackage{diagbox}

\usepackage{mathabx}
\usepackage[colorinlistoftodos]{todonotes}
\newtheorem{theorem}{Theorem}[section]
\newtheorem{remark}{Remark}[section]
\newtheorem{corollary}{Corollary}[section]
\newtheorem{lemma}{Lemma}[section]
\newtheorem{proposition}{Proposition}[section]
\theoremstyle{definition}
\newtheorem{defn}{Definition}[section]

\newcommand{\newref}[2][]{\hyperref[#2]{#1~\ref*{#2}}}
\renewcommand{\eqref}[1]{\hyperref[#1]{(\ref*{#1})}}
\clearpage{}

\newcommand{\vect}[1]{\ensuremath{\mathbf{#1}}}

\newcommand{\grad}{\nabla}

\newcommand{\argmin}{\mathop{\rm argmin}}
\newcommand{\argmax}{\mathop{\rm argmax}}

\newcommand{\iprod}[2]{\left\langle #1, #2 \right\rangle}

\newcommand{\pertdist}{P_{\text{PRTB}}}
\newcommand{\dom}[1]{\text{dom}(#1)}

\newcommand{\norm}[1]{\|{#1} \|}

\newcommand{\E}[1]{\mathbb{E}\left[#1\right]}
\newcommand{\Eover}[2]{\mathbb{E}_{#1}\left[#2\right]}

\renewcommand{\Pr}{\mathbb{P}}

\renewcommand{\u}{\vect{u}}

\newcommand{\x}{\vect{x}}

\newcommand{\y}{\vect{y}}
\newcommand{\z}{\vect{z}}

\newcommand{\cN}{\mathcal{N}}

\newcommand{\cF}{\mathcal{F}}

\newcommand{\cX}{\mathcal{X}}
\newcommand{\cY}{\mathcal{Y}}
\newcommand{\cP}{\mathcal{P}}
\newcommand{\cQ}{\mathcal{Q}}

\newcommand{\order}[1]{O\left({#1}\right)}
\newcommand{\torder}[1]{\Tilde{O}\left({#1}\right)}

\newcommand{\gphiNA}{\grad\Phi}
\newcommand{\gphi}[1]{\grad\Phi\left(#1\right)}
\newcommand{\gammaF}{\gamma_{\cF}}
\newcommand{\gammaFp}{\gamma_{\cF'}}
\newcommand{\normcompf}{\Psi_1}
\newcommand{\normcompb}{\Psi_2}
\newcommand{\stability}{C}
\clearpage{}

\title{Follow the Perturbed Leader: Optimism and Fast Parallel Algorithms for Smooth Minimax Games}

\author{Arun Sai Suggala \\
  Carnegie Mellon University\\
  \texttt{asuggala@andrew.cmu.edu} \\
   \And
   Praneeth Netrapalli\\
  Microsoft Research, India\\
  \texttt{praneeth@microsoft.com} \\
}

\begin{document}
\maketitle

\begin{abstract}
We consider the problem of online learning and its application to solving minimax games. For the online learning problem, Follow the Perturbed Leader (FTPL) is a widely studied algorithm which enjoys the optimal $\order{T^{1/2}}$ \emph{worst case} regret guarantee for both convex and nonconvex losses. In this work, we show that when the sequence of loss functions is \emph{predictable}, a simple modification of FTPL which incorporates optimism can achieve better regret guarantees, while retaining the optimal worst case regret guarantee for unpredictable sequences. A key challenge in obtaining these tighter regret bounds is the stochasticity and optimism in the algorithm, which requires different analysis techniques than those commonly used in the analysis of FTPL. The key ingredient we utilize in our analysis is the dual view of perturbation as regularization.
While our algorithm has several applications, we consider the specific application of minimax games. For solving  smooth convex-concave games, our algorithm only requires access to a linear optimization oracle. For Lipschitz and smooth nonconvex-nonconcave games, our algorithm requires access to an optimization oracle which computes the perturbed best response. In both these settings, our algorithm solves the game up to an accuracy of $\order{T^{-1/2}}$ using $T$ calls to the optimization oracle. An important feature of our algorithm is that it is highly parallelizable and requires only $O(T^{1/2})$ iterations, with each iteration making $\order{T^{1/2}}$ parallel calls to the optimization oracle.
\end{abstract}

\section{Introduction}
\label{sec:intro}
In this work, we consider the problem of online learning, where in each iteration, the learner chooses an action and observes a loss function. The goal of the learner is to choose a sequence of actions which minimizes the cumulative loss suffered over the course of learning. The paradigm of online learning has many theoretical and practical applications and has been widely studied in a number of fields, including game theory and machine learning. One of the popular applications of online learning is in solving minimax games arising in various contexts such as boosting~\citep{freund1996game}, robust optimization~\citep{chen2017robust}, Generative Adversarial Networks~\citep{goodfellow2014generative}.

In recent years, a number of efficient algorithms have been developed for regret minimization. These algorithms fall into two broad categories, namely, Follow the Regularized Leader (FTRL)~\citep{mcmahan2017survey} and FTPL~\citep{kalai2005efficient} style algorithms. When the sequence of loss functions encountered by the learner are convex, both these algorithms are known to achieve the optimal $\order{T^{1/2}}$ worst case regret~\citep{cesa2006prediction, hazan2016introduction}. While these algorithms have similar regret guarantees, they differ in computational aspects. Each iteration of FTRL involves implementation of an expensive projection step. In contrast, each step of FTPL involves solving a linear optimization problem, which can be implemented efficiently for many problems of interest~\citep{garber2013playing,gidel2016frank,hazan2020projection}. This crucial difference between FTRL and FTPL makes the latter algorithm more attractive in practice. Even in the more general nonconvex setting, where the loss functions encountered by the learner can potentially be nonconvex, FTPL algorithms are attractive. In this setting, FTPL requires access to an offline optimization oracle which computes the perturbed best response, and achieves $\order{T^{1/2}}$ worst case regret~\citep{suggala2019online}. Furthermore, these optimization oracles can be efficiently implemented for many problems by leveraging the rich body of work on global optimization~\cite{horst2013handbook}.

Despite its importance and popularity, FTPL has been mostly studied for the worst case setting, where the loss functions are assumed to be adversarially chosen. In a number of applications of online learning, the loss functions are actually benign and predictable~\citep{rakhlin2012online}. In such scenarios, FTPL can not utilize the predictability of losses to achieve tighter regret bounds. While~\citep{rakhlin2012online, suggala2019online} study variants of FTPL which can make use of predictability, these works either consider restricted settings or provide sub-optimal regret guarantees (see Section~\ref{sec:bg} for more details). This is unlike FTRL, where optimistic variants that can utilize the predictability of loss functions  have been well understood~\citep{rakhlin2012online, rakhlin2013optimization} and have been shown to provide faster convergence rates in applications such as minimax games. In this work, we aim to bridge this gap and study a variant of FTPL called Optimistic FTPL (OFTPL), which can achieve better regret bounds, while retaining the optimal worst case regret guarantee for unpredictable sequences. The main challenge in obtaining these tighter regret bounds is handling the stochasticity and optimism in the algorithm, which requires different analysis techniques to those commonly used in the analysis of FTPL. 
In this work, we rely on the dual view of perturbation as regularization to derive regret bounds of OFTPL. 

To demonstrate the usefulness of OFTPL, we consider the problem of solving minimax games. A widely used approach for solving such games relies on online learning algorithms~\citep{cesa2006prediction}. In this approach, both the minimization and the maximization players play a repeated game against each other and rely on online learning algorithms to choose their actions in each round of the game. In our algorithm for solving games, we let both the players use OFTPL to choose their actions. For solving  smooth convex-concave games, our algorithm only requires access to a linear optimization oracle. For Lipschitz and smooth nonconvex-nonconcave games, our algorithm requires access to an optimization oracle which computes the perturbed best response. In both these settings, our algorithm solves the game up to an accuracy of $\order{T^{-1/2}}$ using $T$ calls to the optimization oracle. While there are prior algorithms that achieve these convergence rates~\cite{he2015semi,suggala2019online}, an important feature of our algorithm is that it is highly parallelizable and requires only $O(T^{1/2})$ iterations, with each iteration making $\order{T^{1/2}}$ parallel calls to the optimization oracle. 
We note that such parallelizable algorithms are especially useful in large-scale machine learning applications such as training of GANs, adversarial training, which often involve huge datasets such as ImageNet~\citep{russakovsky2015imagenet}.
 
\section{Preliminaries and Background Material}
\label{sec:bg}
\vspace{-0.05in}
\paragraph{Online Learning.} The online learning framework can be seen as a repeated game between a learner and an adversary. 
In this framework, in each round $t$, the learner makes a prediction \mbox{$\x_t \in \cX \subseteq \mathbb{R}^d$} for some compact set $\cX$, and the adversary simultaneously chooses a loss function \mbox{$f_t:\cX \rightarrow \mathbb{R}$} and observe each others actions. 
The goal of the learner is to choose a sequence of actions $\{\x_t\}_{t=1}^T$ so that the following notion of regret is  minimized: \mbox{$\sum_{t = 1}^T f_t(\x_t) - \inf_{\x \in \cX}\sum_{t=1}^Tf_t(\x).$}

When the domain $\cX$ and loss functions $f_t$ are convex, a number of efficient algorithms for regret minimization have been studied. Some of these include deterministic algorithms such as Online Mirror Descent, Follow the Regularized Leader (FTRL)~\citep{hazan2016introduction, mcmahan2017survey}, and stochastic algorithms such as Follow the Perturbed Leader (FTPL)~\citep{kalai2005efficient}. In FTRL, one predicts $\x_t$ as $\argmin_{\x \in \cX} \sum_{i=1}^{t-1}\iprod{\grad_i}{\x} + R(\x)$, for some strongly convex regularizer $R$, where $\grad_i = \grad f_i(\x_i)$. FTRL is known to achieve the optimal $O(T^{1/2})$ worst case regret in the convex setting~\citep{mcmahan2017survey}. 
In FTPL, one predicts $\x_t$ as $m^{-1}\sum_{j=1}^m\x_{t,j}$, where $\x_{t,j}$ is a minimizer of the following linear optimization problem: $\argmin_{\x \in \cX} \iprod{\sum_{i=1}^{t-1}\grad_i - \sigma_{t,j}}{\x}.$ Here,  $\{\sigma_{t,j}\}_{j=1}^m$ are independent random perturbations drawn from some appropriate probability distribution such as exponential distribution or uniform distribution in a hyper-cube. Various choices of perturbation distribution gives rise to various FTPL algorithms. 
When the loss functions are linear,~\citet{kalai2005efficient} show that FTPL achieves $\order{T^{1/2}}$ expected regret, irrespective of the choice of $m$. When the loss functions are convex,~\citet{hazan2016introduction} showed that the deterministic version of FTPL (\emph{i.e.,} as $m \to \infty$) achieves $\order{T^{1/2}}$ regret. While projection free methods for online convex learning have been studied since the early work of~\cite{hazan2012projection}, surprisingly, regret bounds of FTPL for finite $m$ have only been recently studied~\citep{hazan2020projection}. \citet{hazan2020projection} show that for Lipschitz and convex functions, FTPL achieves $\order{T^{1/2} + m^{-1/2}T }$ expected regret, and for smooth convex functions, the algorithm achieves $\order{T^{1/2} + m^{-1}T}$ expected regret.

When either the domain $\cX$ or the loss functions $f_t$ are non-convex, no deterministic algorithm can achieve $o(T)$ regret~\citep{cesa2006prediction,suggala2019online}. In such cases, one has to rely on randomized algorithms to achieve sub-linear regret. In randomized algorithms, in each round $t$, the learner samples the prediction $\x_t$ from a distribution $P_t \in \cP$, where $\cP$ is the set of all probability distributions supported on $\cX$. The goal of the learner is to choose a sequence of distributions $\{P_t\}_{t=1}^T$ to minimize the expected regret
$
\sum_{t = 1}^T \Eover{\x\sim P_t}{f_t(\x)} - \inf_{\x \in \cX}\sum_{t=1}^Tf_t(\x).
$
A popular technique to minimize the expected regret is to consider a linearized problem in the space of probability distributions with losses $\Tilde{f}_t(P) = \Eover{\x\sim P}{f_t(\x)}$ and perform FTRL in this space. In such a technique, $P_t$ is computed as:
$
\argmin_{P \in \cP} \sum_{i=1}^{t-1} \Tilde{f}_i(P) +  R(P),
$
for some strongly convex regularizer $R(P).$ 
When $R(P)$ is the negative entropy of $P$, the algorithm is called entropic mirror descent or continuous exponential weights. 
This algorithm achieves $\order{T^{1/2}}$ expected regret for bounded loss functions $f_t$. Another technique to minimize expected regret is to rely on  FTPL~\citep{gonen2018learning, suggala2019online}. Here, the learner generates the random prediction $\x_t$ by first sampling a random perturbation $\sigma$ and then computing the perturbed best response, which is defined as $\argmin_{\x \in \cX} \sum_{i=1}^{t-1}f_i(\x) - \iprod{\sigma}{\x}$. In a recent work, \citet{suggala2019online} show that this algorithm achieves $\order{T^{1/2}}$ expected regret, whenever the sequence of loss functions are Lipschitz. We now briefly discuss the computational aspects of FTRL and FTPL. Each iteration of FTRL (with entropic regularizer) requires sampling from a non-logconcave distribution. In contrast, FTPL requires solving a nonconvex optimization problem to compute the perturbed best response. Of these, computing the perturbed best response seems significantly easier since standard algorithms such as gradient descent seem to be able to find approximate global optima reasonably fast, even for complicated tasks such as training deep neural networks.
\vspace{-0.05in}
\paragraph{Online Learning with Optimism.} When the sequence of loss functions are convex and predictable, \citet{rakhlin2012online, rakhlin2013optimization} study optimistic variants of FTRL which can exploit the predictability to obtain better regret bounds. Let $g_t$ be our guess of $\grad_t$ at the beginning of round $t$. Given $g_t$, we predict $\x_t$ in Optimistic FTRL (OFTRL) as
$
\argmin_{\x\in\cX}\iprod{\sum_{i=1}^{t-1}\grad_i + g_t}{\x} + R(\x).
$ 
Note that when $g_t=0$, OFTRL is equivalent to FTRL. \citep{rakhlin2012online, rakhlin2013optimization} show that the regret bounds of OFTRL only depend on $(g_t-\grad_t)$. Moreover, these works show that OFTRL provides faster convergence rates for solving smooth convex-concave games.
In contrast to FTRL, the optimistic variants of FTPL have been less well understood.
\citep{rakhlin2012online} studies OFTPL for linear loss functions. But they consider restrictive settings and their algorithms require the knowledge of sizes of deviations $(g_t-\nabla_t)$. \citep{suggala2019online} studies OFTPL for the more general nonconvex setting.  The algorithm predicts $\x_t$ as $\argmin_{\x \in \cX} \sum_{i=1}^{t-1}f_i(\x) +g_t(\x) - \iprod{\sigma}{\x}$, where $g_t$ is our guess of $f_t$. However, the regret bounds of~\citep{suggala2019online} are sub-optimal and weaker than the bounds we obtain in our work (see Theorem~\ref{thm:oftpl_noncvx_regret}). Moreover, \citep{suggala2019online} does not provide any consequences of their results to minimax games. We note that their sub-optimal regret bounds translate to sub-optimal rates of convergence for solving smooth minimax games.
\vspace{-0.05in}
\paragraph{Minimax Games.} Consider the following problem, which we refer to as minimax game: $\min_{\x \in \cX}\max_{\y \in \cY} f(\x,\y)$. In these games, we are often interested in finding a Nash Equilibrium (NE). A pair $(P,Q)$, where $P$ is a probability distribution over $\cX$ and $Q$ is a probability distribution over $\cY$, is called a NE if: $\sup_{\y\in\cY}\Eover{\x\sim P}{f(\x,\y)}\leq\Eover{\x\sim P, \y \sim Q}{f(\x,\y)} \leq \inf_{\x\in\cX}\Eover{\y\sim Q}{f(\x,\y)}.$
A standard technique for finding a NE of the game is to rely on no-regret algorithms~\citep{cesa2006prediction, hazan2016introduction}. Here, both $\x$ and $\y$ players play a repeated game against each other and use online learning algorithms to choose their actions. The average of the iterates generated via this repeated game can be shown to converge to a NE.
\vspace{-0.05in}
\paragraph{Projection Free Learning.} Projection free learning algorithms are attractive as they only involve solving linear optimization problems. Two broad classes of projection free techniques have been considered for online convex learning and minimax games, namely, Frank-Wolfe (FW) methods and FTPL based methods. \citet{garber2013playing} consider the problem of online learning when the action space $\cX$ is a polytope. They provide a FW method which achieves $\order{T^{1/2}}$ regret using $T$ calls to the linear optimization oracle. \citet{hazan2012projection} provide a FW technique which achieves $\order{T^{3/4}}$ regret for general online convex learning with Lipschitz losses and uses $T$ calls to the linear optimization oracle. In a recent work,~\citet{hazan2020projection} show that FTPL achieves $\order{T^{2/3}}$ regret for online convex learning with smooth losses, using $T$ calls to the linear optimization oracle. This translates to $\order{T^{-1/3}}$ rate of convergence for solving smooth convex-concave games. Note that, in contrast, our algorithm achieves $\order{T^{-1/2}}$ convergence rate in the same setting.  \citet{gidel2016frank} study FW methods for solving convex-concave games. When the constraint sets $\cX,\cY$ are \emph{strongly convex}, the authors show geometric convergence of their algorithms. In a recent work,~\citet{he2015semi} propose a FW technique for solving smooth convex-concave games which converges at a rate of $\order{T^{-1/2}}$ using $T$ calls to the linear optimization oracle. We note that our simple OFTPL based algorithm achieves these rates, with the added advantage of parallelizability. That being said, \citet{he2015semi} achieve dimension free convergence rates in the Euclidean setting, where the smoothness is measured w.r.t $\|\cdot\|_2$ norm. In contrast, the rates of convergence of our algorithm depend on the dimension. \vspace{-0.05in}
\paragraph{Notation.} $\|\cdot\|$ is a norm on some vector space, which is typically $\mathbb{R}^d$ in our work.  $\|\cdot\|_{*}$ is the dual norm of $\|\cdot\|$, which is defined as $\|\x\|_{*} = \sup\{\iprod{\u}{\x}: \u\in\mathbb{R}^d, \|\u\|\leq 1\}$. We use $\normcompf, \normcompb$ to denote norm compatibility constants of $\|\cdot\|$, which are defined as $\normcompf = \sup_{\x\neq 0} \|\x\|/\|\x\|_2,\  \normcompb = \sup_{\x\neq 0} \|\x\|_2/\|\x\|.$
We use the notation $f_{1:t}$ to denote $\sum_{i=1}^tf_i$. In some cases, when clear from context, we overload the notation $f_{1:t}$ and use it to denote the set $\{f_1,f_2\dots f_t\}$. For any convex function $f$, $\partial f(\x)$ is the set of all subgradients of $f$ at $\x$. For any function $f:\cX\times \cY \to \mathbb{R}$,  $f(\cdot,\y), f(\x,\cdot)$ denote the functions $\x\rightarrow f(\x,\y), \y\rightarrow f(\x,\y).$ For any function $f:\cX\to\mathbb{R}$ and any probability distribution $P$, we let $f(P)$ denote $\Eover{\x\sim P}{f(\x)}.$ Similarly, for any function $f:\cX\times\cY\to\mathbb{R}$ and any two distributions $P,Q$, we let $f(P,Q)$ denote $\Eover{\x\sim P,\y\sim Q}{f(\x,\y)}.$  For any set of distributions $\{P_j\}_{j=1}^m$, $\frac{1}{m}\sum_{j=1}^mP_j$ is the mixture distribution which gives equal weights to its components. We use $\text{Exp}(\eta)$ to denote the exponential distribution, whose CDF is given by $P(Z\leq s) =1-\exp(-s/\eta).$  
\section{Dual view of Perturbation as Regularization}
\label{sec:duality}
In this section, we present a key result which shows that when the sequence of loss functions are convex, every FTPL algorithm is an FTRL algorithm. Our analysis of OFTPL relies on this dual view to obtain tight regret bounds. This duality between FTPL and FTRL was originally studied by~\citet{hofbauer2002global}, where the authors show that any FTPL algorithm, with perturbation distribution admitting a strictly positive density on $\mathbb{R}^d$, is an FTRL algorithm w.r.t some convex regularizer. However, many popular perturbation distributions such as exponential and uniform distributions don't have a strictly positive density. In a recent work, \citet{abernethy2016perturbation} point out that the duality between FTPL and FTRL holds for very general perturbation distributions. However, the authors do not provide a formal theorem showing this result. Here, we provide a proposition formalizing the claim of~\citep{abernethy2016perturbation}.
\begin{proposition}
\label{prop:ftpl_ftrl_connection}
Consider the problem of online convex learning, where the sequence of loss functions $\{f_t\}_{t=1}^T$ encountered by the learner are convex. Consider the deterministic version of FTPL algorithm, where the learner predicts $\x_t$ as $\Eover{\sigma}{\argmin_{\x \in \cX}\iprod{\grad_{1:t-1}-\sigma}{\x}}$.
Suppose the perturbation distribution is absolutely continuous w.r.t the Lebesgue measure. Then there exists a convex regularizer $R:\mathbb{R}^d\to \mathbb{R}\cup\{\infty\}$, with domain $\dom{R}\subseteq \cX$, such that 
$
\x_t  = \argmin_{\x \in \cX} \iprod{\grad_{1:t-1}}{\x}+R(\x).
$
Moreover, \mbox{$-\grad_{1:t-1} \in \partial R(\x_t),$} and $\x_t = \partial R^{-1}\left(-\grad_{1:t-1}\right),$ where $\partial R^{-1}$ is the inverse of $\partial R$ in the sense of multivalued mappings.
\end{proposition}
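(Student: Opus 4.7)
The plan is to construct the convex regularizer $R$ as the Fenchel conjugate of a suitable ``perturbed support function'' of $\cX$, and to read off the FTRL optimality condition from Fenchel--Young duality. Concretely, define
\[
\Phi(\theta) \;:=\; \mathbb{E}_{\sigma}\!\left[\max_{\x \in \cX}\iprod{\sigma - \theta}{\x}\right]
\;=\; \mathbb{E}_{\sigma}[s_{\cX}(\sigma - \theta)],
\]
where $s_{\cX}$ denotes the support function of $\cX$. Assuming $\mathbb{E}\|\sigma\|<\infty$ (true for all standard perturbations), compactness of $\cX$ makes $\Phi$ finite on $\mathbb{R}^d$, and $\Phi$ is convex as an expectation of pointwise maxima of affine functions of $\theta$.

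The next step is to show that $\Phi$ is differentiable at every $\theta \in \mathbb{R}^d$ with
\[
\nabla\Phi(\theta) \;=\; -\,\mathbb{E}_{\sigma}\!\Bigl[\argmax_{\x \in \cX}\iprod{\sigma-\theta}{\x}\Bigr],
\]
so that the FTPL iterate satisfies $\x_t = -\nabla\Phi(\grad_{1:t-1})$. For each fixed $\sigma$, the map $\theta\mapsto s_\cX(\sigma-\theta)$ is convex and Lipschitz with constant $\max_{\x\in\cX}\|\x\|_*$, and is differentiable at $\theta$ exactly when the support-maximizer $\argmax_{\x\in\cX}\iprod{\sigma-\theta}{\x}$ is unique. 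The set of offending $\sigma$ is the Lebesgue-null non-differentiability set of the finite convex function $s_\cX$ (translated); since the law of $\sigma$ is absolutely continuous, it has probability zero, so for each fixed $\theta$ the integrand is almost surely Fr\'echet differentiable in $\theta$. Uniform Lipschitz domination then allows me to interchange differentiation and expectation on directional difference quotients, yielding the displayed formula.

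Now define $R(\x):=\Phi^{*}(-\x)$, where $\Phi^{*}$ is the Fenchel conjugate of $\Phi$; this is a closed convex function on $\mathbb{R}^d$. Fenchel--Young equality states that $-\x_t = \nabla\Phi(\grad_{1:t-1})$ is equivalent to $\grad_{1:t-1} \in \partial\Phi^{*}(-\x_t)$, and the chain rule for subgradients under the linear map $\x\mapsto -\x$ converts this to $-\grad_{1:t-1} \in \partial R(\x_t)$. This is precisely the first-order optimality condition for the convex FTRL problem $\argmin_{\x\in\cX}\iprod{\grad_{1:t-1}}{\x}+R(\x)$, and it also gives $\x_t \in \partial R^{-1}(-\grad_{1:t-1})$.

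Finally, I verify $\dom(R)\subseteq\cX$. If $\x\notin\cX$, strict separation of $-\x$ from the closed convex set $-\cX$ furnishes a direction $v$ with $\iprod{v}{-\x}>s_\cX(-v)$. Plugging $\theta=tv$ with $t>0$ into the definition of $\Phi^{*}(-\x)$ and using positive homogeneity and subadditivity of $s_\cX$ yields
\[
\Phi^{*}(-\x) \;\geq\; t\bigl(\iprod{v}{-\x}-s_\cX(-v)\bigr) - \mathbb{E}_{\sigma}[s_\cX(\sigma)],
\]
which tends to $+\infty$ as $t\to\infty$, so $R(\x)=\infty$. The main obstacle in the argument is the differentiability step: $\Phi$ must be shown differentiable at \emph{every} $\theta$, not merely Lebesgue-a.e., and the only ingredient making this possible for perturbations without a strictly positive density---such as the exponential or uniform---is absolute continuity, used to null out the measure-zero set of pathological $\sigma$ for each given $\theta$. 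This is exactly the gap left open by the classical strictly-positive-density argument.
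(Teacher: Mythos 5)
Your proposal takes essentially the same route as the paper's proof: define the expected perturbed support function (your $\Phi$ is the paper's $\Psi$ up to a sign flip), establish differentiability everywhere by combining the Lebesgue-a.e.\ differentiability of the pointwise support function with absolute continuity of the perturbation law and a dominated-convergence interchange (the paper cites Bertsekas's Proposition 2.2 for this step, you re-derive it via the Lipschitz domination of difference quotients), take $R$ to be the Fenchel conjugate, and read the FTRL optimality condition and the multivalued-inverse claim off Fenchel--Young. The one step where you genuinely diverge is proving $\dom{R}\subseteq\cX$: the paper invokes the range/domain inclusion theorem for subgradient maps, $\text{ri}(\dom{R})\subseteq\text{range}\,\nabla\Psi\subseteq\cX$, plus closedness of $\cX$, with a slightly awkward two-case branching, whereas you give a direct separation estimate showing $\Phi^{*}(-\x)\geq t\bigl(\iprod{v}{-\x}-s_{\cX}(-v)\bigr)-\E{s_{\cX}(\sigma)}\to\infty$ for $\x\notin\cX$. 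Your version is more self-contained and avoids that branching, at the cost of a short explicit computation; both are valid.
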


\section{Online Learning with OFTPL}
\label{sec:onlinelearning}

\subsection{Online Convex Learning}
\begin{algorithm}[t]
\caption{Convex OFTPL}
\label{alg:oftpl_cvx}
\begin{algorithmic}[1]
  \State \textbf{Input:}  Perturbation Distribution $\pertdist,$ number of samples $m,$ number of iterations $T$
  \State Denote $\grad_0 = 0$
  \For{$t = 1 \dots T$}
  \State Let $g_t$ be the guess for $\grad_t$
  \For{$j=1\dots m$}
  \State Sample $\sigma_{t,j}\sim \pertdist$
  \State $\x_{t,j}\in \argmin_{\x \in \cX}\iprod{\grad_{0:t-1}+  g_t-\sigma_{t,j}}{\x}$
  \EndFor
  \State Play $\x_t=\frac{1}{m}\sum_{j=1}^m \x_{t,j}$
  \State Observe loss function $f_t$
  \EndFor
\end{algorithmic}
\end{algorithm}
In this section, we present the OFTPL algorithm for online convex learning and derive an upper bound on its regret. The algorithm we consider is similar to the OFTRL algorithm (see Algorithm~\ref{alg:oftpl_cvx}).
Let $g_t[f_1\dots f_{t-1}]$ be our guess for $\grad_t$ at the beginning of round $t$, with $g_1 = 0$.  To simplify the notation, in the sequel, we suppress the dependence of $g_t$ on $\{f_{i}\}_{i=1}^{t-1}$. Given $g_t$, we predict $\x_t$ in OFTPL as follows. We sample independent perturbations $\{\sigma_{t,j}\}_{j=1}^m$ from the perturbation distribution $\pertdist$ and compute $\x_t$ as $m^{-1}\sum_{j=1}^m\x_{t,j}$, where $\x_{t,j}$ is a minimizer of the following linear optimization problem $$\x_{t,j} \in \argmin_{\x \in \cX} \iprod{\grad_{1:t-1} + g_t - \sigma_{t,j}}{\x}.$$

We now present our main theorem which bounds the regret of OFTPL. A key quantity the regret depends on is the \emph{stability} of predictions of the deterministic version of OFTPL. Intuitively, an algorithm is stable if its predictions in two consecutive iterations differ by a small quantity. To capture this notion, we first define function $\gphiNA:\mathbb{R}^d \to \mathbb{R}^d$ as:
$
    \gphi{g} = \Eover{\sigma}{\argmin_{\x \in \cX}\left\langle g-\sigma, \x\right\rangle}.
$
Observe that $\gphi{\nabla_{1:t-1} + g_t}$ is the prediction of the deterministic version of OFTPL. We say the predictions of OFTPL are stable, if $\gphiNA$ is a Lipschitz function.
\begin{defn}[Stability]
The predictions of OFTPL are said to be $\beta$-stable w.r.t some norm $\|\cdot\|$, if 
\[
\forall g_1,g_2\in\mathbb{R}^d \quad \|\gphi{g_1} - \gphi{g_2}\|_{*} \leq \beta \|g_1-g_2\|.
\]
\end{defn}
\begin{theorem}
\label{thm:oftpl_regret}
Suppose the perturbation distribution $\pertdist$ is absolutely continuous w.r.t Lebesgue measure. 
Let $D$ be the diameter of $\cX$ w.r.t $\|\cdot\|$, which is defined as $D= \sup_{\x_1,\x_2\in\cX} \|\x_1-\x_2\|.$ 
Let \mbox{$\eta=\Eover{\sigma}{\|\sigma\|_*},$} and suppose the predictions of OFTPL are $\stability \eta^{-1}$-stable w.r.t $\|\cdot\|_*$, where $\stability $ is a constant that depends on the set $\mathcal{X}.$
Finally, suppose the sequence of loss functions $\{f_t\}_{t=1}^T$ are Holder smooth and satisfy
\[
\forall \x_1,\x_2\in \cX\quad \|\grad f_t(\x_1)-\grad f_t(\x_2)\|_* \leq L\|\x_1-\x_2\|^{\alpha},
\]
for some constant $\alpha \in [0,1]$.
Then the expected regret of Algorithm~\ref{alg:oftpl_cvx} satisfies
\begin{align*}
    \sup_{\x\in\cX} \E{\sum_{t=1}^Tf_t(\x_t) -  f_t(\x)} &\leq  \eta D + \sum_{t=1}^T \frac{\stability }{2\eta}\E{\|\grad_t-g_{t}\|_{*}^2}  - \sum_{t=1}^T\frac{\eta}{2\stability } \E{\|\x_t^{\infty}-\Tilde{\x}_{t-1}^{\infty}\|^2} \\
    &\quad + LT\left(\frac{\normcompf \normcompb D}{\sqrt{m}}\right)^{1+\alpha}.
\end{align*}
where $\x_t^{\infty} = \E{\x_t|g_t,f_{1:t-1}, \x_{1:t-1}}$ and $\Tilde{\x}_{t-1}^{\infty} = \E{\Tilde{\x}_{t-1}|f_{1:t-1}, \x_{1:t-1}}$ and $\Tilde{\x}_{t-1}$ denotes the prediction in the $t^{th}$ iteration of Algorithm~\ref{alg:oftpl_cvx}, if guess $g_{t}=0$ was used. Here, $\normcompf, \normcompb$ denote the norm compatibility constants of $\|\cdot\|.$
\end{theorem}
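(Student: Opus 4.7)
The plan is to reduce the regret analysis to a deterministic OFTRL regret bound via the FTPL--FTRL duality of Proposition~\ref{prop:ftpl_ftrl_connection}, with a small stochastic penalty for replacing the deterministic predictor by its $m$-sample empirical average. Let $\x_t^{\infty} = \gphi{\grad_{1:t-1} + g_t}$, so that $\E{\x_t \mid g_t, f_{1:t-1}, \x_{1:t-1}} = \x_t^{\infty}$, and $\Tilde{\x}_{t-1}^{\infty} = \gphi{\grad_{1:t-1}}$. Starting from convexity, $f_t(\x_t) - f_t(\x^*) \leq \iprod{\grad_t}{\x_t - \x^*}$, I split
\begin{equation*}
\iprod{\grad_t}{\x_t - \x^*} \;=\; \iprod{\grad_t}{\x_t^{\infty} - \x^*} \;+\; \iprod{\grad_t}{\x_t - \x_t^{\infty}},
\end{equation*}
and handle the two pieces separately.

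\paragraph{Stochastic piece.} Writing $\grad_t = \grad f_t(\x_t^{\infty}) + [\grad f_t(\x_t) - \grad f_t(\x_t^{\infty})]$, the conditional expectation of the inner product against the first summand vanishes since $\x_t^{\infty}$ is measurable w.r.t.\ the past and $\E{\x_t - \x_t^{\infty} \mid \cdot} = 0$; H\"older smoothness and Cauchy--Schwarz bound the second summand by $L\|\x_t - \x_t^{\infty}\|^{1+\alpha}$. Since $\x_t$ is the mean of $m$ i.i.d.\ samples supported in $\cX$ with mean $\x_t^{\infty}$, Jensen's inequality together with the norm-compatibility constants gives
\begin{equation*}
\E{\|\x_t - \x_t^{\infty}\|^{1+\alpha}} \;\leq\; \normcompf^{1+\alpha}\, \E{\|\x_t - \x_t^{\infty}\|_2^2}^{(1+\alpha)/2} \;\leq\; \left(\frac{\normcompf \normcompb D}{\sqrt{m}}\right)^{1+\alpha},
\end{equation*}
which, summed over $t$, yields the last term in the claimed bound.

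\paragraph{Deterministic piece.} Applying Proposition~\ref{prop:ftpl_ftrl_connection} to the linear coefficients $\grad_{1:t-1} + g_t$ and $\grad_{1:t-1}$ delivers a single convex regularizer $R$ (depending only on $\pertdist$) with $\dom{R}\subseteq \cX$ such that
\begin{equation*}
\x_t^{\infty} = \argmin_{\x\in\cX}\iprod{\grad_{1:t-1}+g_t}{\x} + R(\x), \qquad \Tilde{\x}_{t-1}^{\infty} = \argmin_{\x\in\cX}\iprod{\grad_{1:t-1}}{\x} + R(\x).
\end{equation*}
The stability hypothesis says $\gphiNA = \partial R^{-1}$ is $(\stability/\eta)$-Lipschitz from $\|\cdot\|$ to $\|\cdot\|_{*}$; by the Fenchel duality between smoothness of $R^{*}$ and strong convexity of $R$, this is equivalent to $R$ being $(\eta/\stability)$-strongly convex w.r.t.\ $\|\cdot\|$. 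Plugging $\mu = \eta/\stability$ into the standard OFTRL regret inequality with a $\mu$-strongly convex regularizer gives
\begin{equation*}
\sum_{t=1}^T \iprod{\grad_t}{\x_t^{\infty} - \x^*} \;\leq\; \big[R(\x^*) - \min\nolimits_{\cX} R\big] + \sum_t \frac{\stability}{2\eta}\|\grad_t - g_t\|_{*}^2 \;-\; \sum_t \frac{\eta}{2\stability}\|\x_t^{\infty} - \Tilde{\x}_{t-1}^{\infty}\|^2.
\end{equation*}
The leading $\eta D$ constant then comes from the explicit Fenchel dual form $R = \Psi^{*}$ with $\Psi(g) = \Eover{\sigma}{\max_{\x\in\cX}\iprod{\sigma + g}{\x}}$: evaluating at $g = 0$ and using $\gphi{0}$ as the minimizer of $R$ yields $\min_{\cX} R = -\Eover{\sigma}{\max_{\x}\iprod{\sigma}{\x}}$, while for any $\x^*\in\cX$, $R(\x^*) \leq -\Eover{\sigma}{\iprod{\sigma}{\x^*}}$, so the difference is at most $\Eover{\sigma}{\max_{\x'\in\cX}\iprod{\sigma}{\x'-\x^*}} \leq \Eover{\sigma}{\|\sigma\|_{*}}\, D = \eta D$. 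Taking expectations, combining with the stochastic bound above, and taking the supremum over $\x^* \in \cX$ (which only affects the uniform $\eta D$ term) proves the theorem.

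\paragraph{Main obstacle.} The delicate step is translating stability of the prediction map $\gphiNA$ into strong convexity of the (possibly non-smooth) dual regularizer $R$: the perturbation distribution is assumed only absolutely continuous, so $R$ need not be differentiable on $\cX$, and the correspondence $\gphiNA = \partial R^{-1}$ supplied by Proposition~\ref{prop:ftpl_ftrl_connection} must be manipulated through multivalued subgradient mappings rather than ordinary derivatives. Once this identification is in hand, the remainder is a bookkeeping assembly of the OFTRL regret inequality with the variance estimate above.
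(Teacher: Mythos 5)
Your proposal is correct and follows essentially the same route as the paper: reduce to a deterministic OFTRL bound via Proposition~\ref{prop:ftpl_ftrl_connection}, pass from stability of $\gphiNA$ to strong convexity of $R$ through the Fenchel duality between strong smoothness and strong convexity, and handle the $\x_t - \x_t^\infty$ discrepancy via Hölder smoothness plus the $O(D^2/m)$ variance estimate. The only cosmetic divergence is that the paper unpacks the OFTRL step explicitly with Bregman-divergence three-point identities and a telescoping of $B(\x,\Tilde\x_0^\infty)-B(\x,\Tilde\x_T^\infty)$ (then converted to $\Phi$ to get $\eta D$), whereas you invoke the standard OFTRL inequality as a black box and bound the $R(\x^*)-\min_\cX R$ constant directly from $R=\Psi^*$ — both routes yield the same $\eta D$ term.
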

Regret bounds that hold with high probability can be found in Appendix~\ref{sec:hp_bounds}. The above Theorem shows that the regret of OFTPL only depends on $\|\grad_t-g_{t}\|_{*}$, which quantifies the accuracy of our guess $g_t$. In contrast, the regret of FTPL depends on $\|\grad_t\|_{*}$~\citep{hazan2016introduction}. This shows that for predictable sequences, with an appropriate choice of $g_t$, OFTPL can achieve better regret guarantees than FTPL.  
As we demonstrate in Section~\ref{sec:games}, this helps us design faster algorithms for solving minimax games. 

Note that the above result is very general and holds for any absolutely continuous perturbation distribution.
The key challenge in instantiating this result for any particular perturbation distribution is in showing the stability of predictions. Several past works have studied the stability of FTPL for various perturbation distributions such as uniform, exponential, Gumbel distributions~\citep{kalai2005efficient,hazan2016introduction, hazan2020projection}. Consequently, the above result can be used to derive tight regret bounds for all these perturbation distributions.
As one particular instantiation of Theorem~\ref{thm:oftpl_regret}, we consider the special case of $g_t = 0$ and derive regret bounds for FTPL, when the perturbation distribution is the uniform distribution over a ball centered at the origin.
\begin{corollary}[FTPL]
\label{cor:ftpl_cvx_gaussian}
Suppose the perturbation distribution is equal to the uniform distribution over $\{\x:\|\x\|_2 \leq (1+d^{-1})\eta\}.$  Let $D$ be the diameter of $\cX$ w.r.t $\|\cdot\|_2$. Then $\Eover{\sigma}{\|\sigma\|_2} = \eta$, and the predictions of OFTPL are $dD\eta^{-1}$-stable w.r.t $\|\cdot\|_2$.
Suppose, the sequence of loss functions $\{f_t\}_{t=1}^T$ are $G$-Lipschitz and satisfy $\sup_{\x \in \mathcal{X}} \|\grad f_t(\x)\|_2 \leq G$. Moreover, suppose $f_t$ satisfies the Holder smooth condition in Theorem~\ref{thm:oftpl_regret} w.r.t $\|\cdot\|_2$ norm. Then the expected regret of Algorithm~\ref{alg:oftpl_cvx}, with guess $g_t = 0$, satisfies
\begin{align*}
    \sup_{\x\in\cX} \E{\sum_{t=1}^Tf_t(\x_t) -  f_t(\x)} &\leq  \eta D + \frac{dDG^2 T}{2\eta} + LT\left(\frac{D}{\sqrt{m}}\right)^{1+\alpha}.
\end{align*}
\end{corollary}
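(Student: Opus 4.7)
The plan is to check three items in sequence: $\Eover{\sigma}{\|\sigma\|_2}=\eta$, the $dD\eta^{-1}$-stability of $\gphiNA$, and the regret bound via Theorem~\ref{thm:oftpl_regret}. Let $R:=(1+d^{-1})\eta$ denote the radius of the perturbation ball.

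The norm claim is a one-line radial integral: the density of $\|\sigma\|_2$ on $[0,R]$ is $dr^{d-1}/R^d$, so
\[
\Eover{\sigma}{\|\sigma\|_2}=\int_0^R r\cdot\frac{dr^{d-1}}{R^d}\,dr=\frac{dR}{d+1}=\eta.
\]

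The main step is the stability bound. I would write $\gphi{g}=\grad_g\Phi(g)$ for the potential
\[
\Phi(g):=\frac{1}{\text{vol}(B_R)}\int_{B_R} h(g+\tau)\,d\tau,\qquad h(c):=\min_{\x\in\cX}\iprod{c}{\x},
\]
and push the gradient onto the boundary via the divergence theorem:
\[
\grad_g\Phi(g)=\frac{1}{\text{vol}(B_R)}\int_{\partial B_R} h(g+\tau)\,n(\tau)\,dS(\tau),\qquad n(\tau)=\tau/R,
\]
where $n(\tau)$ is the outward unit normal on $\partial B_R$. To bound the Lipschitz constant of $\grad_g\Phi$, fix any $\x_0\in\cX$ and split $h(c)=\iprod{c}{\x_0}+h_0(c)$, with $h_0(c):=\min_{\x\in\cX}\iprod{c}{\x-\x_0}$. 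The linear piece $\iprod{c}{\x_0}$ contributes nothing to $\grad\Phi(g_1)-\grad\Phi(g_2)$, because $\int_{\partial B_R}n(\tau)\,dS(\tau)=0$ by spherical symmetry; and $|h_0(c_1)-h_0(c_2)|\leq D\|c_1-c_2\|_2$ since $\|\x-\x_0\|_2\leq D$ for every $\x\in\cX$. Combining these observations,
\[
\|\grad\Phi(g_1)-\grad\Phi(g_2)\|_2\leq\frac{D\|g_1-g_2\|_2}{\text{vol}(B_R)}\int_{\partial B_R}\|n(\tau)\|_2\,dS(\tau)=\frac{D\|g_1-g_2\|_2\cdot A(R)}{\text{vol}(B_R)}=\frac{dD\|g_1-g_2\|_2}{R},
\]
using the identity $A(R)/\text{vol}(B_R)=d/R$. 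Since $R\geq\eta$, this yields the claimed $dD\eta^{-1}$ stability constant. This divergence-theorem-plus-recentering computation is the main obstacle; the care needed is to avoid estimating $\|h(c)\|$ by anything depending on $\sup_{\x\in\cX}\|\x\|_2$, which would be too weak and would not give a bound purely in terms of the diameter $D$.

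Finally, with $\stability=dD$, $g_t=0$ so that $\|\grad_t-g_t\|_*=\|\grad_t\|_2\leq G$, and Euclidean norm-compatibility constants $\normcompf=\normcompb=1$, Theorem~\ref{thm:oftpl_regret} yields the stated bound after discarding the nonpositive term $-\sum_{t=1}^T\frac{\eta}{2\stability}\E{\|\x_t^{\infty}-\Tilde{\x}_{t-1}^{\infty}\|^2}$.
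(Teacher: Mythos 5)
Your proof is correct and follows essentially the same route as the paper's: the radial integral giving $\E{\|\sigma\|_2}=\tfrac{dR}{d+1}=\eta$, the recentering of the support function to obtain a $D$-Lipschitz integrand $h_0(c)=\min_{\x\in\cX}\iprod{c}{\x-\x_0}$, and the final substitution of $\stability=dD$, $g_t=0$, $\normcompf=\normcompb=1$ into Theorem~\ref{thm:oftpl_regret} all match. The only difference is that the paper cites Lemma~4.2 of \citet{hazan2020projection} for the smoothness of $\Phi_0$, whereas you re-derive that bound in-line via the divergence theorem and the surface-to-volume identity $A(R)/\mathrm{vol}(B_R)=d/R$.
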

This recovers the regret bounds of FTPL for general convex loss functions, derived by~\cite{hazan2020projection}.

\subsection{Online Nonconvex Learning}
\label{sec:online_noncvx}
\begin{algorithm}[t]
\caption{Nonconvex OFTPL}
\label{alg:oftpl_noncvx}
\begin{algorithmic}[1]

  \State \textbf{Input:}   Perturbation Distribution $\pertdist,$ number of samples $m$, number of iterations $T$
    \State Denote $f_0=0$
  \For{$t = 1 \dots T$}
  \State Let $g_t$ be the guess for $f_t$
  \For{$j=1\dots m$}
  \State Sample $\sigma_{t,j}\sim \pertdist$
  \State  $\x_{t,j} \in \argmin_{\x\in\cX}f_{0:t-1}(\x)+g_t(\x)-\sigma_{t,j}(\x)$
  \EndFor
  \State Let $P_t $ be the empirical distribution over $\{\x_{t,1}, \x_{t,2} \dots \x_{t,m}\}$
  \State Play $\x_t$, a random sample generated from $P_t$
  \State Observe loss function $f_t$
  \EndFor
\end{algorithmic}
\end{algorithm}
We now study OFTPL in the nonconvex setting. In this setting,  we assume the sequence of loss functions belong to some function class $\cF$ containing real-valued measurable functions on $\cX$. Some popular choices for  $\cF$ include the set of Lipschitz functions, the set of bounded functions. 
The OFTPL algorithm in this setting is described in Algorithm~\ref{alg:oftpl_noncvx}. Similar to the convex case, we first sample random perturbation functions $\{\sigma_{t,j}\}_{j=1}^m$ from some distribution $\pertdist$. Some examples of perturbation functions that have been considered in the past include $\sigma_{t,j}(\x) = \iprod{\bar{\sigma}_{t,j}}{\x},$ for some random vector $\bar{\sigma}_{t,j}$ sampled from exponential or uniform distributions~\citep{gonen2018learning, suggala2019online}. Another popular choice for $\sigma_{t,j}$ is the Gumbel process, which results in the continuous exponential weights algorithm~\citep{maddison2014sampling}. Letting, $g_t$ be our guess of loss function $f_t$ at the beginning of round $t$, the learner first computes $\x_{t,j}$ as  
$\argmin_{\x \in \cX}\sum_{i = 1}^{t-1}f_i(\x)+g_t(\x)-\sigma_{t,j}(\x).$ We assume access to an optimization oracle which computes a minimizer of this problem.  We often refer to this oracle as the \emph{perturbed best response} oracle.
Let $P_t$ denote the empirical distribution of $\{\x_{t,j}\}_{j=1}^m$. The learner then plays an $\x_t$ which is  sampled from $P_t$. Algorithm~\ref{alg:oftpl_noncvx} describes this procedure. We note that for the online learning problem, $m=1$ suffices, as the expected loss suffered by the learner in each round is independent of $m$; that is $\E{f_t(\x_t)} = \E{f_t(\x_{t,1})}$. However, the choice of $m$ affects the rate of convergence when Algorithm~\ref{alg:oftpl_noncvx} is used for solving nonconvex nonconcave minimax games.

Before we present the regret bounds, we introduce the \emph{dual space} associated with $\cF$. Let $\|\cdot\|_{\cF}$ be a seminorm associated with $\cF$. For example, when $\cF$ is the set of Lipschitz functions, $\|\cdot\|_{\cF}$ is the Lipschitz seminorm. Various choices of $(\cF,\|\cdot\|_{\cF})$ induce various distance metrics on $\cP$, the set of all probability distributions on $\cX$. We let $\gammaF$ denote the Integral Probability Metric (IPM) induced by $(\cF,\|\cdot\|_{\cF})$, which is defined as
\[
\gammaF(P,Q) = \sup_{f\in\cF,\|f\|_{\cF} \leq 1}\Big|\Eover{\x\sim P}{f(\x)} - \Eover{\x\sim Q}{f(\x)}\Big|.
\]
We often refer to $(\cP, \gammaF)$ as the dual space of $(\cF,\|\cdot\|_{\cF})$. When $\cF$ is the set of Lipschitz functions and when $\|\cdot\|_{\cF}$ is the Lipschitz seminorm, $\gammaF$ is the Wasserstein distance. Table~\ref{tab:ipm} in Appendix~\ref{sec:primal_dual_spaces} presents examples of $\gammaF$ induced by some popular function spaces. Similar to the convex case, the regret bounds in the nonconvex setting depend on the stability of predictions of OFTPL. 
\begin{defn}[Stability]
Suppose the perturbation function $\sigma(\x)$ is sampled from $\pertdist$. For any $f\in\cF$, define random variable $\x_f(\sigma)$ as
 $\argmin_{\x \in \cX} f(\x)-\sigma(\x).$ Let $\gphi{f}$ denote the distribution of $\x_f(\sigma)$.
The predictions of OFTPL are said to be $\beta$-stable w.r.t $\|\cdot\|_{\cF}$ if 
\[
\forall f,g\in\cF \quad \gammaF(\gphi{f}, \gphi{g}) \leq \beta \|f-g\|_{\cF}.
\]
\end{defn}
\begin{theorem}
\label{thm:oftpl_noncvx_regret}
Suppose the sequence of loss functions $\{f_t\}_{t=1}^T$ belong to $(\cF, \|\cdot\|_{\cF})$. Suppose the perturbation distribution $\pertdist$ is such that $\argmin_{\x\in\cX} f(\x) - \sigma(\x)$ has a unique minimizer with probability one, for any $f\in\cF$.  Let $\cP$ be the set of probability distributions over $\cX$. 
Define the diameter of $\cP$ as $D= \sup_{P_1,P_2\in\cP} \gammaF(P_1,P_2).$ Let $\eta=\E{\|\sigma\|_{\cF}}$.  
Suppose the predictions of OFTPL are $\stability \eta^{-1}$-stable w.r.t $\|\cdot\|_{\cF}$, for some constant $\stability $ that depends on $\cX$.  
Then the expected regret of Algorithm~\ref{alg:oftpl_noncvx} satisfies
\begin{align*}
    \sup_{\x\in\cX} \E{\sum_{t=1}^Tf_t(\x_t)-f_t(\x)} &\leq  \eta D + \sum_{t=1}^T \frac{\stability }{2\eta}\E{\|f_t-g_{t}\|_{\cF}^2}  -\sum_{t=1}^T \frac{\eta}{2\stability }\E{\gammaF(P_t^{\infty},\Tilde{P}_{t-1}^{\infty})^2},
\end{align*}
where $P_t^{\infty} = \E{P_t|g_t, f_{1:t-1}, P_{1:t-1}}, $ $\Tilde{P}_{t}^{\infty} = \E{\Tilde{P}_{t-1}|f_{1:t-1}, P_{1:t-1}}$ and $\Tilde{P}_{t-1}$ is the empirical distribution computed in the $t^{th}$ iteration of Algorithm~\ref{alg:oftpl_noncvx}, if guess $g_t = 0$ was used.
\end{theorem}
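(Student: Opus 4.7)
The plan is to lift the nonconvex online learning problem to the space $\cP$ of probability distributions over $\cX$, where the losses act linearly, and then to transport the convex OFTPL argument of Theorem~\ref{thm:oftpl_regret} to this lifted problem via the duality of Proposition~\ref{prop:ftpl_ftrl_connection}.

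\textbf{Step 1 (Reduction to the deterministic lifted algorithm).} Extend each $f_t$ and $g_t$ to linear functionals on $\cP$ by setting $f_t(P)=\Eover{\x\sim P}{f_t(\x)}$, and likewise for $g_t$ and $\sigma_{t,j}$. Let $\cG_t$ denote the $\sigma$-algebra generated by $f_{1:t-1},P_{1:t-1},g_t$; then $f_t$ is $\cG_t$-measurable, $\E{P_t\mid \cG_t}=P_t^{\infty}$, and since $\x_t$ is drawn from $P_t$ independently of $f_t$ given $\cG_t$, the tower property together with linearity yields
\[
\E{f_t(\x_t)} \;=\; \E{f_t(P_t)} \;=\; \E{f_t(P_t^{\infty})}.
\]
Hence the sampling in lines~9--10 of Algorithm~\ref{alg:oftpl_noncvx} contributes no additional error, and it suffices to bound the regret of the deterministic sequence $P_t^{\infty}=\gphi{f_{1:t-1}+g_t}$ playing the linear losses $P\mapsto f_t(P)$ on $\cP$ against any pure comparator $\delta_{\x}$.

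\textbf{Step 2 (Perturbation as regularization on $\cP$).} Apply the distribution-space analogue of Proposition~\ref{prop:ftpl_ftrl_connection}: the deterministic perturbed-best-response map $\gphiNA:(\cF,\|\cdot\|_{\cF})\to(\cP,\gammaF)$ is realized as the inverse subdifferential of a convex regularizer $R:\cP\to\R\cup\{\infty\}$ with $\dom(R)\subseteq\cP$, so that
\[
P_t^{\infty}\;=\;\argmin_{P\in\cP}\;\iprod{f_{1:t-1}+g_t}{P}+R(P),\qquad -(f_{1:t-1}+g_t)\in\partial R(P_t^{\infty}).
\]
The stability hypothesis that $\gphiNA$ is $(\stability/\eta)$-Lipschitz between these seminormed spaces is, by the standard convex-conjugate duality between Lipschitz (sub)gradients and strong convexity of the conjugate, equivalent to $R$ being $(\eta/\stability)$-strongly convex with respect to $\gammaF$.

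\textbf{Step 3 (Optimistic FTRL argument in the lifted space).} Equipped with $R$, the lifted problem is an instance of OFTRL with linear losses, hint $g_t$, and a strongly convex regularizer. A telescoping \emph{be-the-leader} comparison between the potentials $\Phi_t(P)=\iprod{f_{1:t-1}+g_t}{P}+R(P)$ and $\tilde{\Phi}_{t-1}(P)=\iprod{f_{1:t-1}}{P}+R(P)$, whose minimizers are $P_t^{\infty}$ and $\tilde{P}_{t-1}^{\infty}$ respectively, yields
\[
\sum_{t=1}^T \iprod{f_t}{P_t^{\infty}-\delta_{\x}} \;\le\; R(\delta_{\x})-R(P_1^{\infty}) + \sum_{t=1}^T \iprod{f_t-g_t}{P_t^{\infty}-\tilde{P}_{t-1}^{\infty}} - \sum_{t=1}^T\frac{\eta}{\stability}\,\gammaF(P_t^{\infty},\tilde{P}_{t-1}^{\infty})^2.
\]
The defining IPM duality gives $\iprod{f_t-g_t}{P_t^{\infty}-\tilde{P}_{t-1}^{\infty}}\le \|f_t-g_t\|_{\cF}\,\gammaF(P_t^{\infty},\tilde{P}_{t-1}^{\infty})$, after which Young's inequality $ab\le \frac{\stability}{2\eta}a^2+\frac{\eta}{2\stability}b^2$ produces the advertised $\frac{\stability}{2\eta}\|f_t-g_t\|_{\cF}^2$ term while leaving a net negative $\frac{\eta}{2\stability}\gammaF(P_t^{\infty},\tilde{P}_{t-1}^{\infty})^2$ term. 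The initial gap $R(\delta_{\x})-R(P_1^{\infty})$ is at most $\eta D$ by the same calculation as in Theorem~\ref{thm:oftpl_regret}, using $R(P_1^{\infty})=-\Eover{\sigma}{\sup_{P\in\cP}\iprod{\sigma}{P}}$; taking expectations then yields the claimed bound.

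\textbf{Main obstacle.} The nontrivial technical work lies in establishing the distribution-space version of Proposition~\ref{prop:ftpl_ftrl_connection}: $\cP$ is only an affine subset of a vector space of signed measures, $\|\cdot\|_{\cF}$ is typically a seminorm rather than a norm (constant shifts of $f$ annihilate differences $f(P_1)-f(P_2)$), and the perturbations $\sigma_{t,j}$ are random functions rather than random vectors in $\R^d$. One must quotient by constants and work with the resulting dual pairing $(\cF/\!\sim,\gammaF)$ before the Fenchel duality between Lipschitz $\gphiNA$ and strongly convex $R$ can be invoked. Once these foundations are in place, the OFTRL telescoping and the IPM/Young bound on the per-round regret proceed verbatim along the lines of the convex proof of Theorem~\ref{thm:oftpl_regret}.
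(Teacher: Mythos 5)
Your Step~2 is where this plan breaks, and the paper is explicit about why. You invoke a ``distribution-space analogue of Proposition~\ref{prop:ftpl_ftrl_connection}'' and then appeal to ``standard convex-conjugate duality between Lipschitz (sub)gradients and strong convexity of the conjugate'' to turn the stability hypothesis on $\gphiNA$ into $(\eta/\stability)$-strong convexity of $R$. But $R$ is defined on $\cP$, an affine subset of the (infinite-dimensional) space of signed measures, paired with $\cF$ under a seminorm, and the paper specifically flags that no Fenchel-type duality theorem is available here. Your ``Main obstacle'' paragraph correctly identifies the issue but underestimates it: quotienting $\cF$ by constants does not conjure a Fenchel--Moreau / Kakade--Shalev-Shwartz--Tewari theorem in this setting. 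As stated, the crucial implication ``$\gphiNA$ Lipschitz $\Rightarrow$ $R$ strongly convex'' is not established, and neither is the claim that $\gphiNA$ is the ``inverse subdifferential'' of $R$, which your Step~3 then uses freely (first-order optimality $-(f_{1:t-1}+g_t)\in\partial R(P_t^{\infty})$, Bregman identities, etc.).

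The paper circumvents this entirely. It defines $\Phi$ and $R$ directly, proves a Fenchel--Young-type identity only at points in the range of $\gphiNA$ (Lemma~\ref{lem:noncvx_gradient}), establishes strong smoothness of $-\Phi$ by an elementary path-discretization argument that uses only the stability hypothesis and convexity of $-\Phi$ (Lemma~\ref{lem:noncvx_phi_smooth}), and then derives a \emph{one-sided} strong-convexity inequality for $R$ anchored at $\gphi{g}$ by constructing an explicit (near-)maximizer $g''$ (Lemma~\ref{lem:noncvx_reg_strong_cvx}) — no general duality is ever invoked. This restricted form is exactly what the telescoping argument in your Step~3 needs, since the Bregman-like terms are always centered at $P_t^{\infty}=\gphi{f_{1:t-1}+g_t}$ and $\Tilde{P}_{t-1}^{\infty}=\gphi{f_{1:t-1}}$, both of which lie in $\mathrm{range}(\gphiNA)$. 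Your Step~1 (sampling contributes nothing to the expected per-round loss) and Step~3 (telescoping, IPM/Young inequality, and the $\eta D$ bound on the initial regularization gap) match the paper, but Step~2 needs to be replaced by the direct arguments of Lemmas~\ref{lem:noncvx_gradient}--\ref{lem:noncvx_reg_strong_cvx} rather than an appeal to a nonexistent infinite-dimensional Fenchel theorem.
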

We note that, unlike the convex case, there are no known analogs of Fenchel duality for infinite dimensional function spaces. As a result, more careful analysis is needed to obtain the above regret bounds. Our analysis mimics the arguments made in the convex case, albeit without explicitly relying on duality theory.
As in the convex case, the key challenge in instantiating the above result for any particular perturbation distribution is in showing the stability of predictions. In a recent work,~\citep{suggala2019online} consider linear perturbation functions $\sigma(\x) = \iprod{\bar{\sigma}}{\x},$ for $\bar{\sigma}$ sampled from exponential distribution, and show stability of FTPL. We now instantiate the above Theorem for this setting. 
\begin{corollary}
\label{cor:ftpl_noncvx_exp}
Consider the setting of Theorem~\ref{thm:oftpl_noncvx_regret}. Let $\cF$ be the set of Lipschitz functions and $\|\cdot\|_{\cF}$ be the Lipschitz seminorm, which is defined as $\|f\|_{\cF}=\sup_{\x\neq \y \text{ in }\cX} |f(\x)-f(\y)|/\|\x-\y\|_1$.  
Suppose the perturbation function is such that $\sigma(\x) = \iprod{\bar{\sigma}}{\x}$, where $\bar{\sigma} \in \mathbb{R}^d$ is a random vector whose entries are sampled independently from $\text{Exp}(\eta)$.  Then $\Eover{\sigma}{\|\sigma\|_{\cF}} = \eta\log{d}$, and the predictions of OFTPL are $\order{d^2D\eta^{-1}}$-stable w.r.t $\|\cdot\|_{\cF}$.  Moreover, the expected regret of Algorithm~\ref{alg:oftpl_noncvx} is upper bounded by
\mbox{$\order{\eta D\log{d} + \sum_{t=1}^T \frac{d^2D }{\eta}\E{\|f_t-g_{t}\|_{\cF}^2}  -\sum_{t=1}^T \frac{\eta}{d^2D }\E{\gammaF(P_t^{\infty},\Tilde{P}_{t-1}^{\infty})^2}}.$}
\end{corollary}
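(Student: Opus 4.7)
The plan decomposes into three pieces: computing $\E{\|\sigma\|_{\cF}}$, establishing the stability constant, and substituting into Theorem~\ref{thm:oftpl_noncvx_regret}.

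For the first piece, since $\sigma(\x)=\iprod{\bar{\sigma}}{\x}$ is linear, I would observe that the Lipschitz seminorm with respect to $\|\cdot\|_1$ collapses to the dual $\ell_\infty$ norm:
\[
\|\sigma\|_{\cF} = \sup_{\x\neq\y}\frac{|\iprod{\bar{\sigma}}{\x-\y}|}{\|\x-\y\|_1} = \|\bar{\sigma}\|_\infty.
\]
I would then invoke the standard identity $\E{\max_{i\le d} Z_i} = \eta H_d = \Theta(\eta\log d)$ for iid $Z_i\sim \text{Exp}(\eta)$, which follows from representing ordered exponential statistics as a telescoping sum of independent exponential gaps via the memoryless property.

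For the second piece, I would rely on Kantorovich--Rubinstein duality: since $\gammaF$ equals the $1$-Wasserstein distance with $\ell_1$ cost, $\gammaF(\gphi{f},\gphi{g})$ is upper bounded by $\E{\|\x_f(\sigma)-\x_g(\sigma)\|_1}$ under any coupling of the two random minimizers. My plan is to import the coordinate-wise coupling construction of~\citep{suggala2019online}, which uses the memoryless property of the exponential to couple $\bar{\sigma}$ across the $f$ and $g$ problems. The essential ingredient is that, for each coordinate $i$, the dependence of the minimizer on $\bar\sigma_i$ is governed by a family of threshold equations that shift by a controlled amount when $f$ is replaced by $g$; the density bound $1/\eta$ on the exponential translates each shift into a total-variation cost, which aggregates across the $d$ coordinates---both in the probability-of-failure count and in the $\ell_1$-contribution-per-failure---to yield the $\order{d^2 D/\eta}$ stability constant. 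The main technical obstacle will be the precise $d^2$ accounting, which requires careful compounding of per-coordinate coupling failures rather than the looser estimate a pure union bound would produce.

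For the third piece, substitution is direct. Setting $\eta_{\text{thm}}\defeq\E{\|\sigma\|_{\cF}}=\Theta(\eta\log d)$ and, in the notation of Theorem~\ref{thm:oftpl_noncvx_regret}, $\stability/\eta_{\text{thm}}=\order{d^2 D/\eta}$, so that $\stability=\order{d^2 D\log d}$, the theorem yields
\begin{align*}
\sup_{\x\in\cX}\E{\sum_{t=1}^T f_t(\x_t)-f_t(\x)}
&\le \eta_{\text{thm}} D + \sum_{t=1}^T \frac{\stability}{2\eta_{\text{thm}}}\E{\|f_t-g_t\|_{\cF}^2} - \sum_{t=1}^T \frac{\eta_{\text{thm}}}{2\stability}\E{\gammaF(P_t^{\infty},\tilde{P}_{t-1}^{\infty})^2} \\
&= \order{\eta D\log d + \sum_{t=1}^T \frac{d^2 D}{\eta}\E{\|f_t-g_t\|_{\cF}^2} - \sum_{t=1}^T \frac{\eta}{d^2 D}\E{\gammaF(P_t^{\infty},\tilde{P}_{t-1}^{\infty})^2}},
\end{align*}
which matches the stated bound.
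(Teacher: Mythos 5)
Your three-piece decomposition matches the paper's proof almost exactly: the paper likewise computes $\E{\|\bar\sigma\|_\infty}=\eta H_d\approx\eta\log d$, observes that $\gammaF$ is the Wasserstein-$1$ distance w.r.t.\ $\|\cdot\|_1$ so that $\gammaF(\gphi{f},\gphi{g})\le\E{\|\x_f(\bar\sigma)-\x_g(\bar\sigma)\|_1}$, directly cites the $\frac{125\,d^2D}{\eta}\|f-g\|_{\cF}$ coupling bound from~\citep{suggala2019online}, and then plugs $\eta_{\text{thm}}=\eta\log d$, $\stability=O(d^2D\log d)$ into Theorem~\ref{thm:oftpl_noncvx_regret}. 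Your bookkeeping in the final substitution is correct.

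There is, however, one step you omit that the paper treats as a non-trivial obligation: Theorem~\ref{thm:oftpl_noncvx_regret} requires, as a hypothesis, that $\argmin_{\x\in\cX}f(\x)-\sigma(\x)$ has a unique minimizer with probability one for every $f\in\cF$. This is what makes $\gphi{f}$ a well-defined distribution and underlies the decomposition $R(\gphi{g})=-\iprod{\gphi{g}}{g}+\Phi(g)$ in Lemma~\ref{lem:noncvx_gradient}. It is not automatic: $\cF$ contains arbitrary (nonconvex) Lipschitz functions, so you cannot appeal to strict convexity. The paper proves this with a dedicated lemma: it shows the map $\bar\sigma\mapsto\x_f(\bar\sigma)$ is monotone (via a two-sided optimality comparison), and then invokes Zarantonello's theorem that a monotone operator is single-valued outside a Lebesgue-null set; absolute continuity of the exponential distribution then gives uniqueness with probability one. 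Without this, the very objects $\gphi{f}$, $P_t^\infty$, $\Tilde P_{t-1}^\infty$ appearing in the conclusion are not known to be well defined, so the invocation of Theorem~\ref{thm:oftpl_noncvx_regret} is incomplete. You should add this verification before the substitution step.
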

We note that the above regret bounds are tighter than the regret bounds of \citep{suggala2019online}, where the authors show that the regret of OFTPL is bounded by $\order{\eta D\log{d} + \sum_{t=1}^T \frac{d^2D }{\eta}\E{\|f_t-g_{t}\|_{\cF}^2}}$. These tigher bounds help us design faster algorithms for solving minimax games in the nonconvex setting.
 
\vspace{-1mm}
\section{Minimax Games}
\label{sec:games}

We now consider the problem of solving minimax games of the following form
\begin{equation}
\label{eqn:minimax_game}
    \min_{\x \in \cX} \max_{\y \in \cY} f(\x,\y).
\end{equation}
Nash equilibria of such games can be computed by playing two online learning algorithms against each other~\citep{cesa2006prediction, hazan2016introduction}. In this work, we study the algorithm where both the players employ OFTPL to decide their actions in each round. For convex-concave games, both the players use the OFTPL algorithm described in Algorithm~\ref{alg:oftpl_cvx} (see Algorithm~\ref{alg:oftpl_cvx_games} in Appendix~\ref{sec:cvx-games}).   
The following theorem derives the rate of convergence of this algorithm to a Nash equilibirum (NE).   
\begin{theorem}
\label{thm:oftpl_cvx_smooth_games_uniform}
Consider the minimax game in Equation~\eqref{eqn:minimax_game}. Suppose both the domains $\cX,\cY$ are compact subsets of $\mathbb{R}^d$, with diameter \mbox{$D = \max\{\sup_{\x_1,\x_2\in\cX} \|\x_1-\x_2\|_2, \sup_{\y_1,\y_2\in\cY} \|\y_1-\y_2\|_2\}$.} Suppose $f$ is convex in $\x$, concave in $\y$ and is smooth w.r.t $\|\cdot\|_2$
\begin{align*}
    \|\grad_\x f(\x,\y)-\grad_\x f(\x',\y')\|_{2}+ \|\grad_\y f(\x,\y)-\grad_\y f(\x',\y')\|_{2} \leq L\|\x-\x'\|_2 + L\|\y-\y'\|_2.
\end{align*}
Suppose Algorithm~\ref{alg:oftpl_cvx_games} is used to solve the minimax game. Suppose the perturbation distributions used by both the players are the same and equal to the uniform distribution over $\{\x:\|\x\|_2 \leq (1+d^{-1})\eta\}.$  Suppose the guesses used by $\x,\y$ players in the $t^{th}$ iteration are $\grad_{\x}f(\Tilde{\x}_{t-1},\Tilde{\y}_{t-1}), \grad_{\y}f(\Tilde{\x}_{t-1},\Tilde{\y}_{t-1})$, where  $\Tilde{\x}_{t-1},\Tilde{\y}_{t-1}$ denote the predictions of $\x,\y$ players in the $t^{th}$ iteration, if guess $g_t = 0$ was used. If Algorithm~\ref{alg:oftpl_cvx_games} is run with $\eta = 6dD(L+1), m = T$, then the iterates $\{(\x_t,\y_t)\}_{t=1}^T$ satisfy
\begin{align*}
 \sup_{\x\in\cX,\y\in\cY}\E{f\left(\frac{1}{T}\sum_{t=1}^T\x_t,\y\right) - f\left(\x,\frac{1}{T}\sum_{t=1}^T\y_t\right)}=  \order{\frac{dD^2(L+1)}{T}}.
\end{align*}
\end{theorem}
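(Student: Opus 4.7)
The plan is to run the standard no-regret-implies-equilibrium reduction, applying Corollary~\ref{cor:ftpl_cvx_gaussian}'s stability estimate inside Theorem~\ref{thm:oftpl_regret} separately for each player, and then exploiting the \emph{negative} stability term in the regret bound to absorb the \emph{positive} prediction-error term produced by smoothness. Concretely, for the $\x$-player the loss is $f_t^\x(\x)=f(\x,\y_t)$ with $\grad_t^\x=\grad_\x f(\x_t,\y_t)$ and guess $g_t^\x=\grad_\x f(\Tilde{\x}_{t-1},\Tilde{\y}_{t-1})$; the $\y$-player is handled symmetrically with $f_t^\y(\y)=-f(\x_t,\y)$. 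Summing the two regret bounds and using convexity in $\x$, concavity in $\y$, and Jensen's inequality converts $\frac{1}{T}(\mathrm{Reg}_\x+\mathrm{Reg}_\y)$ into an upper bound on the duality gap $\sup_{\x,\y}\E{f(\bar\x_T,\y)-f(\x,\bar\y_T)}$, so it suffices to show that the total regret is $\order{dD^2(L+1)}$.

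For each player Theorem~\ref{thm:oftpl_regret} gives $\eta D + \sum_t \tfrac{dD}{2\eta}\E{\|\grad_t-g_t\|_2^2} - \sum_t \tfrac{\eta}{2dD}\E{\|\x_t^\infty-\Tilde{\x}_{t-1}^\infty\|_2^2} + \tfrac{LTD^2}{m}$ (using $\Psi_1=\Psi_2=1$, $\stability =dD$, $\alpha=1$). The $L$-smoothness assumption yields
\begin{equation*}
\|\grad_t^\x - g_t^\x\|_2^2 \;\leq\; 2L^2\bigl(\|\x_t-\Tilde{\x}_{t-1}\|_2^2+\|\y_t-\Tilde{\y}_{t-1}\|_2^2\bigr),
\end{equation*}
and similarly for the $\y$-player. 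The key technical step is to pass from the realized iterates $\x_t,\Tilde{\x}_{t-1}$ to the conditional means $\x_t^\infty,\Tilde{\x}_{t-1}^\infty$ that appear in the negative term: since $\x_t$ is the average of $m$ conditionally i.i.d.\ samples with diameter $D$, we have $\E{\|\x_t-\x_t^\infty\|_2^2}\leq D^2/m$, and the standard $3$-term triangle inequality gives $\E{\|\x_t-\Tilde{\x}_{t-1}\|_2^2}\leq 3\E{\|\x_t^\infty-\Tilde{\x}_{t-1}^\infty\|_2^2}+6D^2/m$.

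Substituting back and summing both players' regret bounds yields a coefficient of $\tfrac{6dDL^2}{\eta}$ in front of $\E{\|\x_t^\infty-\Tilde{\x}_{t-1}^\infty\|_2^2}+\E{\|\y_t^\infty-\Tilde{\y}_{t-1}^\infty\|_2^2}$, which must be dominated by the negative coefficient $\tfrac{\eta}{2dD}$ coming from stability. The choice $\eta=6dD(L+1)$ satisfies $\eta^2\geq 12 d^2D^2L^2$, so the displacement telescoping/stability cancellation goes through and those sums drop out entirely. The remaining terms are $2\eta D=\order{dD^2(L+1)}$, the residual sampling leakage $\tfrac{dD^3L^2T}{\eta m}=\order{LD^2T/m}$ (absorbed since $L^2/(L+1)\leq L$), and the Hölder-smoothness term $\tfrac{2LTD^2}{m}$. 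Taking $m=T$ makes both sampling contributions $\order{LD^2}$, which is dominated by $\order{dD^2(L+1)}$. Dividing by $T$ produces the claimed $\order{dD^2(L+1)/T}$ duality-gap bound.

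The main obstacle is the bookkeeping around the conditional expectations: the negative term in Theorem~\ref{thm:oftpl_regret} involves $\x_t^\infty,\Tilde{\x}_{t-1}^\infty$, but smoothness only gives a clean quadratic bound in the \emph{realized} iterates $\x_t,\Tilde{\x}_{t-1}$. The $3$-term triangle-inequality split and the $D^2/m$ variance bound are what make the cancellation survive, at the cost of the $m=T$ extra samples per iteration, which is exactly the parallelism budget advertised in the abstract.
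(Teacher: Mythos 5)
Your proposal is correct and follows essentially the same strategy the paper uses: instantiate Theorem~\ref{thm:oftpl_regret} with the uniform-ball stability constant $\stability = dD$ from Corollary~\ref{cor:ftpl_cvx_gaussian}, add the two players' regret bounds, bound the prediction-error term $\|\grad_t-g_t\|_2^2$ via smoothness and sampling-variance control, absorb the result into the negative stability term by choosing $\eta = 6dD(L+1)$, and finish with the Jensen/no-regret reduction. The one cosmetic difference is in the bookkeeping: the paper (through its more general Theorem~\ref{thm:oftpl_cvx_smooth_games}, which handles H\"older exponents $\alpha<1$ and non-Euclidean norms) expands $\|\grad_\x f(\x_t,\y_t)-\grad_\x f(\Tilde{\x}_{t-1},\Tilde{\y}_{t-1})\|_*^2$ with a five-term decomposition directly through the conditional means $(\x_t^\infty,\y_t^\infty,\Tilde{\x}_{t-1}^\infty,\Tilde{\y}_{t-1}^\infty)$ and requires $\eta > \sqrt{20}\,\stability L$, whereas you square the two-term Lipschitz bound on the realized iterates first and pass to conditional means via a three-term triangle inequality, landing on the slightly looser requirement $\eta^2 \geq 12\stability^2 L^2$; both are met by $\eta=6dD(L+1)$, and the variance accounting $\E{\|\x_t-\x_t^\infty\|_2^2}\leq D^2/m$ is identical.
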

Rates of convergence which hold with high probability can be found in Appendix~\ref{sec:hp_bounds}. 
We note that Theorem~\ref{thm:oftpl_cvx_smooth_games_uniform} can be extended to more general noise distributions and settings where gradients of $f$ are Holder smooth w.r.t non-Euclidean norms, and $\cX,\cY$ lie in spaces of different dimensions (see Theorem~\ref{thm:oftpl_cvx_smooth_games} in Appendix). 
The above result shows that for smooth convex-concave games, Algorithm~\ref{alg:oftpl_cvx_games} converges to a NE at $\order{T^{-1}}$ rate using $T^2$ calls to the linear optimization oracle. Moreover, the algorithm runs in $\order{T}$ iterations, with each iteration making  $\order{T}$ parallel calls to the optimization oracle. We believe the dimension dependence in the rates can be removed by appropriately choosing the perturbation distributions based on domains $\cX, \cY$ (see Appendix~\ref{sec:pert_dist_choice}). 

We now consider the more general nonconvex-nonconcave games. In this case, both the players use the nonconvex OFTPL algorithm described in Algorithm~\ref{alg:oftpl_noncvx} to choose their actions. Instead of generating a single sample from the empirical distribution $P_t$ computed in $t^{th}$ iteration of Algorithm~\ref{alg:oftpl_noncvx}, the players now play the entire distribution $P_t$ (see Algorithm~\ref{alg:oftpl_noncvx_games} in Appendix~\ref{sec:ncvx-games}). Letting $\{P_t\}_{t=1}^T, \{Q_t\}_{t=1}^T$, be the sequence of iterates generated by the $\x$ and $\y$ players, the following theorem shows that $\left(\frac{1}{T}\sum_{t=1}^TP_t,\frac{1}{T}\sum_{t=1}^TQ_t\right)$ converges to a NE.
\begin{theorem}
\label{thm:oftpl_noncvx_smooth_games_exp}
Consider the minimax game in Equation~\eqref{eqn:minimax_game}. Suppose the domains $\cX,\cY$ are compact subsets of $\mathbb{R}^d$ with diameter $D = \max\{\sup_{\x_1,\x_2\in\cX} \|\x_1-\x_2\|_1, \sup_{\y_1,\y_2\in\cY} \|\y_1-\y_2\|_1\}$.  Suppose $f$ is Lipschitz w.r.t $\|\cdot\|_1$ and satisfies 
\begin{align*}
\max\left\lbrace\sup_{\x\in\cX, \y\in\cY} \|\grad_{\x}f(\x,\y)\|_{\infty}, \sup_{\x\in\cX,\y\in\cY}\|\grad_{\y}f(\x,\y)\|_{\infty}\right\rbrace\leq G.
\end{align*}
Moreover, suppose $f$ satisfies the following smoothness property
\begin{align*}
    \|\grad_\x f(\x,\y)-\grad_\x f(\x',\y')\|_{\infty} + \|\grad_\y f(\x,\y)-\grad_\y f(\x',\y')\|_{\infty} \leq L\|\x-\x'\|_1 + L\|\y-\y'\|_1.
\end{align*}
Suppose both $\x$ and $\y$ players use Algorithm~\ref{alg:oftpl_noncvx_games} to solve the game with linear perturbation functions $\sigma(\z)=\iprod{\bar{\sigma}}{\z}$, where $\bar{\sigma} \in \mathbb{R}^d$ is such that each of its entries is sampled independently from $\text{Exp}(\eta)$.  Suppose the guesses used by $\x$ and $\y$ players in the $t^{th}$ iteration are $f(\cdot,\Tilde{Q}_{t-1}), f(\Tilde{P}_{t-1},\cdot)$, where $\Tilde{P}_{t-1},\Tilde{Q}_{t-1}$ denote the predictions of $\x,\y$ players in the $t^{th}$ iteration, if guess $g_t = 0$ was used. If Algorithm~\ref{alg:oftpl_noncvx_games} is run with $\eta = 10d^2D(L+1), m = T$, then the iterates $\{(P_t,Q_t)\}_{t=1}^T$ satisfy
\begin{align*}
 \sup_{\x\in\cX,\y\in\cY}\E{f\left(\frac{1}{T}\sum_{t=1}^TP_t,\y\right) - f\left(\x,\frac{1}{T}\sum_{t=1}^TQ_t\right)}& = \order{\frac{d^2D^2(L+1)\log{d}}{T}}\\
 &\quad + \order{\min\left\lbrace D^2L, \frac{d^2G^2\log{T}}{LT}\right\rbrace}.
\end{align*}
\end{theorem}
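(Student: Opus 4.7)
The plan is to apply the nonconvex OFTPL regret bound of Corollary~\ref{cor:ftpl_noncvx_exp} to both players and then combine the two bounds via the standard online-learning-to-games reduction. Since the mixed payoff $f(P,Q)=\E_{\x\sim P,\y\sim Q}[f(\x,\y)]$ is linear in each argument, the usual averaging argument yields
\[
\sup_{\y\in\cY} f(\bar P,\y) - \inf_{\x\in\cX} f(\x,\bar Q) \;\le\; \frac{R_x + R_y}{T},
\]
with $\bar P=\tfrac{1}{T}\sum_t P_t,$ $\bar Q=\tfrac{1}{T}\sum_t Q_t,$ and $R_x,R_y$ the regrets of the $\x$- and $\y$-players, where the $\x$-player sees losses $f_t(\x)=f(\x,Q_t)$ with guess $g_t^{(x)}(\x)=f(\x,\Tilde Q_{t-1})$, and symmetrically for the $\y$-player.

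The key step is controlling $\E[\|f_t - g_t\|_\cF^2]$ in each regret bound. Smoothness of $f$ implies that each coordinate of $\grad_\x f(\x,\cdot)$ is $L$-Lipschitz in $\y$ w.r.t.\ $\|\cdot\|_1$, so by the Kantorovich--Rubinstein definition of the Wasserstein IPM,
\[
\|f(\cdot,Q) - f(\cdot,Q')\|_\cF \;\le\; L\,\gammaF(Q,Q')
\]
for any $Q,Q'$ on $\cY$. Splitting $Q_t,\Tilde Q_{t-1}$ through their conditional means $Q_t^\infty,\Tilde Q_{t-1}^\infty$ by triangle inequality and using $(a+b+c)^2\le 3(a^2+b^2+c^2)$ yields
\[
\|f_t - g_t^{(x)}\|_\cF^2 \;\le\; 3L^2\,\gammaF(Q_t^\infty,\Tilde Q_{t-1}^\infty)^2 + 3 E_{t,1}^2 + 3 E_{t,2}^2,
\]
where $E_{t,1}=\|f(\cdot,Q_t)-f(\cdot,Q_t^\infty)\|_\cF$ and $E_{t,2}=\|f(\cdot,\Tilde Q_{t-1})-f(\cdot,\Tilde Q_{t-1}^\infty)\|_\cF$ are finite-sample fluctuations. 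Plugging this into Corollary~\ref{cor:ftpl_noncvx_exp} and the analogous $\y$-player bound, the ``main'' positive term $\tfrac{3d^2 DL^2}{\eta}\sum_t\gammaF(Q_t^\infty,\Tilde Q_{t-1}^\infty)^2$ from $R_x$ is absorbed by the matching negative stability term $\tfrac{\eta}{d^2 D}\sum_t\gammaF(Q_t^\infty,\Tilde Q_{t-1}^\infty)^2$ from $R_y$, and vice versa. The choice $\eta = 10 d^2 D (L+1)$ makes the negative terms dominate with room to spare, so after summing $R_x+R_y$ the only surviving deterministic contribution is the OFTPL offset $O(\eta D\log d) = O(d^2 D^2 (L+1) \log d)$, giving the first term of the theorem after dividing by $T$.

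It remains to bound the residual sampling terms $\sum_t \E[E_{t,j}^2]$, for which we give two upper bounds and take the minimum. The trivial branch uses $\|f(\cdot,Q)-f(\cdot,Q')\|_\cF \le L\,\gammaF(Q,Q')\le LD$; multiplied by $d^2 D/\eta \sim 1/(L+1)$, summed over $T$ rounds, and divided by $T$, this yields the $O(D^2 L)$ branch of the minimum. The concentration branch uses that for smooth $h$, $\|h\|_\cF = \sup_{\x\in\cX}\|\grad h(\x)\|_\infty$, so
\[
E_{t,1} \;=\; \sup_{\x\in\cX}\Big\|\tfrac{1}{m}\sum_{j=1}^m \grad_\x f(\x,\y_{t,j}) - \E[\grad_\x f(\x,\y)]\Big\|_\infty,
\]
where $\y_{t,j}$ are iid from $Q_t^\infty$ conditional on the filtration up to round $t$. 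Each coordinate of the integrand is $G$-bounded and $L$-Lipschitz in $\x$; an $\epsilon$-net on $\cX$ of cardinality $(3D/\epsilon)^d$ in $\|\cdot\|_1$, combined with Hoeffding and union bounds over the net and the $d$ gradient coordinates and then optimized in $\epsilon$, produces $\E[E_{t,1}^2] = O(d^2 G^2 \log(mT)/m)$ once absorbed into the $(L+1)$ scaling. Summing over $t$ with $m=T$ and scaling by $d^2 D/\eta \sim 1/(L+1)$ yields the $O(d^2 G^2 \log T/(LT))$ branch of the minimum.

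The main technical obstacle will be the uniform concentration step in the sharper branch, since it must simultaneously control the $\ell_\infty$ norm on gradients, the $\ell_1$ metric on $\cX$, and the $d$-dimensional covering tightly enough to preserve the $\log T$ dependence. A secondary subtlety is verifying that the filtrations underlying $P_t^\infty,\Tilde P_{t-1}^\infty,Q_t^\infty,\Tilde Q_{t-1}^\infty$ align correctly so that the stability cancellation across the two regret bounds is valid; this mirrors but is not identical to the analogous cancellation in the convex-game analysis behind Theorem~\ref{thm:oftpl_cvx_smooth_games_uniform}.
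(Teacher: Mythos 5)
Your proposal tracks the paper's proof essentially step for step: the general theorem behind this result (Theorem~\ref{thm:oftpl_noncvx_smooth_games}) uses the same regret-to-games reduction, invokes Theorem~\ref{thm:oftpl_noncvx_regret}/Corollary~\ref{cor:ftpl_noncvx_exp} for both players, bounds $\|f(\cdot,Q)-f(\cdot,Q')\|_\cF \leq L\,\gammaFp(Q,Q')$ by smoothness, performs the same triangle-inequality decomposition through $Q_t^\infty,\Tilde Q_{t-1}^\infty$ with a factor-3 split, cancels the leading $L^2\gamma^2$ terms against the opposite player's negative stability terms via the choice $\eta = 10 d^2 D(L+1)$, and bounds the sampling residuals via the same min of a trivial $LD$ bound and an $\epsilon$-net covering argument. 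The only cosmetic difference is that the paper uses a sub-Gaussian vector tail bound on $\|\cdot\|_2$ (Proposition~C.1 of Hsu--Kakade--Zhang style) and then converts norms, whereas you propose coordinate-wise Hoeffding plus a union bound over the $d$ coordinates and the net; both yield the same $O(d^2 G^2 \log T / m)$ scaling, and the extra $\log T$ you carry is redundant but harmless since $m=T$.
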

More general versions of the Theorem, which consider other function classes and general perturbation distributions, can be found in Appendix~\ref{sec:ncvx-games}. The above result shows that Algorithm~\ref{alg:oftpl_noncvx_games} converges to a NE at $\torder{T^{-1}}$ rate using $T^2$ calls to the perturbed best response oracle. This matches the rates of convergence of FTPL~\citep{suggala2019online}. However, the key advantage of our algorithm is that it is highly parallelizable and runs in $\order{T}$ iterations, in contrast to FTPL, which runs in $\order{T^2}$ iterations.

\vspace{-1mm}
\section{Conclusion}
\label{sec:conclusion}
\vspace{-1mm}
We studied an optimistic variant of FTPL which achieves better regret guarantees when the sequence of loss functions is predictable. As one specific application of our algorithm, we considered the problem of solving minimax games. For solving convex-concave games, our algorithm requires access to a linear optimization oracle and for nonconvex-nonconcave games our algorithm requires access to a more powerful perturbed best response oracle. In both these settings, our algorithm achieves $\order{T^{-1/2}}$ convergence rates using $T$ calls to the oracles. Moreover, our algorithm runs in $\order{T^{1/2}}$ iterations, with  each iteration making $\order{T^{1/2}}$ parallel calls to the optimization oracle. We believe our improved algorithms for solving minimax games are useful in a number of modern machine learning applications such as training of GANs, adversarial training, which involve solving nonconvex-nonconcave minimax games and often deal with huge datasets.

%
 
\bibliography{local}
\bibliographystyle{unsrtnat}
\clearpage
\appendix
\section{Dual view of Perturbations as Regularization}
\subsection{Proof of Theorem~\ref{prop:ftpl_ftrl_connection}}
We first define a convex function $\Psi:\mathbb{R}^d \to \mathbb{R}$ as $$\Psi(f)=\Eover{\sigma}{\sup_{\x \in \cX}\iprod{f+\sigma}{\x}} = \Eover{\sigma}{\sup_{\x \in \cX}\iprod{f+\sigma}{\x}},$$
where perturbation $\sigma$ follows probability distribution $\pertdist$ which is absolutely continuous w.r.t the Lebesgue measure. 
For our choice of $\pertdist$, we now show that $\Psi$ is differentiable. 
Consider the function $\psi(g) = \sup_{\x\in\cX}\iprod{g}{\x}$. Since $\psi(g)$ is a proper convex function, we know that it is differentiable almost everywhere, except on a set of Lebesgue measure $0$~\citep[see Theorem 25.5 of][]{rockafellar1970convex}. 
Moreover, it is easy to verify that $\argmax_{\x\in\cX}\iprod{g}{\x} \in \partial \psi(g).$ 
These two observations, together with the fact that $\pertdist$ is absolutely continuous, show that the $\sup$ expression inside the expectation of $\Psi$ has a unique maximizer with probability one.

Since the sup expression inside the expectation has a unique maximizer with probability $1$, we can swap the expectation and gradient to obtain~\citep[see Proposition 2.2 of][]{bertsekas1973stochastic} 
\begin{equation}
\label{eqn:phi_gradient}
    \grad \Psi(f) = \Eover{\sigma}{\argmax_{\x \in \cX}\iprod{f+\sigma}{\x}}.
\end{equation}
Note that $\grad \Psi$ is related to the prediction of deterministic version of FTPL. Specifically, $\grad\Psi(-\grad_{1:t-1})$ is the prediction of deterministic FTPL in the $t^{th}$ iteration. We now show that  $\grad \Psi(f) = \argmin_{\x\in\cX} \iprod{-f}{\x} + R(\x)$, for some convex function $R$. 

Since all differentiable functions are closed, $\Psi(f)$ is a proper, closed and differentiable convex function over $\mathbb{R}^d$. Let $R(\x)$ denote the Fenchel conjugate of $\Psi(f)$
\[
R(\x) = \sup_{f\in \dom{\Phi}} \iprod{\x}{f} - \Psi(f),
\]
where $\dom{\Psi}$ denotes the domain of $\Psi$.
Following Theorem~\ref{thm:fenchel_prop1} (see Appendix~\ref{sec:fenchel_conjugate}), $\Psi(f)$ is the Fenchel conjugate of $R(\x)$
\begin{align*}
    \Psi(f) = \sup_{\x \in \dom{R}}\iprod{f}{\x} - R(\x).
\end{align*}
Furthermore, from Theorem~\ref{thm:fenchel_prop3} we have
\[
\grad\Psi(f) = \argmax_{\x\in \dom{R}}\iprod{f}{\x} - R(\x).
\]
We now show that the domain of $R$ is a subset of $\cX$. This, together with the previous two equations, would then immediately imply
\begin{align}
    \Psi(f) = \sup_{\x\in \cX}\iprod{f}{\x} - R(\x),\\
    \label{eqn:ftpl_ftrl_connection_proof}
    \grad\Psi(f) = \argmax_{\x\in \cX}\iprod{f}{\x} - R(\x).
\end{align}
From Theorem~\ref{thm:fenchel_prop2}, we know that the domain of $R$ satisfies
\[
\text{ri}(\dom{R}) \subseteq \text{range} \grad \Psi \subseteq \dom{R},
\]
where $\text{ri}(A)$ denotes the relative interior of a set $A$.
Moreover, from the definition of $\grad \Psi(f)$ in Equation~\eqref{eqn:phi_gradient}, we have $\text{range} \grad \Psi \subseteq \cX$. Combining these two properties, we can show that one of the following statements is true
\begin{align*}
    \text{ri}(\dom{R}) \subseteq \text{range} \grad \Psi \subseteq \cX \subseteq \dom{R},\\
    \text{ri}(\dom{R}) \subseteq \text{range} \grad \Psi \subseteq \dom{R} \subseteq \cX.
\end{align*}
Suppose the first statement is true. Since $\cX$ is a compact set, it is easy to see that \mbox{$\cX = \dom{R}$.} If the second statement is true, then $\dom{R} \subseteq \cX$. Together, these two statements imply $\dom{R} \subseteq \cX$.  
\paragraph{Connecting back to FTPL.} We now connect the above results to FTPL. From Equation~\eqref{eqn:phi_gradient}, we know that the prediction at iteration $t$ of deterministic FTPL is equal to $\grad \Psi(-\grad_{1:t-1}).$ From Equation~\eqref{eqn:ftpl_ftrl_connection_proof}, $\grad \Psi(-\grad_{1:t-1})$ is defined as
\[
\x_t = \grad\Psi(-\grad_{1:t-1}) = \argmax_{\x\in \cX}\iprod{-\grad_{1:t-1}}{\x} - R(\x).
\]
This shows that 
\[
\x_t = \argmin_{\x\in \cX}\iprod{\grad_{1:t-1}}{\x} + R(\x).
\]
So the prediction of FTPL can also be obtained using FTRL for some convex regularizer $R(\x)$. Finally,  to show that 
$-\grad_{1:t-1} \in \partial R(\x_t),  \x_t = \partial R^{-1}\left(-\grad_{1:t-1}\right),$
we rely on Theorem~\ref{thm:fenchel_prop4}. Since $\x_t = \grad\Psi(-\grad_{1:t-1})$, from Theorem~\ref{thm:fenchel_prop4}, we have
\[
-\grad_{1:t-1} \in \partial R(\x_t),\quad \x_t = \grad\Psi(-\grad_{1:t-1})= \partial R^{-1}\left(-\grad_{1:t-1}\right),
\]
where $\partial R^{-1}$ is the inverse of $\partial R$ in the sense of multivalued mappings. Note that, even though $\partial R$ can be a multivalued mapping, its inverse $\partial R^{-1} = \grad \Psi$ is a singlevalued mapping (this follows form differentiability of $\Psi$). This finishes the proof of the Theorem.
\section{Online Convex Learning}
\subsection{Proof of Theorem~\ref{thm:oftpl_regret}}
Before presenting the proof of the Theorem, we introduce some notation.
\subsubsection{Notation}
 We define functions $\Phi:\mathbb{R}^d\to \mathbb{R}$, $R:\mathbb{R}^d\to\mathbb{R}$ as follows
\begin{align*}
    &\Phi(f) = \Eover{\sigma}{\inf_{\x\in\cX}\iprod{f-\sigma}{\x}},\quad R(\x) = \sup_{f\in \mathbb{R}^d} \iprod{f}{\x} + \Phi(-f).
\end{align*}
Note that $\Phi$ is related to the function $\Psi$ defined in the proof of Proposition~\ref{prop:ftpl_ftrl_connection}. To be precise, $\Psi(f) = -\Phi(-f)$. Moreover, $R(\x)$ is the Fenchel conjugate of $\Psi$. 
 For our choice of perturbation distribution, $\Psi$ is differentiable (see proof of Proposition~\ref{prop:ftpl_ftrl_connection}). This implies $\Phi$ is also differentiable with gradient $\gphiNA$ defined as
\begin{align*}
    \gphi{f} = \Eover{\sigma}{\argmin_{\x\in\cX}\iprod{f-\sigma}{\x}}.
\end{align*}
Note that $\grad\Phi$ is the prediction of deterministic version of FTPL. In Proposition~\ref{prop:ftpl_ftrl_connection} we showed that 
\[
\gphi{f} =  \argmin_{\x\in \cX}\iprod{f}{\x} + R(\x).
\]
\subsubsection{Main Argument}
Since $\x_t^{\infty}$ is the prediction of deterministic version of FTPL, following FTPL-FTRL duality proved in Proposition~\ref{prop:ftpl_ftrl_connection}, $\x_t^{\infty}$ can equivalently be written as
\[
\x_t^{\infty} = \gphi{\grad_{1:t-1} + g_t} = \argmin_{\x\in\cX} \iprod{\grad_{1:t-1} + g_t}{\x} + R(\x).
\]
Similarly, $\Tilde{\x}_t^{\infty}$ can be written as
\[
\Tilde{\x}_t^{\infty}=\gphi{\grad_{1:t}} = \argmin_{\x\in\cX} \iprod{\grad_{1:t}}{\x} + R(\x).
\]
We use the notation $\grad_{1:0}=0$. So $\Tilde{\x}_0^{\infty},\x_1^{\infty}$ are equal to $\argmin_{\x\in\cX}  R(\x).$ From the first order optimality conditions, we have $$ -\grad_{1:t-1} - g_t \in \partial R\left(\x_t^{\infty}\right),\quad -\grad_{1:t} \in \partial R\left(\Tilde{\x}_t^{\infty}\right).$$
Define functions $B(\cdot,\x_t^{\infty}), B(\cdot,\Tilde{\x}_t^{\infty})$ for any $t\in[T]$ as
\begin{align*}
B(\x,\x_t^{\infty}) &= R(\x) - R(\x_t^{\infty}) + \iprod{\grad_{1:t-1} + g_t}{\x-\x_t^{\infty}},\\
B(\x,\Tilde{\x}_t^{\infty}) &= R(\x) - R(\Tilde{\x}_t^{\infty}) + \iprod{\grad_{1:t}}{\x-\Tilde{\x}_t^{\infty}}.
\end{align*}
From the stability of predictions of OFTPL we know that: 
$
\|\gphi{g_1} - \gphi{g_2}\| \leq C\eta^{-1}\|g_1-g_2\|_{*}.
$
Following our connection between $\Psi,\Phi$, this implies
$
\|\grad\Psi(g_1) - \grad\Psi(g_2)\| \leq C\eta^{-1}\|g_1-g_2\|_{*}.
$
This implies the following smoothness condition on $\Psi$~\citep[see Lemma 15 of][]{shalev2007thesis}
\[
\Psi(g_2) \leq \Psi(g_1) + \iprod{\grad\Psi(g_1)}{g_2-g_1} + \frac{C\eta^{-1}}{2}\|g_1-g_2\|_{*}^2.
\]
Since $\Psi$ is $C\eta^{-1}$-smooth w.r.t $\|\cdot\|_{*}$, following duality between strong convexity and strong smoothness properties (see Theorem~\ref{thm:fenchel_strong_convex_weak}), we can infer that $R$ is $\stability ^{-1}\eta$- strongly convex w.r.t $\|\cdot\|$ norm and satisfies $$B(\x,\x_t^{\infty}) \geq \frac{\eta}{2\stability }\|\x-\x_t^{\infty}\|^2, \quad B(\x,\Tilde{\x}_t^{\infty}) \geq \frac{\eta}{2\stability }\|\x-\Tilde{\x}_t^{\infty}\|^2.$$
We now go ahead and bound the regret of the learner. For any $\x\in \cX$, we have
\begin{align*}
f_t(\x_t) - f_t(\x) \stackrel{(a)}{\leq} \iprod{\x_t-\x}{\grad_t} & = \iprod{\x_t-\x_t^{\infty}}{\grad_t} + \iprod{\x_t^{\infty} - \x}{\grad_t}\\
&= \iprod{\x_t-\x_t^{\infty}}{\grad_t} + \iprod{\x_t^{\infty}-\Tilde{\x}_t^{\infty}}{\grad_t-g_{t}} + \iprod{\x_t^{\infty}-\Tilde{\x}_t^{\infty}}{g_{t}} \\
&\quad +\iprod{\Tilde{\x}_t^{\infty}-\x}{\grad_t}\\
& \leq \iprod{\x_t-\x_t^{\infty}}{\grad_t} + \|\x_t^{\infty}-\Tilde{\x}_t^{\infty}\|\|\grad_t-g_{t}\|_{*} + \iprod{\x_t^{\infty}-\Tilde{\x}_t^{\infty}}{g_{t}} \\
&\quad +\iprod{\Tilde{\x}_t^{\infty}-\x}{\grad_t},
\end{align*}
where $(a)$ follows from convexity of $f$.
Next, a simple calculation shows that
\begin{align*}
    \iprod{\x_t^{\infty}-\Tilde{\x}_t^{\infty}}{g_{t}} &= B(\Tilde{\x}_t^{\infty},\Tilde{\x}_{t-1}^{\infty}) - B(\Tilde{\x}_t^{\infty},\x_t^{\infty}) - B(\x_t^{\infty},\Tilde{\x}_{t-1}^{\infty})\\
    \iprod{\Tilde{\x}_t^{\infty}-\x}{\grad_t} &= B(\x,\Tilde{\x}_{t-1}^{\infty}) - B(\x,\Tilde{\x}_t^{\infty})-B(\Tilde{\x}_t^{\infty},\Tilde{\x}_{t-1}^{\infty}).
\end{align*}
Substituting this in the previous inequality gives us
\begin{align*}
f_t(\x_t) - f_t(\x) & \leq \iprod{\x_t-\x_t^{\infty}}{\grad_t} + \|\x_t^{\infty}-\Tilde{\x}_t^{\infty}\|\|\grad_t-g_{t}\|_{*} \\
&\quad + B(\Tilde{\x}_t^{\infty},\Tilde{\x}_{t-1}^{\infty}) - B(\Tilde{\x}_t^{\infty},\x_t^{\infty}) - B(\x_t^{\infty},\Tilde{\x}_{t-1}^{\infty})\vspace{0.1in}\\
&\quad +B(\x,\Tilde{\x}_{t-1}^{\infty}) - B(\x,\Tilde{\x}_t^{\infty})-B(\Tilde{\x}_t^{\infty},\Tilde{\x}_{t-1}^{\infty})\vspace{0.1in}\\
& = \iprod{\x_t-\x_t^{\infty}}{\grad_t} + \|\x_t^{\infty}-\Tilde{\x}_t^{\infty}\|\|\grad_t-g_{t}\|_{*} \\
&\quad + B(\x,\Tilde{\x}_{t-1}^{\infty}) - B(\x,\Tilde{\x}_t^{\infty}) - B(\Tilde{\x}_t^{\infty},\x_t^{\infty}) - B(\x_t^{\infty},\Tilde{\x}_{t-1}^{\infty})\vspace{0.1in}\\
& \stackrel{(a)}{\leq}\iprod{\x_t-\x_t^{\infty}}{\grad_t} + \|\x_t^{\infty}-\Tilde{\x}_t^{\infty}\|\|\grad_t-g_{t}\|_{*} \\
&\quad +  B(\x,\Tilde{\x}_{t-1}^{\infty}) - B(\x,\Tilde{\x}_t^{\infty}) - \frac{\eta\|\Tilde{\x}_t^{\infty}-\x_t^{\infty}\|^2}{2\stability }-\frac{ \eta\|\x_t^{\infty}-\Tilde{\x}_{t-1}^{\infty}\|^2}{2\stability },
\end{align*}
where $(a)$ follows from strongly convexity of $R$. 
Summing over $t=1,\dots T$, gives us
\begin{align*}
\sum_{t=1}^Tf_t(\x_t) - f_t(\x) &\leq \sum_{t=1}^T\iprod{\x_t-\x_t^{\infty}}{\grad_t} +  \underbrace{B(\x,\Tilde{\x}_{0}^{\infty}) - B(\x,\Tilde{\x}_{T}^{\infty})}_{S_1} \\
&\quad + \sum_{t=1}^T \|\x_t^{\infty}-\Tilde{\x}_t^{\infty}\|\|\grad_t-g_{t}\|_{*}\\
&\quad -\frac{\eta}{2\stability }\sum_{t=1}^T\left(\|\Tilde{\x}_t^{\infty}-\x_t^{\infty}\|^2 + \|\x_t^{\infty}-\Tilde{\x}_{t-1}^{\infty}\|^2\right).
\end{align*}
\paragraph{Bounding $S_1$.} We now bound $B(\x,\Tilde{\x}_{0}^{\infty}) - B(\x,\Tilde{\x}_{T}^{\infty})$. From the definition of $B$, we have
\begin{align*}
    B(\x,\Tilde{\x}_{0}^{\infty}) - B(\x,\Tilde{\x}_{T}^{\infty}) &= R(\Tilde{\x}_{T}^{\infty}) - \iprod{\grad_{1:T}}{\x-\Tilde{\x}_T^{\infty}}  -R(\Tilde{\x}_{0}^{\infty}) + \iprod{\grad_{1:0}}{\x-\Tilde{\x}_T^{\infty}}.
\end{align*}
Note that $\grad_{1:0} = 0.$ This gives us
\begin{align*}
    B(\x,\Tilde{\x}_{0}^{\infty}) - B(\x,\Tilde{\x}_{T}^{\infty}) &= R(\Tilde{\x}_{T}^{\infty}) - \iprod{\grad_{1:T}}{\x-\Tilde{\x}_T^{\infty}}  -R(\Tilde{\x}_{0}^{\infty}).
\end{align*}
We now use duality to convert the RHS of the above equation, which is currently in terms of $R$, into a quantity which depends on $\Phi$. From  Proposition~\ref{prop:ftpl_ftrl_connection} we have
\[
\Phi(g) = -\Psi(-g)=\inf_{\x\in\cX}\iprod{g}{\x} + R(\x).
\]
Since $\Tilde{\x}_{T}^{\infty}$ is the minimizer of $\iprod{\grad_{1:T}}{\x} + R(\x)$, we have $\Phi(\grad_{1:T}) =   \iprod{\grad_{1:T}}{\Tilde{\x}_T^{\infty}} + R(\Tilde{\x}_{T}^{\infty})$. Similarly, $\Phi(0) = R(\Tilde{\x}_0^{\infty}).$ Substituting these in the previous equation gives us
\begin{align*}
 B(\x,\Tilde{\x}_{0}^{\infty}) - B(\x,\Tilde{\x}_{T}^{\infty}) &= \Phi(\grad_{1:T}) - \iprod{\grad_{1:T}}{\x}  -\Phi(0)\\
 &=\Eover{\sigma}{\inf_{\x' \in \cX}\left\langle \grad_{1:T}-\sigma, \x'\right\rangle} - \iprod{\grad_{1:T}}{\x}  - \Eover{\sigma}{\inf_{\x' \in \cX}\left\langle -\sigma, \x'\right\rangle}\\
 &\leq \Eover{\sigma}{\left\langle \grad_{1:T}-\sigma, \x\right\rangle}- \iprod{\grad_{1:T}}{\x}  - \Eover{\sigma}{\inf_{\x' \in \cX}\left\langle -\sigma, \x'\right\rangle}\\
 &= \Eover{\sigma}{\inf_{\x' \in \cX}\left\langle \sigma, \x'\right\rangle} - \Eover{\sigma}{\left\langle \sigma, \x\right\rangle}\\
 &\leq D\Eover{\sigma}{\|\sigma\|_{*}} = \eta D
\end{align*}
\paragraph{Bounding Regret.} Substituting this in our regret bound and taking expectation on both sides gives us
\begin{align*}
    \E{\sum_{t=1}^Tf_t(\x_t) - f_t(\x)} &\leq \sum_{t=1}^T\E{\iprod{\x_t-\x_t^{\infty}}{\grad_t}} +  \eta D + \sum_{t=1}^T \E{\|\x_t^{\infty}-\Tilde{\x}_t^{\infty}\|\|\grad_t-g_{t}\|_{*}} \\ 
    &\quad -\frac{\eta}{2\stability }\sum_{t=1}^T\left(\E{\|\Tilde{\x}_t^{\infty}-\x_t^{\infty}\|^2} + \E{\|\x_t^{\infty}-\Tilde{\x}_{t-1}^{\infty}\|^2}\right)\\
     &\leq \sum_{t=1}^T\E{\iprod{\x_t-\x_t^{\infty}}{\grad_t}} +  \eta D + \sum_{t=1}^T \frac{\stability }{2\eta}\E{\|\grad_t-g_{t}\|_{*}^2} \\ 
    &\quad -\frac{\eta}{2\stability }\sum_{t=1}^T \E{\|\x_t^{\infty}-\Tilde{\x}_{t-1}^{\infty}\|^2}\\
\end{align*}
To finish the proof, we make use of the Holder's smoothness assumption on $f_t$ to bound the first term in the RHS above. From Holder's smoothness assumption, we have
\[
\iprod{\x_t-\x_t^{\infty}}{\grad_t - \grad f_t(\x_t^{\infty})} \leq L\|\x_t-\x_t^{\infty}\|^{1+\alpha}.
\]
Using this, we get
\begin{align*}
\E{\iprod{\x_t-\x_t^{\infty}}{\grad_t}|g_t, \x_{1:t-1},f_{1:t}} &\leq \E{\iprod{\x_t-\x_t^{\infty}}{\grad f_t(\x_t^{\infty})} + L\|\x_t-\x_t^{\infty}\|^{1+\alpha}|g_t,\x_{1:t-1},f_{1:t}}\\
& \stackrel{(a)}{=} L\E{\|\x_t-\x_t^{\infty}\|^{1+\alpha}|g_t,\x_{1:t-1},f_{1:t}}\\
& \stackrel{(b)}{\leq} \normcompf ^{1+\alpha}L\E{\|\x_t-\x_t^{\infty}\|_2^{1+\alpha}|g_t,\x_{1:t-1},f_{1:t}}\\
& \stackrel{(c)}{\leq} \normcompf ^{1+\alpha}L\E{\|\x_t-\x_t^{\infty}\|_2^2|g_t,\x_{1:t-1},f_{1:t}}^{(1+\alpha)/2}\\
& \stackrel{(d)}{\leq} L\left(\frac{\normcompf \normcompb D}{\sqrt{m}}\right)^{1+\alpha},
\end{align*}
where $(a)$ follows from the fact that $\E{\iprod{\x_t-\x_t^{\infty}}{\grad f_t(\x_t^{\infty})}|g_t,\x_{1:t-1},f_{1:t}} = 0$, $(b)$ follows from the definition of norm compatibility constant $\normcompf $, $(c)$ follows from Holders inequality and $(d)$ uses the fact that conditioned on $\{g_t, \x_{1:t-1},f_{1:t}\}$, $\x_t-\x_t^{\infty}$ is the average of $m$  i.i.d bounded mean $0$ random variables, the variance of which scales as $O(D^2/m)$. Substituting this in the above regret bound gives us the required result.
\subsection{Proof of Corollary~\ref{cor:ftpl_cvx_gaussian}}
We first bound $\Eover{\sigma}{\|\sigma\|_2}$. Relying on spherical symmetry of the perturbation distribution and the fact that the density of $\pertdist$ on the spherical shell of radius $r$ is proportional to $r^{d-1}$, we get
\begin{align*}
    \Eover{\sigma}{\|\sigma\|_2} = \frac{\int_{r=0}^{(1+d^{-1})\eta} r\times r^{d-1} dr}{\int_{r=0}^{(1+d^{-1})\eta} r^{d-1} dr} = \eta.
\end{align*}
We now bound the stability of predictions of OFTPL. Our technique for bounding the stability uses similar arguments as ~\citet{hazan2020projection} (see Lemma 4.2 of \citep{hazan2020projection}).  Recall, to bound stability, we need to show that $\Phi(g) = \Eover{\sigma}{\inf_{\x \in \cX}\left\langle g-\sigma, \x\right\rangle}$ is smooth. 
Let $\phi_0(g) = \inf_{\x\in\cX}\iprod{g}{\x-\x_{00}}$, where $\x_{00}$ is an arbitrary point in $\cX$. We can rewrite $\Phi(g)$ as  $$\Phi(g) = \Eover{\sigma}{\phi_0(g-\sigma)} + \iprod{g}{\x_{00}}.$$ Since the second term in the RHS above is linear in $g$, any upper bound on the smoothness of $\Eover{\sigma}{\phi_0(g-\sigma)}$ is also a bound on the smoothness of $\Phi(g)$. So we focus on bounding the smoothness of $\Eover{\sigma}{\phi_0(g-\sigma)}$.

First note that $\phi_0(g)$ is $D$ Lipschitz and satisfies the following for any $g_1,g_2\in\mathbb{R}^d$
\begin{align*}
\phi_0(g_1) - \phi_0(g_2) & = \inf_{\x\in\cX}\iprod{-g_2}{\x-\x_{00}}-\inf_{\x\in\cX}\iprod{-g_1}{\x-\x_{00}}\\
& \leq \sup_{\x\in\cX}\iprod{g_1-g_2}{\x-\x_{00}}\\
&\leq D\|g_1-g_2\|_2.
\end{align*}
Letting $\Phi_0(g) = \Eover{\sigma}{\phi_0(g-\sigma)}$, 
Lemma 4.2 of \citet{hazan2020projection} shows that $\Phi_0(g)$ is smooth and satisfies
\[
\|\grad\Phi_0(g_1) - \grad\Phi_0(g_2)\|_2 \leq dD\eta^{-1}\|g_1-g_2\|_2.
\]
This shows that the predictions of OFTPL are $dD\eta^{-1}$ stable. 
The rest of the proof involves substituting $C=dD$ in the regret bound of Theorem~\ref{thm:oftpl_regret} and setting $g_t = 0$ and using the fact that $\|\grad_t\|_2\leq G$.
\section{Online Nonconvex Learning}
\subsection{Proof of Theorem~\ref{thm:oftpl_noncvx_regret}}
Before we present the proof of the Theorem, we introduce some notation and present some useful intermediate results. We note that unlike the convex case, there are no know Fenchel duality theorems for infinite dimensional setting. So more careful arguments are need to obtain tight regret bounds. Our proof mimics the proof of Theorem~\ref{thm:oftpl_regret}.
\subsubsection{Notation}
Let $\cP$ be the set of all probability measures on $\cX$. We define functions $\Phi:\cF\to \mathbb{R}$, $R:\cP\to\mathbb{R}$ as follows
\begin{align*}
    &\Phi(f) = \Eover{\sigma}{\inf_{P \in \cP}\Eover{\x\sim P}{f(\x)-\sigma(\x)}},\\
    &R(P) = \sup_{f\in \cF} -\Eover{\x \sim P}{f(\x)} + \Phi(f).
\end{align*}
 Also, note that the function $\gphiNA:\cF\to\cP$  defined in Section~\ref{sec:online_noncvx}  can be written as
\begin{align*}
    \gphi{f} = \Eover{\sigma}{\argmin_{P \in \cP}\Eover{\x\sim P}{f(\x)-\sigma(\x)}}.
\end{align*}
Note that, $\gphi{f}$ is well defined because from our assumption on the perturbation distribution, the minimization problem inside the expectation has a unique minimizer with probability one.
To simplify the notation, in the sequel, we use the shorthand notation $\iprod{P}{f}$ to denote $\Eover{\x\sim P}{f(\x)}$, for any $P\in\cP$ and $f\in \cF$. Similarly, for any $P_1,P_2\in \cP$ and $f\in \cF$, we use the notation $\iprod{P_1-P_2}{f}$ to denote $\Eover{\x\sim P_1}{f(\x)}-\Eover{\x\sim P_2}{f(\x)}$.
\subsubsection{Intermediate Results}
\begin{lemma}
\label{lem:noncvx_gradient}
For any $g\in \cF$, $R(\gphi{g}) = -\iprod{\gphi{g}}{g} + \Phi(g)$.
\end{lemma}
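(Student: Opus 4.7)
The plan is to show that $f = g$ attains the supremum in
\[
R(\gphi{g}) = \sup_{f\in\cF} -\iprod{\gphi{g}}{f} + \Phi(f).
\]
The easy direction, $R(\gphi{g}) \geq -\iprod{\gphi{g}}{g} + \Phi(g)$, is immediate by plugging $f = g$ into the sup. So the real content is the reverse inequality, for which the key step will be a supergradient-type inequality saying that $\gphi{g}$ plays the role of a supergradient of the concave functional $\Phi$ at $g$:
\[
\Phi(f) \leq \Phi(g) + \iprod{\gphi{g}}{f - g} \qquad \text{for every } f \in \cF.
\]
Once this is in hand, rearranging gives $-\iprod{\gphi{g}}{f} + \Phi(f) \leq -\iprod{\gphi{g}}{g} + \Phi(g)$ uniformly in $f$, upper bounding the sup and closing the argument.

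To establish the supergradient inequality, I would define the pointwise minimizer $P_g(\sigma) \defeq \argmin_{P \in \cP} \iprod{P}{g - \sigma}$, which is unique with probability one by the standing assumption on $\pertdist$, so that $\gphi{g} = \Eover{\sigma}{P_g(\sigma)}$. For any fixed realization of $\sigma$ and any $f \in \cF$, using $P_g(\sigma)$ as a feasible point in the infimum over $P$ and splitting $f - \sigma = (g - \sigma) + (f - g)$ yields
\[
\inf_{P \in \cP} \iprod{P}{f - \sigma} \leq \iprod{P_g(\sigma)}{f - \sigma} = \inf_{P \in \cP} \iprod{P}{g - \sigma} + \iprod{P_g(\sigma)}{f - g},
\]
where the equality uses the defining property of $P_g(\sigma)$. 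Taking expectation over $\sigma$ and recognizing the first two terms on the two sides as $\Phi(f)$ and $\Phi(g)$ respectively produces exactly the desired bound $\Phi(f) \leq \Phi(g) + \iprod{\gphi{g}}{f - g}$.

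I do not expect any real obstacle: the statement is essentially the infinite-dimensional analogue of the Fenchel identity connecting a function's value to its conjugate evaluated at a (super)gradient, and the supergradient property for $\Phi$ drops out almost for free because $\Phi$ is built as an expectation of a pointwise infimum of linear functionals, which is exactly the setting where an envelope-type argument applies cleanly. The only minor bookkeeping is ensuring that $\sigma \mapsto P_g(\sigma)$ is measurable so that $\Eover{\sigma}{P_g(\sigma)}$ is well-defined as an element of $\cP$, which is handled by the almost-sure uniqueness hypothesis already invoked in Theorem~\ref{thm:oftpl_noncvx_regret}.
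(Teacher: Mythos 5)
Your proposal is correct and follows essentially the same route as the paper: establish the supergradient inequality $\Phi(f) \leq \Phi(g) + \iprod{\gphi{g}}{f-g}$ by evaluating the inner infimum at the $g$-minimizer $P_{g,\sigma}$ and taking expectation, then rearrange and take the supremum over $f$. The only cosmetic difference is that you state the easy direction (plugging $f=g$) explicitly, whereas the paper leaves it implicit in claiming the supremum equals the bound.
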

\begin{proof}
Define $P_{g,\sigma}$ as $$P_{g,\sigma}=\argmin_{P \in \cP}\Eover{\x\sim P}{g(\x)-\sigma(\x)}.$$ Note that $\gphi{g} = \Eover{\sigma}{P_{g,\sigma}}$.
For any $g,h \in \cF$, we have
\begin{align*}
    \Phi(h) &=  \Eover{\sigma}{\inf_{P \in \cP}\iprod{P}{h-\sigma}}\\
    &\leq \Eover{\sigma}{\iprod{P_{g,\sigma}}{h-\sigma}}\\
    &= \Eover{\sigma}{\iprod{P_{g,\sigma}}{g-\sigma}} + \Eover{\sigma}{\iprod{P_{g,\sigma}}{h-g}}\\
    & = \Phi(g) + \iprod{\gphi{g}}{h-g}.
\end{align*}
This shows that for any $g,h \in \cF$
\begin{equation}
    \label{eqn:noncvx_phi_convex}
    \Phi(h) - \iprod{\gphi{g}}{h} \leq \Phi(g) - \iprod{\gphi{g}}{g}.
\end{equation}
Taking supremum over $h$ of the LHS quantity gives us
\[
R(\gphi{g})=\sup_{h\in \cF}\Phi(h) - \iprod{\gphi{g}}{h} = \Phi(g) - \iprod{\gphi{g}}{g}.
\]
\end{proof}
\begin{lemma}[Strong Smoothness]
\label{lem:noncvx_phi_smooth}
The function $-\Phi$ is convex and strongly smooth and satisfies the following inequality for any $g_1,g_2\in \cF$
\[
-\Phi(g_2)\leq -\Phi(g_1) - \iprod{\gphi{g_1}}{g_2-g_1} + \frac{C}{2\eta}\|g_2-g_1\|_{\cF}^2.
\]
\end{lemma}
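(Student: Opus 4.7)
The plan is to mirror the standard proof that a convex function with Lipschitz gradient is strongly smooth, adapted to the infinite-dimensional function-space setting by reducing to a one-dimensional argument along the segment joining $g_1$ and $g_2$, using the stability assumption to control the modulus of continuity of the ``gradient'' $\gphiNA$.

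First I would record convexity of $-\Phi$, which is essentially free: the inequality $\Phi(h) \leq \Phi(g) + \iprod{\gphi{g}}{h-g}$ established as \eqref{eqn:noncvx_phi_convex} inside the proof of Lemma~\ref{lem:noncvx_gradient} says exactly that $\gphi{g}$ is a supergradient of $\Phi$ at every $g \in \cF$, from which concavity of $\Phi$ follows. (Equivalently, $\Phi$ is the expectation of a pointwise infimum of linear functionals of its argument, hence concave.)

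For the smoothness inequality, let $g_t \defeq g_1 + t(g_2-g_1) \in \cF$ for $t \in [0,1]$, and set $h(t) \defeq \Phi(g_t)$, $\phi(t) \defeq \iprod{\gphi{g_t}}{g_2-g_1}$. Applying \eqref{eqn:noncvx_phi_convex} with base point $g_t$ and test point $g_s$ gives $h(s) \leq h(t) + (s-t)\phi(t)$, so $\phi(t)$ is a supergradient of the concave scalar function $h$. The second ingredient is a Lipschitz bound on $\phi$: by the very definition of $\gammaF$ we have $|\iprod{P_1-P_2}{f}| \leq \gammaF(P_1,P_2)\,\|f\|_{\cF}$, and by the assumed $C\eta^{-1}$-stability,
\begin{equation*}
|\phi(t) - \phi(s)| \leq \gammaF(\gphi{g_t},\gphi{g_s})\,\|g_2-g_1\|_{\cF} \leq \frac{C|t-s|}{\eta}\,\|g_2-g_1\|_{\cF}^2.
\end{equation*}
In particular, $\phi$ is bounded on $[0,1]$, so $h$ is Lipschitz on $[0,1]$ and hence absolutely continuous.

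Since $h$ is concave on $[0,1]$, it is differentiable at all but countably many $t$, and at every such point the unique supergradient $\phi(t)$ coincides with $h'(t)$. The Fundamental Theorem of Calculus then yields $h(1)-h(0) = \int_0^1 \phi(t)\,dt$. Subtracting $\phi(0) = \iprod{\gphi{g_1}}{g_2-g_1}$ and using the Lipschitz estimate,
\begin{equation*}
\Phi(g_2)-\Phi(g_1)-\iprod{\gphi{g_1}}{g_2-g_1} = \int_0^1 \bigl[\phi(t)-\phi(0)\bigr]\,dt \geq -\int_0^1 \frac{Ct}{\eta}\,\|g_2-g_1\|_{\cF}^2\,dt = -\frac{C}{2\eta}\,\|g_2-g_1\|_{\cF}^2,
\end{equation*}
and negating rearranges to the desired inequality. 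The main obstacle, flagged by the paper itself, is the absence of an off-the-shelf Fenchel-duality theorem for the infinite-dimensional pair $(\cF,\cP)$ that would let us dualize strong smoothness from strong convexity as in the convex Euclidean proof; the whole point of passing to the scalar reduction $h(t)$ is that only classical real-variable tools (concavity, absolute continuity, and the FTC) are then required, with the stability hypothesis contributing precisely the Lipschitz modulus of $\phi$ that produces the factor $C/(2\eta)$.
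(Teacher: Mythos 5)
Your proof is correct and follows essentially the same route as the paper: both arguments move along the segment from $g_1$ to $g_2$, use the supergradient inequality \eqref{eqn:noncvx_phi_convex} to linearize $\Phi$, and invoke $\stability\eta^{-1}$-stability to control the drift of $\gphi{\cdot}$ along the path, accumulating the $\frac{\stability}{2\eta}\|g_2-g_1\|_{\cF}^2$ term. The only (cosmetic) difference is that the paper implements this as a telescoping Riemann sum with $T\to\infty$, whereas you pass directly to the integral via absolute continuity, a.e.\ differentiability of the concave scalar function $h$, and the Fundamental Theorem of Calculus.
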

\begin{proof}
Let $g_1,g_2\in\cF$ and $\alpha \in [0,1]$. Then 
\begin{align*}
    \Phi(\alpha g_1 + (1-\alpha)g_2) &= \Eover{\sigma}{\inf_{P \in \cP}\iprod{P}{\alpha g_1 + (1-\alpha)g_2-\sigma}}\\
    & \geq \alpha\Eover{\sigma}{\inf_{P \in \cP}\iprod{P}{g_1-\sigma}} + (1-\alpha)\Eover{\sigma}{\inf_{P \in \cP}\iprod{P}{g_2-\sigma}}\\
    & = \alpha\Phi(g_1) + (1-\alpha)\Phi(g_2).
\end{align*}
This shows that $-\Phi$ is convex. To show smoothness, we rely on the following stability property
\[
\forall g_1,g_2\in\cF \quad \gammaF(\gphi{g_1}, \gphi{g_2}) \leq \frac{C}{\eta} \|g_1-g_2\|_{\cF}.
\]
Let $T$ be an arbitrary positive integer and for $t\in \{0,1,\dots T\}$, define $\alpha_t = t/T$. Let $h = g_2-g_1$. We have
\begin{align*}
    \Phi(g_1)-\Phi(g_2) &= \Phi(g_1 + \alpha_0h)-\Phi(g_1 + \alpha_Th)\\
    &=\sum_{t=0}^{T-1}\left(\Phi(g_1 + \alpha_{t}h)-\Phi(g_1 + \alpha_{t+1}h)\right)
\end{align*}
Since $-\Phi$ is convex and satisfies Equation~\eqref{eqn:noncvx_phi_convex}, we have
\begin{align*}
    \Phi(g_1)-\Phi(g_2) &=\sum_{t=0}^{T-1}\left(\Phi(g_1 + \alpha_{t}h)-\Phi(g_1 + \alpha_{t+1}h)\right)\\
    &\leq -\sum_{t=0}^{T-1} \frac{1}{T}\iprod{\gphi{g_1+\alpha_{t+1}h}}{h}
\end{align*}
Using stability, we get
\begin{align*}
    \Phi(g_1)-\Phi(g_2) &\leq -\sum_{t=0}^{T-1} \frac{1}{T}\iprod{\gphi{g_1+\alpha_{t+1}h}}{h}\\
    & = \sum_{t=0}^{T-1} \frac{1}{T}\left(\iprod{\gphi{g_1}-\gphi{g_1+\alpha_{t+1}h}}{h} - \iprod{\gphi{g_1}}{h}\right)\\
    & \stackrel{(a)}{\leq} -\iprod{\gphi{g_1}}{h} + \sum_{t=0}^{T-1} \frac{1}{T}\gammaF(\gphi{g_1},\gphi{g_1+\alpha_{t+1}h})\|h\|_{\cF} \\
    &\stackrel{(b)}{\leq} -\iprod{\gphi{g_1}}{h} + \sum_{t=0}^{T-1} \frac{C}{T\eta}\|\alpha_{t+1}h\|_{\cF}\|h\|_{\cF}\\
    &=-\iprod{\gphi{g_1}}{h} + \sum_{t=0}^{T-1} \frac{C\alpha_{t+1}}{T\eta}\|h\|^2_{\cF}\\
    &=-\iprod{\gphi{g_1}}{h} +\frac{C}{\eta}\frac{T+1}{2T}\|h\|^2_{\cF},
\end{align*}
where $(a)$ follows from the definition of $\gammaF$ and $(b)$ follows from the stability assumption. Taking $T\to\infty$, we get
\[
-\Phi(g_2)\leq -\Phi(g_1) - \iprod{\gphi{g_1}}{g_2-g_1} + \frac{C}{2\eta}\|g_2-g_1\|_{\cF}^2.
\]
\end{proof}

\begin{lemma}[Strong Convexity]
\label{lem:noncvx_reg_strong_cvx}
For any $P\in \cP$ and $g \in \cF$, $R$ satisfies the following inequality
\[
R(P)\geq R(\gphi{g}) +\iprod{\gphi{g}-P}{g}  + \frac{\eta}{2\stability }\gammaF(P,\gphi{g})^2.
\]
\end{lemma}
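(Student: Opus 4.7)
The plan is to mirror the classical Fenchel duality derivation of strong convexity from strong smoothness, but argued directly since we are in an infinite-dimensional function space. Starting point is the variational definition of $R$: for any $h \in \cF$ we have the pointwise lower bound $R(P) \geq -\iprod{P}{h} + \Phi(h)$. Combined with the identity $R(\gphi{g}) = -\iprod{\gphi{g}}{g} + \Phi(g)$ from Lemma~\ref{lem:noncvx_gradient}, this gives
\[
R(P) - R(\gphi{g}) \;\geq\; -\iprod{P}{h} + \iprod{\gphi{g}}{g} + \Phi(h) - \Phi(g), \qquad \forall\, h \in \cF.
\]

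Next I would lower bound $\Phi(h) - \Phi(g)$ using Lemma~\ref{lem:noncvx_phi_smooth}, whose inequality (after multiplication by $-1$) yields $\Phi(h) - \Phi(g) \geq \iprod{\gphi{g}}{h-g} - \frac{\stability}{2\eta}\|h - g\|_{\cF}^2$. Substituting and collecting the linear terms, the cross terms telescope into $\iprod{\gphi{g} - P}{h}$, from which adding and subtracting $\iprod{\gphi{g} - P}{g}$ produces the key inequality
\[
R(P) - R(\gphi{g}) \;\geq\; \iprod{\gphi{g} - P}{g} \;+\; \iprod{\gphi{g} - P}{h - g} \;-\; \frac{\stability}{2\eta}\|h - g\|_{\cF}^2,
\]
valid for every $h \in \cF$. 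The first term on the right is already the desired linear contribution; the remaining two terms form a concave quadratic in the ``perturbation'' direction $h - g$, which I will optimize away by choosing $h$ cleverly.

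Since $\cF$ is a seminormed vector space, I can parameterize $h = g + t\psi$ for $t \geq 0$ and $\psi \in \cF$ with $\|\psi\|_{\cF} \leq 1$, giving $\|h - g\|_{\cF} = t$. Maximizing first over $\psi$ turns $\iprod{\gphi{g} - P}{\psi}$ into $\gammaF(\gphi{g}, P)$ by the definition of the IPM (using that $\|-\psi\|_{\cF} = \|\psi\|_{\cF}$ lets us drop the absolute value inside the sup). The residual one-dimensional problem $\sup_{t \geq 0}\{t\,\gammaF(\gphi{g}, P) - \frac{\stability t^2}{2\eta}\}$ is attained at $t^\star = \eta\,\gammaF(\gphi{g}, P)/\stability$ with value $\frac{\eta}{2\stability}\gammaF(P, \gphi{g})^2$, which delivers exactly the claimed inequality.

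I expect the main subtlety, rather than the main obstacle, to be the clean use of $\cF$ as a vector space: one must verify that $g + t\psi$ is a legitimate test function in the sup defining $R(P)$ and that the IPM duality $\sup_{\|\psi\|_{\cF} \leq 1} \iprod{\gphi{g} - P}{\psi} = \gammaF(\gphi{g}, P)$ holds without the absolute value. Both follow from $(\cF, \|\cdot\|_{\cF})$ being a seminormed vector space, which is already part of the setup of Theorem~\ref{thm:oftpl_noncvx_regret}. The rest of the argument is purely algebraic and relies only on Lemmas~\ref{lem:noncvx_gradient} and~\ref{lem:noncvx_phi_smooth} proved immediately above.
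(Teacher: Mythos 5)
Your proof is correct and follows the paper's own route: lower-bound $R(P)$ by its defining supremum at a test function $h$, express $R(\gphi{g})$ via Lemma~\ref{lem:noncvx_gradient}, invoke the strong smoothness of $-\Phi$ from Lemma~\ref{lem:noncvx_phi_smooth}, change variables to $h - g$, and optimize the resulting concave quadratic to produce the $\frac{\eta}{2\stability}\gammaF(\cdot,\cdot)^2$ term. The one place you organize things slightly more cleanly is the final optimization: the paper explicitly constructs a near-maximizing function $g''$ (and then, because the IPM supremum may not be attained, appends a separate approximating-sequence argument), whereas your parameterization $h - g = t\psi$ with $t \ge 0$, $\|\psi\|_{\cF}\le 1$ lets you evaluate the nested supremum $\sup_{t}\sup_{\psi}$ directly in terms of $\gammaF(\gphi{g},P)$ without ever needing an attaining $\psi$ — the attainment subtlety you flagged simply does not arise. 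Substantively this is the same argument; your version just avoids a small case split.
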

\begin{proof}
From Lemma~\ref{lem:noncvx_phi_smooth} we know that the following holds for any $g,h\in\cF$
\[
\Phi(g)\geq \underbrace{\Phi(h) + \iprod{\gphi{h}}{g-h} - \frac{C}{2\eta}\|g-h\|_{\cF}^2}_{\Phi_{\text{lb}, h}(g)}.
\]
Define $R_{\text{lb},h}(P)$ as
\[
R_{\text{lb}}(P) =\sup_{g\in\cF} -\iprod{P}{g} + \Phi_{\text{lb},h}(g).
\]
Since $\Phi(g) \geq \Phi_{\text{lb},h}(g)$ for all $g\in\cF$,  $R(P) \geq R_{\text{lb},h}(P)$ for all $P$. We now derive an expression for $R_{\text{lb},h}(P)$.
Note that from Lemma~\ref{lem:noncvx_gradient} we have $R(\gphi{h}) = -\iprod{\gphi{h}}{h} + \Phi(h)$. Using this, we get
\begin{align*}
    R_{\text{lb},h}(P) &= \sup_{g\in\cF} -\iprod{P}{g} + \Phi_{\text{lb},h}(g)\\
    & \stackrel{(a)}{=} \sup_{g\in\cF} \left(-\iprod{P}{g} + \Phi(h) + \iprod{\gphi{h}}{g-h} - \frac{C}{2\eta}\|g-h\|_{\cF}^2\right)\\
    &\stackrel{(b)}{=}R(\gphi{h}) + \sup_{g\in\cF} \left(\iprod{\gphi{h}-P}{g}-\frac{C}{2\eta}\|g-h\|_{\cF}^2\right),
\end{align*}
where $(a)$ follows from the definition of $\Phi_{\text{lb},h}(g)$ and $(b)$ follows from Lemma~\ref{lem:noncvx_gradient}. We now do a change of variables in the supremum of the above expression. Substituting $g' = g - h$, we get
\begin{align*}
    R_{\text{lb},h}(P) & = R(\gphi{h}) + \iprod{\gphi{h}-P}{h} + \sup_{g'\in\cF} \left(\iprod{\gphi{h}-P}{g'}-\frac{C}{2\eta}\|g'\|_{\cF}^2\right).
\end{align*}
We now show that 
\[
\sup_{g'\in\cF} \left(\iprod{\gphi{h}-P}{g'}-\frac{C}{2\eta}\|g'\|_{\cF}^2\right) \geq \frac{\eta}{2C}\gammaF(P,\gphi{h})^2.
\]
To this end, we choose a $g'' \in \cF$ such that 
\begin{equation}
\label{eqn:noncvx_sc_g}
   \|g''\|_{\cF} = \frac{\eta}{C}\gammaF(P,\gphi{h}),\quad  \iprod{\gphi{h}-P}{g''} = \frac{\eta}{C}\gammaF(P,\gphi{h})^2. 
\end{equation}
If such a $g''$ can be found, we have
\begin{align*}
    \sup_{g'\in\cF} \left(\iprod{\gphi{h}-P}{g'}-\frac{C}{2\eta}\|g'\|_{\cF}^2\right) &\geq \iprod{\gphi{h}-P}{g''}-\frac{C}{2\eta}\|g''\|_{\cF}^2\\
    & = \frac{\eta}{2C}\gammaF(P,\gphi{h})^2.
\end{align*}
This would then imply the main claim of the Lemma.
\begin{align*}
    R(P) \geq R_{\text{lb},h}(P) \geq R(\gphi{h}) + \iprod{\gphi{h}-P}{h} + \frac{\eta}{2C}\gammaF(P,\gphi{h})^2.
\end{align*}
\paragraph{Finding $g''$.} We now construct a $g''$ which satisfies Equation~\eqref{eqn:noncvx_sc_g}. From the definition of $\gammaF$ we know that 
\[
\gammaF(P,\gphi{h}) = \sup_{\|g'\|_{\cF}\leq 1} |\iprod{\gphi{h}-P}{g'}|
\]
Suppose the supremum is achieved at $g^*$.
Define $g''$ as $\frac{\eta s}{C}\gammaF(P,\gphi{h})g^*$, where $s = \text{sign}(\iprod{\gphi{h}-P}{g^*})$. It can be easily verified that $g''$ satifies Equation~\eqref{eqn:noncvx_sc_g}.

If the supremum is never achieved, the same argument as above can still be made using a sequence of functions $\{g_{n}\}_{n=1}^{\infty}$ such that $$\|g_n\|_{\cF}\leq 1,\quad  \lim_{n\to\infty} |\iprod{\gphi{h}-P}{g_n}| = \gammaF(P,\gphi{h}).$$ Define $g''_n$ as $\frac{\eta s_n}{C}\gammaF(P,\gphi{h})g_n$, where $s_n = \text{sign}(\iprod{\gphi{h}-P}{g_n})$. Since $\lim_{n\to\infty}\|g_n\|_{\cF} = 1$, we have $\lim_{n \to \infty}\|g''_{n}\|_{\cF} = \frac{\eta}{C} \gammaF(P,\gphi{h})$. Moreover, $$\lim_{n\to\infty} \iprod{\gphi{h}-P}{g''_n} = \lim_{n\to\infty} \frac{\eta}{C}\gammaF(P,\gphi{h}) \Big|\iprod{\gphi{h}-P}{g_n}\Big| = \frac{\eta}{C}\gammaF(P,\gphi{h})^2.$$
This shows that
\begin{align*}
    \sup_{g'\in\cF} \left(\iprod{\gphi{h}-P}{g'}-\frac{C}{2\eta}\|g'\|_{\cF}^2\right) &\geq \lim_{n\to\infty}\iprod{\gphi{h}-P}{g''_n}-\frac{C}{2\eta}\|g''_n\|_{\cF}^2\\
    & = \frac{\eta}{2C}\gammaF(P,\gphi{h})^2.
\end{align*}
This finishes the proof of the Lemma.
\end{proof}
\subsubsection{Main Argument}
We are now ready to prove Theorem~\ref{thm:oftpl_noncvx_regret}. Our proof relies on Lemma~\ref{lem:noncvx_reg_strong_cvx} and uses similar arguments as used in the proof of Theorem~\ref{thm:oftpl_regret}.
We first rewrite $P_t, \Tilde{P}_t$ as
\begin{align*}
  P_t &= \frac{1}{m}\sum_{j=1}^m\argmin_{P \in \cP}\Eover{\x\sim P}{\sum_{i = 1}^{t-1}f_i(\x)+g_t(\x)-\sigma_{t,j}(\x)},\\
  \Tilde{P}_t &= \frac{1}{m}\sum_{j=1}^m\argmin_{P \in \cP}\Eover{\x\sim P}{\sum_{i = 1}^{t}f_i(\x)-\sigma'_{t,j}(\x)}.
\end{align*}
Note that
\begin{align*}
  P_t^{\infty} &= \E{P_t|g_t,f_{1:t-1},P_{1:t-1}} = \gphi{f_{1:t-1} + g_t},\\
  \Tilde{P}_t^{\infty} &= \E{\Tilde{P}_t|f_{1:t-1},P_{1:t-1}} = \gphi{f_{1:t}},
\end{align*}
with $P_1^{\infty} = \Tilde{P}_0^{\infty} =\gphi{0}$.
Define functions $B(\cdot,P_t^{\infty}), B(\cdot, \Tilde{P}_t^{\infty})$ as
\begin{align*}
B(P,P_t^{\infty}) &= R(P) - R(P_t^{\infty}) + \iprod{P-P_t^{\infty}}{f_{1:t-1}+g_t},\\
B(P,\Tilde{P}_t^{\infty}) &= R(P) - R(\Tilde{P}_t^{\infty}) + \iprod{P-\Tilde{P}_t^{\infty}}{f_{1:t}}.
\end{align*}
From Lemma~\ref{lem:noncvx_reg_strong_cvx}, we have
\[
B(P,P_t^{\infty}) \geq \frac{\eta}{2C}\gammaF(P, P_t^{\infty})^2,\quad B(P,\Tilde{P}_t^{\infty}) \geq \frac{\eta}{2C}\gammaF(P, \Tilde{P}_t^{\infty})^2.
\]
For any $P\in \cP$, we have
\begin{align*}
\E{f_t(\x_t)-f_t(P)} &= \E{f_t(P_t)-f_t(P)} \\
& = \E{\iprod{P_t-P}{f_t}}\\
&=\E{\iprod{P_t-P_t^{\infty}}{f_t}} + \E{\iprod{P_t^{\infty}-P}{f_t}}\\
&=\E{\iprod{P_t-P_t^{\infty}}{f_t}} + \E{\iprod{P_t^{\infty}-\Tilde{P}_t^{\infty}}{f_t-g_t}} \\
&\quad+ \E{\iprod{P_t^{\infty}-\Tilde{P}_t^{\infty}}{g_t}}+\E{\iprod{\Tilde{P}_t^{\infty}-P}{f_t}}\\
&\stackrel{(a)}{\leq}\E{\gammaF(P_t^{\infty}, \Tilde{P}_t^{\infty})\|f_t-g_t\|_{\cF}}+ \E{\iprod{P_t^{\infty}-\Tilde{P}_t^{\infty}}{g_t}} \\
&\quad+\E{\iprod{\Tilde{P}_t^{\infty}-P}{f_t}},
\end{align*}
where $(a)$ follows from the fact that $\E{\iprod{P_t-P_t^{\infty}}{f_t}|g_t, f_{1:t-1}, P_{1:t-1}} = 0$ and as a result $\E{\iprod{P_t-P_t^{\infty}}{f_t}}=0$. Next, a simple calculation shows that
\begin{align*}
    \iprod{P_t^{\infty}-\Tilde{P}_t^{\infty}}{g_{t}} &= B(\Tilde{P}_t^{\infty},\Tilde{P}_{t-1}^{\infty}) - B(\Tilde{P}_t^{\infty},P_t^{\infty}) - B(P_t^{\infty},\Tilde{P}_{t-1}^{\infty})\\
    \iprod{\Tilde{P}_t^{\infty}-P}{f_t} &= B(P,\Tilde{P}_{t-1}^{\infty}) - B(P,\Tilde{P}_t^{\infty})-B(\Tilde{P}_t^{\infty},\Tilde{P}_{t-1}^{\infty}).
\end{align*}
Substituting this in the previous regret bound gives us
\begin{align*}
\E{f_t(\x_t)-f_t(P)} & \leq  \E{\gammaF(P_t^{\infty}, \Tilde{P}_t^{\infty})\|f_t-g_t\|_{\cF}}  + \E{B(\Tilde{P}_t^{\infty},\Tilde{P}_{t-1}^{\infty}) - B(\Tilde{P}_t^{\infty},P_t^{\infty}) - B(P_t^{\infty},\Tilde{P}_{t-1}^{\infty})}\\
&\quad +\E{B(P,\Tilde{P}_{t-1}^{\infty}) - B(P,\Tilde{P}_t^{\infty})-B(\Tilde{P}_t^{\infty},\Tilde{P}_{t-1}^{\infty})}\\
& = \E{\gammaF(P_t^{\infty}, \Tilde{P}_t^{\infty})\|f_t-g_t\|_{\cF}} \\
&\quad + \E{B(P,\Tilde{P}_{t-1}^{\infty}) - B(P,\Tilde{P}_t^{\infty}) - B(\Tilde{P}_t^{\infty},P_t^{\infty}) - B(P_t^{\infty},\Tilde{P}_{t-1}^{\infty})}\\
& \stackrel{(a)}{\leq}\E{\gammaF(P_t^{\infty}, \Tilde{P}_t^{\infty})\|f_t-g_t\|_{\cF}} \\
&\quad +  \E{B(P,\Tilde{P}_{t-1}^{\infty}) - B(P,\Tilde{P}_t^{\infty})} - \E{\frac{\eta}{2C}\gammaF(\Tilde{P}_t^{\infty}, P_t^{\infty})^2 + \frac{\eta}{2C}\gammaF(P_t^{\infty},\Tilde{P}_{t-1}^{\infty})^2 }\\
& \stackrel{(b)}{\leq} \frac{C}{2\eta}\E{\|f_t-g_t\|_{\cF}^2} +  \E{B(P,\Tilde{P}_{t-1}^{\infty}) - B(P,\Tilde{P}_t^{\infty})} - \E{ \frac{\eta}{2C}\gammaF(P_t^{\infty},\Tilde{P}_{t-1}^{\infty})^2 }
\end{align*}
where $(a)$ follows from Lemma~\ref{lem:noncvx_reg_strong_cvx}, and $(b)$ uses the fact that $|xy|\leq \frac{1}{2c}|x|^2 + \frac{c}{2}|y|^2$, for any $x,y$, $c> 0$. Summing over $t=1,\dots T$ gives us
\begin{align*}
    \sum_{t=1}^T \E{f_t(\x_t)-f_t(P)} &\leq  \underbrace{\E{B(P,\Tilde{P}_{0}^{\infty}) - B(P,\Tilde{P}_T^{\infty})}}_{S_1}+ \sum_{t=1}^T\frac{C}{2\eta}\E{\|f_t-g_t\|_{\cF}^2}\\
    &\quad- \sum_{t=1}^T\frac{\eta}{2C}\E{ \gammaF(P_t^{\infty},\Tilde{P}_{t-1}^{\infty})^2 }
\end{align*}
To finish the proof of the Theorem, we need to bound $S_1$.
\paragraph{Bounding $S_1$.}  From the definition of $B$, we have
\begin{align*}
    B(P,\Tilde{P}_{0}^{\infty}) - B(P,\Tilde{P}_T^{\infty}) &= R(\Tilde{P}_{T}^{\infty}) - \iprod{P-\Tilde{P}_T^{\infty}}{f_{1:T}}-R(\Tilde{\x}_{0}^{\infty}),
\end{align*}
where we used the fact that $f_{1:0} = 0$.
We now rely on Lemma~\ref{lem:noncvx_gradient} to convert the above equation, which is currently in terms of $R$, into a quantity which depends on $\Phi$. Using Lemma~\ref{lem:noncvx_gradient}, we get
\begin{align*}
    B(P,\Tilde{P}_{0}^{\infty}) - B(P,\Tilde{P}_T^{\infty}) &=  \Phi(f_{1:T}) - \iprod{P}{f_{1:T}}-\Phi(0).
\end{align*}
From the definition of $\Phi$ we have
\begin{align*}
 B(P,\Tilde{P}_{0}^{\infty}) - B(P,\Tilde{P}_T^{\infty}) &= \Phi(f_{1:T}) - \iprod{P}{f_{1:T}}-\Phi(0)\\
 &=\Eover{\sigma}{\inf_{P' \in \cP}\iprod{P'}{f_{1:T}-\sigma}} - \iprod{P}{f_{1:T}}  - \Eover{\sigma}{\inf_{P' \in \cP}\iprod{P'}{-\sigma}}\\
 &\leq \Eover{\sigma}{\iprod{P}{f_{1:T}-\sigma}} - \iprod{P}{f_{1:T}}  - \Eover{\sigma}{\inf_{P' \in \cP}\iprod{P'}{-\sigma}}\\
 &= \Eover{\sigma}{\sup_{P' \in \cP}\iprod{P'}{\sigma}} - \Eover{\sigma}{\left\langle P,\sigma\right\rangle}\\
 &\leq D\Eover{\sigma}{\|\sigma\|_{\cF}} = \eta D,
\end{align*}
where the last inequality follows from our bound on the diameter of $\cP$. Substituting this in the above regret bound gives us the required result.
\subsection{Proof of Corollary~\ref{cor:ftpl_noncvx_exp}}
To prove the corollary we first show that for our choice of perturbation distribution, $\argmin_{\x\in\cX} f(\x) - \sigma(\x)$ has a unique minimizer with probability one, for any $f\in\cF$. Next, we show that the predictions of OFTPL are stable.
\subsubsection{Intermediate Results}
\begin{lemma}[Unique Minimizer]
Suppose the perturbation function is such that $\sigma(\x) = \iprod{\bar{\sigma}}{\x}$, where $\bar{\sigma} \in \mathbb{R}^d$ is a random vector whose entries are sampled independently from $\text{Exp}(\eta)$. Then, for any $f\in\cF$, $\argmin_{\x\in\cX} f(\x) - \sigma(\x)$ has a unique minimizer with probability one.
\end{lemma}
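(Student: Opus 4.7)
The plan is to exploit the elementary fact that a convex function on $\mathbb{R}^d$ is differentiable almost everywhere (w.r.t.\ Lebesgue measure), and to identify non-uniqueness of the minimizer with a point of non-differentiability of a suitable support-type function.

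Concretely, I would first define the function
\[
h(\bar\sigma) \;=\; \sup_{\x \in \cX}\; \iprod{\bar\sigma}{\x} - f(\x),
\]
which is finite on all of $\mathbb{R}^d$ since $\cX$ is compact and $f$ is continuous (Lipschitz). As a supremum of affine functions of $\bar\sigma$, $h$ is convex. I would then show that every maximizer $\x^\ast \in \argmax_{\x \in \cX} \iprod{\bar\sigma}{\x} - f(\x)$ lies in the subdifferential $\partial h(\bar\sigma)$: for any $\bar\sigma' \in \mathbb{R}^d$,
\[
h(\bar\sigma') \;\ge\; \iprod{\bar\sigma'}{\x^\ast} - f(\x^\ast) \;=\; h(\bar\sigma) + \iprod{\bar\sigma' - \bar\sigma}{\x^\ast},
\]
so $\x^\ast \in \partial h(\bar\sigma)$ by definition.

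Next, I would invoke the standard fact (e.g.\ Theorem 25.5 of Rockafellar) that a finite-valued convex function on $\mathbb{R}^d$ is differentiable except on a set of Lebesgue measure zero. At any point $\bar\sigma$ where $h$ is differentiable, $\partial h(\bar\sigma) = \{\nabla h(\bar\sigma)\}$ is a singleton; combined with the inclusion above, this forces $\argmax_{\x \in \cX} \iprod{\bar\sigma}{\x} - f(\x)$ to be a singleton as well, and equivalently so is $\argmin_{\x \in \cX} f(\x) - \iprod{\bar\sigma}{\x}$. Finally, the entries of $\bar\sigma$ are i.i.d.\ $\mathrm{Exp}(\eta)$, so the law of $\bar\sigma$ is absolutely continuous with respect to Lebesgue measure on $\mathbb{R}^d$; hence the Lebesgue-null set of non-differentiability points of $h$ has probability zero under this distribution, and the minimizer is unique almost surely.

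The only subtle point is step two — showing that $\argmax \subseteq \partial h(\bar\sigma)$ for a potentially non-convex $\cX$ — but this follows directly from the one-line affine lower-bound displayed above and does not require any convexity of $\cX$ (the convex hull issue one might worry about goes the other way: $\partial h$ could in principle be larger than $\argmax$, but differentiability still pins the whole subdifferential, and hence $\argmax$, to a single point). Everything else is routine, so I do not anticipate a real obstacle.
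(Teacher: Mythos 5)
Your proof is correct, and it takes a genuinely different (though closely related) route from the paper. The paper's proof works with the argmin map $\bar\sigma \mapsto \x_f(\bar\sigma)$ directly: it verifies from the two optimality inequalities that this selection is monotone, $\iprod{\x_f(\bar\sigma_1)-\x_f(\bar\sigma_2)}{\bar\sigma_1-\bar\sigma_2}\ge 0$, and then invokes Zarantonello's theorem that a monotone operator on $\mathbb{R}^d$ is single-valued outside a Lebesgue-null set. You instead exhibit the convex potential $h(\bar\sigma)=\sup_{\x\in\cX}\iprod{\bar\sigma}{\x}-f(\x)$ explicitly, show $\argmax\subseteq\partial h(\bar\sigma)$, and invoke a.e.\ differentiability of finite convex functions (Rockafellar's Theorem 25.5). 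The two arguments encode the same mathematical content — the paper's monotone map is exactly a selection from $\partial h$, and a.e.\ single-valuedness of subdifferentials is the special case of Zarantonello's result — but your version makes the convex function explicit and rests on a more standard textbook citation, while the paper's version is more self-contained (the monotonicity inequality is proved inline) at the cost of invoking a less familiar reference. One small point worth making explicit in your writeup: once $\partial h(\bar\sigma)$ is a singleton, the argmin is a singleton rather than empty because $\cX$ is compact and $f-\sigma$ is continuous, so the argmin is nonempty; you implicitly rely on this.
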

\begin{proof}
Define $\x_f(\sigma)$ as
\[
\x_f(\bar{\sigma}) \in \argmin_{\x\in\cX} f(\x) - \iprod{\bar{\sigma}}{\x}.
\]
For any $\bar{\sigma}_1,\bar{\sigma}_2$ we now show that $\x_f(\bar{\sigma})$ satisfies the following monotonicity property $$\iprod{\x_f(\bar{\sigma}_1)-\x_f(\bar{\sigma}_2)}{\bar{\sigma}_1-\bar{\sigma}_2} \geq 0.$$
From the optimality of $\x_f(\bar{\sigma}_1),\x_f(\bar{\sigma}_2)$ we have
\begin{align*}
    f(\x_f(\bar{\sigma}_1)) - \iprod{\bar{\sigma}_1}{\x_f(\bar{\sigma}_1)} &\leq f(\x_f(\bar{\sigma}_2)) - \iprod{\bar{\sigma}_1}{\x_f(\bar{\sigma}_2)}\\
    & = f(\x_f(\bar{\sigma}_2)) - \iprod{\bar{\sigma}_2}{\x_f(\bar{\sigma}_2)} + \iprod{\bar{\sigma}_2-\bar{\sigma}_1}{\x_f(\bar{\sigma}_2)}\\
    &\leq f(\x_f(\bar{\sigma}_1)) - \iprod{\bar{\sigma}_2}{\x_f(\bar{\sigma}_1)}+ \iprod{\bar{\sigma}_2-\bar{\sigma}_1}{\x_f(\bar{\sigma}_2)}.
\end{align*}
This shows that $\iprod{\bar{\sigma}_2-\bar{\sigma}_1}{\x_f(\bar{\sigma}_2)-\x_f(\bar{\sigma}_1)} \geq 0$. To finish the proof of Lemma, we rely on Theorem 1 of~\citet{zarantonello1973dense}, which shows that the set of points for which a monotone operator is not single-valued has Lebesgue measure zero. Since  the distribution of $\bar{\sigma}$ is absolutely continuous w.r.t Lebesgue measure, this shows that $\argmin_{\x\in\cX} f(\x) - \sigma(\x)$ has a unique minimizer with probability one.
\end{proof}
\subsubsection{Main Argument}
For our choice of perturbation distribution, $\Eover{\sigma}{\|\sigma\|_{\cF}} = \Eover{\bar{\sigma}}{\|\bar{\sigma}\|_{\infty}} = \eta\log{d}$. We now bound the stability of predictions of OFTPL. First note that for our choice of primal space $(\cF,\|\cdot\|_{\cF})$, $\gammaF$ is the Wasserstein-1 metric, which is defined as
\[
\gammaF(P_1,P_2) = \sup_{f\in\cF, \|f\|_{\cF}\leq 1} \Big|\Eover{\x\sim P_1}{f(\x)}-\Eover{\x\sim P_2}{f(\x)}\Big| = \inf_{Q\in\Gamma(P_1,P_2)}\Eover{(\x_1,\x_2)\sim Q}{\|\x_1-\x_2\|_1},
\]
where $\Gamma(P_1,P_2)$ is the set of all probability measures on $\cX\times\cX$ with marginals $P_1,P_2$   on the first and second factors respectively.
 Define $\x_f(\bar{\sigma})$ as
\[
\x_f(\bar{\sigma}) \in \argmin_{\x\in\cX} f(\x) - \iprod{\bar{\sigma}}{\x}.
\]
Note that $\gphi{f}$ is the distribution of random variable $\x_f(\bar{\sigma})$. \citet{suggala2019online} show that for any $f,g\in\cF$
\[
\Eover{\bar{\sigma}}{\|\x_{f}(\bar{\sigma})-\x_{g}(\bar{\sigma})\|_1} \leq \frac{125d^2D}{\eta}\|f-g\|_{\cF}.
\]
Since $\gammaF(\gphi{f},\gphi{g}) \leq \Eover{\bar{\sigma}}{\|\x_{f}(\bar{\sigma})-\x_{g}(\bar{\sigma})\|_1}$, this shows that OFTPL is $\order{d^2D\eta^{-1}}$ stable w.r.t $\|\cdot\|_{\cF}$. Substituting the stability bound in the regret bound of Theorem~\ref{thm:oftpl_noncvx_regret} shows that
\begin{align*}
    \sup_{P\in\cP}\E{\sum_{t=1}^Tf_t(\x_t)-f_t(P)} &= \eta D\log{d}  \\
&\quad +\order{ \sum_{t=1}^T \frac{d^2D }{\eta}\E{\|f_t-g_{t}\|_{\cF}^2}  -\sum_{t=1}^T \frac{\eta}{d^2D }\E{\gammaF(P_t^{\infty},\Tilde{P}_{t-1}^{\infty})^2}}.
\end{align*}
\section{Convex-Concave Games}\label{sec:cvx-games}
Our algorithm for convex-concave games is presented in Algorithm~\ref{alg:oftpl_cvx_games}. Before presenting the proof of Theorem~\ref{thm:oftpl_cvx_smooth_games_uniform}, we first present a more general result in Section~\ref{sec:cvx_games_general}.  Theorem~\ref{thm:oftpl_cvx_smooth_games_uniform} immediately follows from our general result by instantiating it for the uniform noise distribution.
\begin{algorithm}[t]
\caption{OFTPL for convex-concave games}
\label{alg:oftpl_cvx_games}
\begin{algorithmic}[1]
  \small
  \State \textbf{Input:}  Perturbation Distributions $\pertdist^1,\pertdist^2$ of $\x,\y$ players, number of samples $m,$ iterations $T$
  \For{$t = 1 \dots T$}
  \If{$t=1$}
  \State Sample $\{\sigma_{1,j}^1\}_{j=1}^m,$ $\{\sigma_{1,j}^2\}_{j=1}^m$ from $\pertdist^1,\pertdist^2$
  \State  $\x_1 = \frac{1}{m}\sum_{j=1}^m\left[\argmin_{\x\in\cX} \iprod{-\sigma_{1,j}^1}{\x}\right], \y_1 = \frac{1}{m}\left[\sum_{j=1}^m\argmax_{\y\in\cY} \iprod{\sigma_{1,j}^2}{\y}\right]$
  \State \textbf{continue}
  \EndIf
  \State \texttt{//Compute guesses}
  \For{$j=1\dots m$}
  \State Sample $\sigma_{t,j}^1\sim \pertdist^1, \sigma_{t,j}^2\sim \pertdist^2$
  \State $\Tilde{\x}_{t-1,j}= \underset{\x \in \cX}{\argmin}\iprod{\sum_{i = 1}^{t-1}\grad_{\x}f(\x_i,\y_i) -\sigma_{t,j}^1}{\x}$
  \State $\Tilde{\y}_{t-1,j}= \underset{\y \in \cY}{\argmax}\iprod{\sum_{i = 1}^{t-1}\grad_{\y}f(\x_i,\y_i) +\sigma_{t,j}^2}{\y}$
  \EndFor
  \State $\Tilde{\x}_{t-1} = \frac{1}{m}\sum_{j=1}^m\Tilde{\x}_{t-1,j}$, $\Tilde{\y}_{t-1} = \frac{1}{m}\sum_{j=1}^m\Tilde{\y}_{t-1,j}$
  \State \texttt{//Use the guesses to compute the next action}
  \For{$j=1\dots m$}
  \State Sample $\sigma_{t,j}^1\sim \pertdist^1, \sigma_{t,j}^2\sim \pertdist^2$
  \State $\x_{t,j}= \underset{\x \in \cX}{\argmin}\iprod{\sum_{i = 1}^{t-1}\grad_{\x}f(\x_i,\y_i)+  \grad_{\x}f(\Tilde{\x}_{t-1},\Tilde{\y}_{t-1})-\sigma_{t,j}^1}{\x}$
  \State $\y_{t,j}= \underset{\y \in \cY}{\argmax}\iprod{\sum_{i = 1}^{t-1}\grad_{\y}f(\x_i,\y_i)+  \grad_{\y}f(\Tilde{\x}_{t-1},\Tilde{\y}_{t-1})+\sigma_{t,j}^2}{\y}$
  \EndFor
  \State  $\x_t=\frac{1}{m}\sum_{j=1}^m \x_{t,j}, \y_t=\frac{1}{m}\sum_{j=1}^m \y_{t,j}$
  \EndFor
  \State \Return $\{(\x_t, \y_t)\}_{t=1}^T$
\end{algorithmic}
\end{algorithm}

\subsection{General Result}
\label{sec:cvx_games_general}
\begin{theorem}
\label{thm:oftpl_cvx_smooth_games}
Consider the minimax game in Equation~\eqref{eqn:minimax_game}. Suppose $f$ is convex in $\x$, concave in $\y$ and is Holder smooth w.r.t some norm $\|\cdot\|$
\begin{align*}
    \|\grad_\x f(\x,\y)-\grad_\x f(\x',\y')\|_{*} \leq L_1\|\x-\x'\|^{\alpha} + L_2\|\y-\y'\|^{\alpha},\\
    \|\grad_\y f(\x,\y)-\grad_\y f(\x',\y')\|_{*} \leq L_2\|\x-\x'\|^{\alpha}+L_1\|\y-\y'\|^{\alpha}.
\end{align*}
Define diameter of sets $\cX,\cY$ as \mbox{$D = \max\{\sup_{\x_1,\x_2\in\cX} \|\x_1-\x_2\|, \sup_{\y_1,\y_2\in\cY} \|\y_1-\y_2\|\}$.} Let $L=\{L_1,L_2\}$.
 Suppose both $\x$ and $\y$ players use Algorithm~\ref{alg:oftpl_cvx} to solve the minimax game. Suppose the perturbation distributions $\pertdist^1,\pertdist^2,$ used by $\x$, $\y$ players are absolutely continuous and satisfy $\Eover{\sigma\sim \pertdist^1}{\|\sigma\|_{*}} = \Eover{\sigma\sim \pertdist^2}{\|\sigma\|_{*}}=\eta$. Suppose the predictions of both the players are $\stability \eta^{-1}$-stable w.r.t $\|\cdot\|_*$.   Suppose the guesses used by $\x,\y$ players in the $t^{th}$ iteration are $\grad_{\x}f(\Tilde{\x}_{t-1},\Tilde{\y}_{t-1}), \grad_{\y}f(\Tilde{\x}_{t-1},\Tilde{\y}_{t-1})$, where  $\Tilde{\x}_{t-1},\Tilde{\y}_{t-1}$ denote the predictions of $\x,\y$ players in the $t^{th}$ iteration, if guess $g_t = 0$ was used in that iteration. Then the iterates $\{(\x_t,\y_t)\}_{t=1}^T$ generated by the OFTPL based algorithm satisfy
\begin{align*}
 \sup_{\x\in\cX,\y\in\cY}\E{f\left(\frac{1}{T}\sum_{t=1}^T\x_t,\y\right) - f\left(\x,\frac{1}{T}\sum_{t=1}^T\y_t\right)}\leq & 2L_1\left(\frac{\normcompf \normcompb D}{\sqrt{m}}\right)^{1+\alpha}+\frac{2\eta D}{T} \\
 &+ \frac{20\stability L^2}{\eta} \left(\frac{\normcompf \normcompb D}{\sqrt{m}}\right)^{2\alpha}+10L\left(\frac{5\stability L}{\eta}\right)^{\frac{1+\alpha}{1-\alpha}}
\end{align*}
\end{theorem}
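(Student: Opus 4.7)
\textbf{Proof plan for Theorem~\ref{thm:oftpl_cvx_smooth_games}.} The strategy is the standard reduction from minimax games to no-regret learning combined with the optimistic regret bound from Theorem~\ref{thm:oftpl_regret}. First I would use the convex-concave structure to upper bound the duality gap by the sum of the two players' regrets:
\begin{align*}
\sup_{\x\in\cX,\y\in\cY}\E{f\!\left(\tfrac{1}{T}\sum_t\x_t,\y\right) - f\!\left(\x,\tfrac{1}{T}\sum_t\y_t\right)} \;\leq\; \tfrac{1}{T}\!\left(\mathrm{Reg}_\x(T) + \mathrm{Reg}_\y(T)\right),
\end{align*}
where $\mathrm{Reg}_\x, \mathrm{Reg}_\y$ are the individual regrets against the linearized losses $\iprod{\grad_\x f(\x_t,\y_t)}{\cdot}$ and $-\iprod{\grad_\y f(\x_t,\y_t)}{\cdot}$. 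Apply Theorem~\ref{thm:oftpl_regret} to each player. The $\eta D$ and Holder-smoothness terms there directly yield the $2\eta D/T$ and $2L_1(\normcompf\normcompb D/\sqrt{m})^{1+\alpha}$ contributions in the statement.

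The heart of the argument is handling the prediction-error term $\sum_t \tfrac{\stability}{2\eta}\E{\|\grad_t - g_t\|_*^2}$ by exploiting the accuracy of the optimistic guess $g_t = \grad_\x f(\Tilde{\x}_{t-1},\Tilde{\y}_{t-1})$ (and symmetrically for $\y$). By Holder smoothness,
\begin{align*}
\|\grad_\x f(\x_t,\y_t) - \grad_\x f(\Tilde{\x}_{t-1},\Tilde{\y}_{t-1})\|_* \;\leq\; L_1\|\x_t - \Tilde{\x}_{t-1}\|^\alpha + L_2\|\y_t - \Tilde{\y}_{t-1}\|^\alpha.
\end{align*}
Squaring and summing with the analogous bound for the $\y$ player, I would then insert $\x_t^\infty,\Tilde{\x}_{t-1}^\infty$ (and $\y$ counterparts) using the triangle inequality, splitting each $\|\x_t-\Tilde{\x}_{t-1}\|^{2\alpha}$ into three pieces: the stochastic deviation $\|\x_t-\x_t^\infty\|^{2\alpha}$, the deterministic-iterate gap $\|\x_t^\infty-\Tilde{\x}_{t-1}^\infty\|^{2\alpha}$, and $\|\Tilde{\x}_{t-1}^\infty-\Tilde{\x}_{t-1}\|^{2\alpha}$. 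The two stochastic pieces have conditional expectation bounded by $(\normcompf\normcompb D/\sqrt{m})^{2\alpha}$ because $\x_t,\Tilde{\x}_{t-1}$ are averages of $m$ i.i.d.\ bounded samples with conditional mean $\x_t^\infty,\Tilde{\x}_{t-1}^\infty$; these produce the $\tfrac{20\stability L^2}{\eta}(\normcompf\normcompb D/\sqrt{m})^{2\alpha}$ contribution.

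The remaining obstacle, and the main technical step, is absorbing the deterministic middle piece $\sum_t \tfrac{\stability L^2}{\eta}\E{\|\x_t^\infty-\Tilde{\x}_{t-1}^\infty\|^{2\alpha}}$ into the negative stability term $-\sum_t \tfrac{\eta}{2\stability}\E{\|\x_t^\infty-\Tilde{\x}_{t-1}^\infty\|^2}$ that Theorem~\ref{thm:oftpl_regret} supplies. For $\alpha=1$ this is a straightforward completion-of-the-square that requires $\eta \gtrsim \stability L$. For general $\alpha\in[0,1)$ I would use Young's inequality in the form $a^{2\alpha} \leq \tfrac{\alpha}{p} a^2 \epsilon + \tfrac{1-\alpha}{q}\epsilon^{-\alpha/(1-\alpha)}$ with a suitable exponent pair, choosing $\epsilon$ so that the $a^2$ term is dominated by the negative quadratic and the leftover constant yields $10L(5\stability L/\eta)^{(1+\alpha)/(1-\alpha)}$ per round. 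A parallel cross-term $\|\x_t^\infty-\Tilde{\x}_{t-1}^\infty\|^\alpha\|\y_t^\infty-\Tilde{\y}_{t-1}^\infty\|^\alpha$ arising from the $L_2$ coefficient is handled by AM-GM before applying Young's inequality, which is why both players' stability terms appear on the right-hand side and why a single constant $L=\max\{L_1,L_2\}$ suffices in the final residual. Dividing the summed bound by $T$ gives the stated rate.
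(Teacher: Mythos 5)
Your plan matches the paper's proof in every essential respect: reduce the duality gap to the sum of individual regrets, invoke Theorem~\ref{thm:oftpl_regret}, bound the guess-error $\|\grad_t - g_t\|_*^2$ via Holder smoothness by splitting into stochastic deviations (controlled by the $m^{-1/2}$ concentration of the empirical averages) plus a deterministic gap $\|\x_t^\infty-\Tilde{\x}_{t-1}^\infty\|^{2\alpha}$, then absorb that gap into the negative stability term by Young/AM-GM (with the $\alpha=1$ case by direct cancellation for $\eta \gtrsim \stability L$). The only cosmetic difference is that the paper first decomposes the gradient difference into five telescoping pieces and applies Holder smoothness to each, whereas you apply Holder smoothness first and then triangle-split the resulting argument-space distances; these yield the same set of terms with the same constants up to minor bookkeeping.
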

\begin{proof}
Since both the players are responding to each others actions using OFTPL, using Theorem~\ref{thm:oftpl_regret}, we get the following regret bounds for the players
\begin{align*}
    \sup_{\x\in\cX} \E{\sum_{t=1}^Tf(\x_t,\y_t) - f(\x,\y_t)} &\leq L_1T\left(\frac{\normcompf \normcompb D}{\sqrt{m}}\right)^{1+\alpha} +  \eta D  \\
    &\quad+ \frac{\stability }{2\eta}\sum_{t=1}^T \E{\|\grad_{\x}f(\x_t,\y_t)-\grad_{\x}f(\Tilde{\x}_{t-1},\Tilde{\y}_{t-1})\|_*^2} \\ 
    &\quad -\frac{\eta}{2\stability }\sum_{t=1}^T \E{\|\x_t^{\infty}-\Tilde{\x}_{t-1}^{\infty}\|^2}.
\end{align*}
\begin{align*}
    \sup_{\y\in\cY} \E{\sum_{t=1}^T f(\x_t,\y)- f(\x_t,\y_t)} &\leq L_1T\left(\frac{\normcompf \normcompb D}{\sqrt{m}}\right)^{1+\alpha} +  \eta D  \\
    &\quad+ \frac{\stability }{2\eta}\sum_{t=1}^T \E{\|\grad_{\y}f(\x_t,\y_t)-\grad_{\y}f(\Tilde{\x}_{t-1},\Tilde{\y}_{t-1})\|_{*}^2} \\ 
    &\quad -\frac{\eta}{2\stability }\sum_{t=1}^T \E{\|\y_t^{\infty}-\Tilde{\y}_{t-1}^{\infty}\|^2}.
\end{align*}
First, consider the regret of the $\x$ player. Since $\|a_1+\dots +a_5\|^2 \leq 5(\|a_1\|^2\dots + \|a_5\|^2)$, we have 
\begin{align*}
    \|\grad_{\x}f(\x_t,\y_t)-\grad_{\x}f(\Tilde{\x}_{t-1},\Tilde{\y}_{t-1})\|_{*}^2 \leq &5\|\grad_{\x}f(\x_t,\y_t)-\grad_{\x}f(\x_t^{\infty},\y_t)\|_{*}^2\\
    &\quad +5\|\grad_{\x}f(\x_t^{\infty},\y_t)-\grad_{\x}f(\x_t^{\infty},\y_t^{\infty})\|_{*}^2\\
    &\quad +5\|\grad_{\x}f(\x_t^{\infty},\y_t^{\infty})-\grad_{\x}f(\Tilde{\x}_{t-1}^{\infty},\Tilde{\y}_{t-1}^{\infty})\|_{*}^2\\
    &\quad +5\|\grad_{\x}f(\Tilde{\x}_{t-1}^{\infty},\Tilde{\y}_{t-1}^{\infty})-\grad_{\x}f(\Tilde{\x}_{t-1}^{\infty},\Tilde{\y}_{t-1})\|_{*}^2\\
    &\quad +5\|\grad_{\x}f(\Tilde{\x}_{t-1}^{\infty},\Tilde{\y}_{t-1})-\grad_{\x}f(\Tilde{\x}_{t-1},\Tilde{\y}_{t-1})\|_{*}^2\\
    &\stackrel{(a)}{\leq} 5L_1^2\|\x_t-\x_t^{\infty}\|^{2\alpha}+5L_1^2\|\Tilde{\x}_{t-1}-\Tilde{\x}_{t-1}^{\infty}\|^{2\alpha}\\
    &\quad + 5L_2^2\|\y_t-\y_t^{\infty}\|^{2\alpha}+5L_2^2\|\Tilde{\y}_{t-1}-\Tilde{\y}_{t-1}^{\infty}\|^{2\alpha}\\
    &\quad + 5\|\grad_{\x}f(\x_t^{\infty},\y_t^{\infty})-\grad_{\x}f(\Tilde{\x}_{t-1}^{\infty},\Tilde{\y}_{t-1}^{\infty})\|_{*}^2.
\end{align*}
where $(a)$ follows from the Holder's smoothness of $f$. Using a similar technique as in the proof of Theorem~\ref{thm:oftpl_regret}, relying on Holders inequality, we get 
\begin{align*}
\E{\|\x_t-\x_t^{\infty}\|^{2\alpha}|\Tilde{\x}_{t-1}, \Tilde{\y}_{t-1},\x_{1:t-1},\y_{1:t-1}} &\leq \E{\|\x_t-\x_t^{\infty}\|^{2}|\Tilde{\x}_{t-1}, \Tilde{\y}_{t-1},\x_{1:t-1},\y_{1:t-1}}^{\alpha}\\
&\leq \normcompf^{2\alpha}\E{\|\x_t-\x_t^{\infty}\|^{2}_2|\Tilde{\x}_{t-1}, \Tilde{\y}_{t-1},\x_{1:t-1},\y_{1:t-1}}^{\alpha}\\
&\stackrel{(a)}{\leq} \left(\frac{\normcompf \normcompb D}{\sqrt{m}}\right)^{2\alpha},
\end{align*}
where $(a)$ follows from the fact that conditioned on past randomness, $\x_t-\x_t^{\infty}$ is the average of $m$  i.i.d bounded mean $0$ random variables, the variance of which scales as $O(D^2/m)$.
A similar bound holds for the expectation of other quantities appearing in the  RHS of the above equation. 
Using this, the regret of $\x$ player can be upper bounded as
\begin{align*}
    \sup_{\x\in\cX}\E{\sum_{t=1}^Tf(\x_t,\y_t) - f(\x,\y_t)} &\leq L_1T\left(\frac{\normcompf \normcompb D}{\sqrt{m}}\right)^{1+\alpha}+\eta D  +  \frac{10\stability L^2 T}{\eta}\left(\frac{\normcompf \normcompb D}{\sqrt{m}}\right)^{2\alpha}  \\ 
    &\quad + \frac{5\stability }{2\eta}\sum_{t=1}^T \E{\|\grad_{\x}f(\x_t^{\infty},\y_t^{\infty})-\grad_{\x}f(\Tilde{\x}_{t-1}^{\infty},\Tilde{\y}_{t-1}^{\infty})\|_{*}^2} \\
    &\quad -\frac{\eta}{2\stability }\sum_{t=1}^T \E{\|\x_t^{\infty}-\Tilde{\x}_{t-1}^{\infty}\|^2}.
\end{align*}
Similarly, the regret of $\y$ player can be bounded as
\begin{align*}
    \sup_{\y\in\cY}\E{\sum_{t=1}^T f(\x_t,\y)- f(\x_t,\y_t)} &\leq L_1T\left(\frac{\normcompf \normcompb D}{\sqrt{m}}\right)^{1+\alpha} +  \eta D  +  \frac{10\stability L^2 T}{\eta}\left(\frac{\normcompf \normcompb D}{\sqrt{m}}\right)^{2\alpha}  \\
    &\quad + \frac{5\stability }{2\eta}\sum_{t=1}^T \E{\|\grad_{\y}f(\x_t^{\infty},\y_t^{\infty})-\grad_{\y}f(\Tilde{\x}_{t-1}^{\infty},\Tilde{\y}_{t-1}^{\infty})\|_{*}^2} \\ 
    &\quad -\frac{\eta}{2\stability }\sum_{t=1}^T \E{\|\y_t^{\infty}-\Tilde{\y}_{t-1}^{\infty}\|^2}.
\end{align*}
Summing the above two inequalities, we get
\begin{align*}
    \sup_{\x\in\cX \y\in\cY}\E{\sum_{t=1}^T f(\x_t,\y)- f(\x,\y_t)} &\leq 2L_1T\left(\frac{\normcompf \normcompb D}{\sqrt{m}}\right)^{1+\alpha} +  2\eta D  +  \frac{20\stability L^2 T}{\eta}\left(\frac{\normcompf \normcompb D}{\sqrt{m}}\right)^{2\alpha}  \\
    &\quad + \frac{5\stability }{2\eta}\sum_{t=1}^T \E{\|\grad_{\x}f(\x_t^{\infty},\y_t^{\infty})-\grad_{\x}f(\Tilde{\x}_{t-1}^{\infty},\Tilde{\y}_{t-1}^{\infty})\|_{*}^2} \\ 
    &\quad + \frac{5\stability }{2\eta}\sum_{t=1}^T \E{\|\grad_{\y}f(\x_t^{\infty},\y_t^{\infty})-\grad_{\y}f(\Tilde{\x}_{t-1}^{\infty},\Tilde{\y}_{t-1}^{\infty})\|_{*}^2} \\ 
    &\quad -\frac{\eta}{2\stability }\sum_{t=1}^T \left(\E{\|\y_t^{\infty}-\Tilde{\y}_{t-1}^{\infty}\|^2}+\E{\|\x_t^{\infty}-\Tilde{\x}_{t-1}^{\infty}\|^2}\right).
\end{align*}
From Holder's smoothness assumption on $f$, we have
\begin{align*}
    \E{\|\grad_{\x}f(\x_t^{\infty},\y_t^{\infty})-\grad_{\x}f(\Tilde{\x}_{t-1}^{\infty},\Tilde{\y}_{t-1}^{\infty})\|_{*}^2} & \leq 2\E{\|\grad_{\x}f(\x_t^{\infty},\y_t^{\infty})-\grad_{\x}f(\x_t^{\infty},\Tilde{\y}_{t-1}^{\infty})\|_{*}^2}\\
    &\quad + 2\E{\|\grad_{\x}f(\x_t^{\infty},\Tilde{\y}_{t-1}^{\infty})-\grad_{\x}f(\Tilde{\x}_{t-1}^{\infty},\Tilde{\y}_{t-1}^{\infty})\|_{*}^2}\\
    &\stackrel{(a)}{\leq} 2L^2 \E{\|\x_t^{\infty}-\Tilde{\x}_{t-1}^{\infty}\|^{2\alpha}} + 2L^2\E{\|\y_t^{\infty}-\Tilde{\y}_{t-1}^{\infty}\|^{2\alpha}},
\end{align*}
Using a similar argument, we get
\begin{align*}
    \E{\|\grad_{\y}f(\x_t^{\infty},\y_t^{\infty})-\grad_{\y}f(\Tilde{\x}_{t-1}^{\infty},\Tilde{\y}_{t-1}^{\infty})\|_{*}^2} \leq 2L^2 \E{\|\x_t^{\infty}-\Tilde{\x}_{t-1}^{\infty}\|^{2\alpha}} + 2L^2\E{\|\y_t^{\infty}-\Tilde{\y}_{t-1}^{\infty}\|^{2\alpha}}.
\end{align*}
Plugging this in the previous bound, we get
\begin{align*}
    \sup_{\x\in\cX \y\in\cY}\E{\sum_{t=1}^T f(\x_t,\y)- f(\x,\y_t)} &\leq 2L_1T\left(\frac{\normcompf \normcompb D}{\sqrt{m}}\right)^{1+\alpha} +  2\eta D  +  \frac{20\stability L^2 T}{\eta}\left(\frac{\normcompf \normcompb D}{\sqrt{m}}\right)^{2\alpha}\\
    &\quad+\frac{10CL^2}{\eta}\sum_{t=1}^T \left(\E{\|\x_t^{\infty}-\Tilde{\x}_{t-1}^{\infty}\|^{2\alpha}} + \E{\|\y_t^{\infty}-\Tilde{\y}_{t-1}^{\infty}\|^{2\alpha}}\right)\\
    &\quad -\frac{\eta}{2\stability }\sum_{t=1}^T \left(\E{\|\y_t^{\infty}-\Tilde{\y}_{t-1}^{\infty}\|^2}+\E{\|\x_t^{\infty}-\Tilde{\x}_{t-1}^{\infty}\|^2}\right).
\end{align*}
\paragraph{Case $\alpha=1$.} We first consider the case of $\alpha = 1$. In this case, choosing $\eta> \sqrt{20}CL$, we get
\begin{align*}
    \sup_{\x\in\cX \y\in\cY}\E{\sum_{t=1}^T f(\x_t,\y)- f(\x,\y_t)} &\leq 2L_1T\left(\frac{\normcompf \normcompb D}{\sqrt{m}}\right)^{1+\alpha} +  2\eta D  +  \frac{20\stability L^2 T}{\eta}\left(\frac{\normcompf \normcompb D}{\sqrt{m}}\right)^{2\alpha}.
\end{align*}
\paragraph{General $\alpha$.}  The more general case relies on AM-GM inequality. Consider the following
\begin{align*}
    \frac{10CL^2}{\eta}\|\x_t^{\infty}-\Tilde{\x}_{t-1}^{\infty}\|^{2\alpha} &= \left((2\alpha C)^{\frac{\alpha}{1-\alpha}}\eta^{-\frac{1+\alpha}{1-\alpha}}(10CL^2)^{\frac{1}{1-\alpha}}\right)^{1-\alpha}\left(\frac{\|\x_t^{\infty}-\Tilde{\x}_{t-1}^{\infty}\|^2}{2\alpha C\eta^{-1}}\right)^{\alpha}\\
    &\stackrel{(a)}{\leq} (1-\alpha) \left((2\alpha C)^{\frac{\alpha}{1-\alpha}}\eta^{-\frac{1+\alpha}{1-\alpha}}(10CL^2)^{\frac{1}{1-\alpha}}\right)+ \frac{\eta}{2C}\|\x_t^{\infty}-\Tilde{\x}_{t-1}^{\infty}\|^2\\
    &=  \sqrt{20}L \left( \frac{\sqrt{20}CL}{\eta}\right)^{\frac{1+\alpha}{1-\alpha}}+ \frac{\eta}{2C}\|\x_t^{\infty}-\Tilde{\x}_{t-1}^{\infty}\|^2
\end{align*}
where $(a)$ follows from AM-GM inequality. Plugging this in the previous bound, we get
\begin{align*}
 \sup_{\x\in\cX \y\in\cY}\E{\sum_{t=1}^T f(\x_t,\y)- f(\x,\y_t)}\leq & 2L_1T\left(\frac{\normcompf \normcompb D}{\sqrt{m}}\right)^{1+\alpha}+2\eta D \\
 &+  \frac{20\stability L^2T}{\eta} \left(\frac{\normcompf \normcompb D}{\sqrt{m}}\right)^{2\alpha}+4\sqrt{5}LT\left(\frac{\sqrt{20}\stability L}{\eta}\right)^{\frac{1+\alpha}{1-\alpha}}.
\end{align*}
The claim of the theorem then follows from the observation that
\begin{align*}
  \E{f\left(\frac{1}{T}\sum_{t=1}^T\x_t,\y\right) - f\left(\x,\frac{1}{T}\sum_{t=1}^T\y_t\right)} \leq  \frac{1}{T}\E{\sum_{t=1}^T f(\x_t,\y)- f(\x,\y_t)}.
\end{align*}

\end{proof}
\subsection{Proof of Theorem~\ref{thm:oftpl_cvx_smooth_games_uniform}}
To prove the Theorem, we instantiate Theorem~\ref{thm:oftpl_cvx_smooth_games} for the uniform noise distribution. As shown in Corollary~\ref{cor:ftpl_cvx_gaussian}, the predictions of OFTPL are $dD\eta^{-1}$-stable in this case. Plugging this in the bound of Theorem~\ref{thm:oftpl_cvx_smooth_games} and using the fact that $\normcompf=\normcompb=1$ and $\alpha =1$ gives us
\begin{align*}
 \sup_{\x\in\cX,\y\in\cY}\E{f\left(\frac{1}{T}\sum_{t=1}^T\x_t,\y\right) - f\left(\x,\frac{1}{T}\sum_{t=1}^T\y_t\right)}\leq & 2L\left(\frac{D}{\sqrt{m}}\right)^{2}+\frac{2\eta D}{T} \\
 &+ \frac{20dD L^2}{\eta} \left(\frac{D}{\sqrt{m}}\right)^{2}+10L\left(\frac{5dD L}{\eta}\right)^{\infty}.
\end{align*}
Plugging in $\eta = 6dD(L+1)$, $m=T$ in the above bound gives us
\begin{align*}
 \sup_{\x\in\cX,\y\in\cY}\E{f\left(\frac{1}{T}\sum_{t=1}^T\x_t,\y\right) - f\left(\x,\frac{1}{T}\sum_{t=1}^T\y_t\right)}\leq & \order{\frac{dD^2(L+1)}{T}}.
\end{align*}
\section{Nonconvex-Nonconcave Games}\label{sec:ncvx-games}
Our algorithm for nonconvex-nonconcave games is presented in Algorithm~\ref{alg:oftpl_noncvx_games}. Note that in each iteration of this game, both the players play empirical distributions $(P_t,Q_t)$. Before presenting the proof of Theorem~\ref{thm:oftpl_noncvx_smooth_games_exp}, we first present a more general result in Section~\ref{sec:noncvx_games_general}.  Theorem~\ref{thm:oftpl_noncvx_smooth_games_exp} immediately follows from our general result by instantiating it for exponential noise distribution.
\begin{algorithm}[t]
\caption{OFTPL for nonconvex-nonconcave games}
\label{alg:oftpl_noncvx_games}
\begin{algorithmic}[1]
  \small
  \State \textbf{Input:}  Perturbation Distributions $\pertdist^1,\pertdist^2$ of $\x,\y$ players, number of samples $m,$ iterations $T$
  \For{$t = 1 \dots T$}
  \If{$t=1$}
\For{$j=1\dots m$}
  \State Sample $\sigma_{t,j}^1\sim \pertdist^1, \sigma_{t,j}^2\sim \pertdist^2$
  \State  $\x_{1,j} = \argmin_{\x\in\cX} -\sigma_{1,j}^1(\x)$
  \State  $\y_{1,j} = \argmax_{\y\in\cY} \sigma_{1,j}^2(\y)$
  \EndFor
  \State Let $P_1,Q_1 $ be the empirical distributions over $\{\x_{1,j}\}_{j=1}^m, \{\y_{1,j}\}_{j=1}^m$
\State \textbf{continue}
  \EndIf
  \State \texttt{//Compute guesses}
  \For{$j=1\dots m$}
  \State Sample $\sigma_{t,j}^1\sim \pertdist^1, \sigma_{t,j}^2\sim \pertdist^2$
  \State $\Tilde{\x}_{t-1,j}= \argmin_{\x\in\cX}\sum_{i = 1}^{t-1}f(\x,Q_i) -\sigma_{t,j}^1(\x)$
  \State $\Tilde{\y}_{t-1,j}= \argmax_{\y\in\cY}\sum_{i = 1}^{t-1}f(P_i,\y) +\sigma_{t,j}^2(\y)$
  \EndFor
  \State Let $\Tilde{P}_{t-1} $, $\Tilde{Q}_{t-1} $ be the empirical distributions over $\{\Tilde{\x}_{t-1,j}\}_{j=1}^m, \{\Tilde{\y}_{t-1,j}\}_{j=1}^m$
  \State \texttt{//Use the guesses to compute the next action}
  \For{$j=1\dots m$}
  \State Sample $\sigma_{t,j}^1\sim \pertdist^1, \sigma_{t,j}^2\sim \pertdist^2$
  \State $\x_{t,j}= \argmin_{\x\in\cX}\sum_{i = 1}^{t-1}f(\x,Q_i) + f(\x,\Tilde{Q}_{t-1}) -\sigma_{t,j}^1(\x)$
  \State $\y_{t,j}= \argmax_{\y\in\cY}\sum_{i = 1}^{t-1}f(P_i,\y) + f(\Tilde{P}_{t-1},\y) +\sigma_{t,j}^2(\y)$
  \EndFor
  \State Let $P_t, Q_t$ be the empirical distributions over $\{\x_{t,j}\}_{j=1}^m, \{\y_{t,j}\}_{j=1}^m$
  \EndFor
  \State \Return $\{(P_t, Q_t)\}_{t=1}^T$
\end{algorithmic}
\end{algorithm}

\subsection{Primal Dual Spaces}
\label{sec:primal_dual_spaces}
In this section, we present some integral probability metrics induced by popular choices of functions spaces $(\cF,\|\cdot\|_{\cF})$. 
\begin{table}[H]
\begin{center}
 \begin{tabular}{||c |c c||} 
 \hline
 $\gammaF(P, Q)$ & $\|f\|_{\cF}$&$\cF$  \\ [0.5ex] 
 \hline\hline
 Dudley Metric &  $\text{Lip}(f)+\|f\|_{\infty}$& $\{f:\text{Lip}(f) + \|f\|_{\infty} < \infty\}$ \\ 
 \hline
 \begin{tabular}[x]{@{}c@{}}Kantorovich Metric (or)\\Wasserstein-1 Metric\end{tabular}  &  $\text{Lip}(f)$ & $\{f:\text{Lip}(f) < \infty\}$\\
 \hline 
 Total Variation (TV) Distance  & $\|f\|_{\infty}$  & $\{f:\|f\|_{\infty} < \infty\}$\\ 
 \hline
 \begin{tabular}[x]{@{}c@{}} Maximum Mean Discrepancy (MMD)\\for RKHS $\mathcal{H}$\end{tabular}
  & $\|f\|_{\mathcal{H}}$ & $\{f:\|f\|_{\mathcal{H}} < \infty\}$  \\ [1ex] 
 \hline
\end{tabular}
\caption{Table showing some popular Integral Probability Metrics. Here $\text{Lip}(f)$ is the Lipschitz constant of $f$ which is defined as $\sup_{\x,\y\in\cX}|f(\x)-f(\y)|/\|\x-\y\|$ and $\|f\|_{\infty}$ is the supremum norm of $f$.}
\label{tab:ipm}
\end{center}
\end{table}

\subsection{General Result}
\label{sec:noncvx_games_general}
\begin{theorem}
\label{thm:oftpl_noncvx_smooth_games}
Consider the minimax game in Equation~\eqref{eqn:minimax_game}. Suppose the domains $\cX,\cY$ are compact subsets of $\mathbb{R}^d$.  Let $\cF,\cF'$ be the set of Lipschitz functions over $\cX,\cY$, and $\|g_1\|_{\cF},\|g_2\|_{\cF'}$ be the Lipschitz constants of functions \mbox{$g_1:\cX\to\mathbb{R}$,} $g_2:\cY\to\mathbb{R}$ w.r.t some norm $\|\cdot\|$. Suppose $f$ is such that $\max\{\sup_{\x\in\cX} \|f(\cdot,\y)\|_{\cF}, \sup_{\y\in\cY}\|f(\x,\cdot)\|_{\cF'}\}\leq G$ and satisfies the following smoothness property
\begin{align*}
    \|\grad_\x f(\x,\y)-\grad_\x f(\x',\y')\|_{*} \leq L\|\x-\x'\| + L\|\y-\y'\|,\\
    \|\grad_\y f(\x,\y)-\grad_\y f(\x',\y')\|_{*} \leq L\|\x-\x'\|+L\|\y-\y'\|.
\end{align*}
Let $\cP,\cQ$ be the set of probability distributions over $\cX,\cY$.
Define diameter of $\cP,\cQ$ as $D = \max\{\sup_{P_1,P_2\in\cP} \gammaF(P_1,P_2), \sup_{Q_1,Q_2\in\cQ} \gammaFp(Q_1,Q_2)\}$. Suppose both $\x,\y$ players use Algorithm~\ref{alg:oftpl_noncvx} to solve the game. Suppose the perturbation distributions $\pertdist^1,\pertdist^2,$ used by $\x$, $\y$ players are such that $\argmin_{\x\in\cX}f(\x)-\sigma(\x), \argmax_{\y\in\cY} f(\y)+\sigma(\y)$ have unique optimizers with probability one, for any $f$ in $\cF,\cF'$ respectively. Moreover, suppose $\Eover{\sigma\sim \pertdist^1}{\|\sigma\|_{\cF}} = \Eover{\sigma\sim \pertdist^2}{\|\sigma\|_{\cF'}}=\eta$ and predictions of both the players are $\stability \eta^{-1}$-stable w.r.t norms $\|\cdot\|_{\cF}, \|\cdot\|_{\cF'}$.  Suppose the guesses used by $\x,\y$ players in the $t^{th}$ iteration are $f(\cdot,\Tilde{Q}_{t-1}), f(\Tilde{P}_{t-1},\cdot)$, where $\Tilde{P}_{t-1},\Tilde{Q}_{t-1}$ denote the predictions of $\x,\y$ players in the $t^{th}$ iteration, if guess $g_t = 0$ was used. Then the iterates $\{(P_t,Q_t)\}_{t=1}^T$ generated by the Algorithm~\ref{alg:oftpl_cvx_games} satisfy the following, for $\eta > \sqrt{3}\stability L$
\begin{align*}
 \sup_{\x\in\cX,\y\in\cY}\E{f\left(\frac{1}{T}\sum_{t=1}^TP_t,\y\right) - f\left(\x,\frac{1}{T}\sum_{t=1}^TQ_t\right)} &= \order{\frac{\eta D}{T} + \frac{\stability D^2L^2}{\eta m}}\\
 &\quad +\order{\min\left\lbrace\frac{d\stability\normcompf^2 \normcompb^2G^2 \log(2m)}{\eta m}, \frac{CD^2L^2}{\eta}\right\rbrace}.
\end{align*}
\end{theorem}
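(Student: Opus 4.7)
The plan is to reduce the duality-gap analysis to two applications of the single-player regret bound Theorem~\ref{thm:oftpl_noncvx_regret}, one per player, and then cancel the ``cross-player'' optimism error against each player's own stability term, mirroring the convex-concave argument of Theorem~\ref{thm:oftpl_cvx_smooth_games} but carried out in the IPM geometry of $(\cP,\gammaF)$. First I view the game as two coupled OFTPL instances: the $\x$-player faces losses $f_t^{\x}(\cdot) = f(\cdot,Q_t) \in \cF$ with optimistic guess $g_t^{\x} = f(\cdot,\Tilde{Q}_{t-1})$, and symmetrically the $\y$-player faces $-f(P_t,\cdot) \in \cF'$ with guess $-f(\Tilde{P}_{t-1},\cdot)$. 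By the hypothesis on $G$, both sequences lie in a ball of the corresponding Lipschitz seminorm, so Theorem~\ref{thm:oftpl_noncvx_regret} applies to each. Summing the two bounds and invoking the identity $T\cdot[f(\bar{P}_T,\y) - f(\x,\bar{Q}_T)] = \sum_t [f(P_t,Q_t) - f(\x,Q_t)] + \sum_t [f(P_t,\y) - f(P_t,Q_t)]$ yields an upper bound on the expected duality gap of the form $\tfrac{2\eta D}{T} + \tfrac{\stability}{2\eta T}\sum_t (\E{\|f_t^\x - g_t^\x\|_\cF^2} + \E{\|f_t^\y - g_t^\y\|_{\cF'}^2}) - \tfrac{\eta}{2\stability T}\sum_t (\E{\gammaF(P_t^\infty,\Tilde{P}_{t-1}^\infty)^2} + \E{\gammaFp(Q_t^\infty,\Tilde{Q}_{t-1}^\infty)^2})$.

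Next I translate the optimism error $\|f_t^\x - g_t^\x\|_\cF = \|f(\cdot,Q_t) - f(\cdot,\Tilde{Q}_{t-1})\|_\cF$ into an IPM distance. Writing $f(\x,Q) - f(\x',Q) = \int_0^1 \iprod{\grad_\x f(\x_s,Q)}{\x-\x'}\, ds$ and using that for any unit $\u$ the map $\y \mapsto \iprod{\u}{\grad_\x f(\z,\y)}$ is $L$-Lipschitz (a direct consequence of the smoothness hypothesis together with $\|\u\|\leq 1$), the definition of $\gammaFp$ immediately yields
\[
\|f(\cdot,Q) - f(\cdot,Q')\|_\cF \leq L\,\gammaFp(Q,Q'),
\]
and symmetrically on the $\y$-side. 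Inserting the triangle decomposition
\[
\gammaFp(Q_t,\Tilde{Q}_{t-1}) \leq \gammaFp(Q_t,Q_t^\infty) + \gammaFp(Q_t^\infty,\Tilde{Q}_{t-1}^\infty) + \gammaFp(\Tilde{Q}_{t-1}^\infty,\Tilde{Q}_{t-1}),
\]
and squaring via $(a+b+c)^2 \leq 3(a^2+b^2+c^2)$, the middle contribution $\tfrac{3\stability L^2}{2\eta}\gammaFp(Q_t^\infty,\Tilde{Q}_{t-1}^\infty)^2$ is exactly absorbed into the $\y$-player's negative stability term $\tfrac{\eta}{2\stability}\gammaFp(Q_t^\infty,\Tilde{Q}_{t-1}^\infty)^2$ whenever $\eta^2 \geq 3\stability^2 L^2$; this is precisely the hypothesis $\eta > \sqrt{3}\stability L$. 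The $\x$-side middle term is handled identically.

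What remains are the outer concentration terms $\E{\gammaFp(Q_t,Q_t^\infty)^2}$ and $\E{\gammaFp(\Tilde{Q}_{t-1}^\infty,\Tilde{Q}_{t-1})^2}$ (and their $\x$-counterparts), each measuring the deviation of an $m$-sample empirical distribution from its expectation under the IPM. I would bound each by the pointwise minimum of two estimates: (i) the trivial diameter bound $\gammaFp(Q_t,Q_t^\infty)^2 \leq D^2$, contributing $\tfrac{\stability L^2 D^2}{\eta}$; and (ii) a direct Rademacher/Dudley bound on $\E{\|f(\cdot,Q_t) - f(\cdot,Q_t^\infty)\|_\cF^2}$ obtained by symmetrization and chaining over the covering numbers of the class $\{f(\cdot,\y) : \y \in \cY\}$ of $G$-Lipschitz functions on $\cX \subset \R^d$, yielding $O(d\normcompf^2\normcompb^2 G^2 \log(2m)/m)$ and hence $\tfrac{\stability d\normcompf^2\normcompb^2 G^2 \log(2m)}{\eta m}$. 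A per-functional Bernstein-style variance estimate $\mathrm{Var}(E_{Q_t}h - E_{Q_t^\infty}h) \leq D^2/m$ uniform over the unit Lipschitz ball produces the residual $\tfrac{\stability D^2 L^2}{\eta m}$ summand. Collecting all terms gives the stated bound.

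The main technical obstacle is the uniform empirical-process estimate in step (ii): it requires symmetrization followed by chaining over a $G$-Lipschitz function class on $\cX \subset \R^d$, with the norm-compatibility constants $\normcompf,\normcompb$ mediating the transition between the problem's native norm and the $\ell_2$ geometry in which Dudley's entropy integral is cleanest. Everything else is bookkeeping that mirrors the proof of Theorem~\ref{thm:oftpl_cvx_smooth_games}, with $\gammaF,\gammaFp$ replacing the dual norm and the stability of $\grad\Phi$ replacing the smoothness of $\Phi$ used there.
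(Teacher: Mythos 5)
Your overall plan matches the paper's: apply Theorem~\ref{thm:oftpl_noncvx_regret} to each player, triangle-decompose the optimism error, cancel the middle term against the stability term using $\eta>\sqrt{3}\stability L$, and finish with concentration of the outer (empirical) terms. The cancellation mechanism you describe is exactly right. However, there is a real gap in the order of operations in the middle of your argument, and it is not merely cosmetic.

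You first upgrade $\|f(\cdot,Q_t)-f(\cdot,\Tilde{Q}_{t-1})\|_{\cF}\leq L\,\gammaFp(Q_t,\Tilde{Q}_{t-1})$ and then triangle-decompose $\gammaFp$, which leaves you obligated to control $L^2\,\E{\gammaFp(Q_t,Q_t^\infty)^2}$. This quantity does \emph{not} decay at rate $d\log(2m)/m$: $Q_t$ is the empirical measure of $m$ i.i.d.\ draws from $Q_t^\infty$, and the Wasserstein-type IPM $\gammaFp$ is taken over the \emph{entire} unit ball of Lipschitz functions on $\cY\subset\R^d$, for which the empirical-to-population distance is $\Theta(m^{-1/d})$ when $d\geq 3$. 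Your step~(ii) then silently switches to bounding the different, strictly smaller quantity $\E{\|f(\cdot,Q_t)-f(\cdot,Q_t^\infty)\|_{\cF}^2}$, which is an upper bound on what you need only if you had not already passed to $\gammaFp$. As written, the decomposition and the concentration step do not fit together.

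The fix is the one the paper uses: triangle-decompose in the $\|\cdot\|_{\cF}$ seminorm \emph{first},
\[
\|f(\cdot,Q_t)-f(\cdot,\Tilde{Q}_{t-1})\|_{\cF}^2 \leq 3\|f(\cdot,Q_t)-f(\cdot,Q_t^\infty)\|_{\cF}^2 + 3\|f(\cdot,Q_t^\infty)-f(\cdot,\Tilde{Q}_{t-1}^\infty)\|_{\cF}^2 + 3\|f(\cdot,\Tilde{Q}_{t-1}^\infty)-f(\cdot,\Tilde{Q}_{t-1})\|_{\cF}^2,
\]
and apply the Lipschitz bound $\|f(\cdot,Q_t^\infty)-f(\cdot,\Tilde{Q}_{t-1}^\infty)\|_{\cF}\leq L\,\gammaFp(Q_t^\infty,\Tilde{Q}_{t-1}^\infty)$ \emph{only} to the middle term, since that is the one that must be phrased in terms of $\gammaFp$ to cancel against the stability penalty. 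The outer terms $\|f(\cdot,Q_t)-f(\cdot,Q_t^\infty)\|_{\cF}=\sup_{\x}\|\nabla_\x f(\x,Q_t)-\nabla_\x f(\x,Q_t^\infty)\|_*$ are suprema of bounded, finite-dimensional mean-zero empirical averages indexed by $\x\in\cX\subset\R^d$ (a parametric class, unlike the full Lipschitz ball), and these do concentrate at rate $\sqrt{d\log m / m}$. The paper handles them via an $\epsilon$-net over $\cX$ combined with sub-Gaussian vector tail bounds and a union bound, with the trivial alternative bound $LD$ giving the $\min$; your symmetrization/Dudley plan in step~(ii) would also work provided it is aimed at this quantity and covers $\cX$ (or $\cX$ times the dual unit ball), not the class indexed by $\y$.
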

\begin{proof}
The proof of this Theorem uses similar arguments as Theorem~\ref{thm:oftpl_cvx_smooth_games}. Since both the players are responding to each others actions using OFTPL, using Theorem~\ref{thm:oftpl_noncvx_regret}, we get the following regret bounds for the players
\begin{align*}
    \sup_{\x\in\cX} \E{\sum_{t=1}^Tf(P_t,Q_t) - f(\x,Q_t)} &\leq \eta D + \sum_{t=1}^T \frac{\stability }{2\eta}\E{\|f(\cdot,Q_t)-f(\cdot,\Tilde{Q}_{t-1})\|_{\cF}^2}\\
    &\quad -\frac{\eta}{2\stability }\sum_{t=1}^T \E{\gammaF(P_t^{\infty},\Tilde{P}_{t-1}^{\infty})^2},
\end{align*}
\begin{align*}
    \sup_{\y\in\cY} \E{\sum_{t=1}^T f(P_t,\y)- f(P_t,Q_t)} &\leq \eta D + \sum_{t=1}^T \frac{\stability }{2\eta}\E{\|f(P_t,\cdot)-f(\Tilde{P}_{t-1},\cdot)\|_{\cF'}^2}\\
    &\quad -\frac{\eta}{2\stability }\sum_{t=1}^T \E{\gammaFp(Q_t^{\infty},\Tilde{Q}_{t-1}^{\infty})^2},
\end{align*}
where $P_t^{\infty},\Tilde{P}_{t-1}^{\infty}, Q_t^{\infty},\Tilde{Q}_{t-1}^{\infty}$ are as defined in Theorem~\ref{thm:oftpl_noncvx_regret}.
First, consider the regret of the $\x$ player. We upper bound $\|f(\cdot, Q_t)-f(\cdot,\Tilde{Q}_{t-1})\|^2_{\cF}$ as
\begin{align*}
    \|f(\cdot, Q_t)-f(\cdot,\Tilde{Q}_{t-1})\|^2_{\cF} &\leq 3\|f(\cdot, Q_t)-f(\cdot,Q_{t}^{\infty})\|^2_{\cF} \\
    &\quad + 3\|f(\cdot, Q_t^{\infty})-f(\cdot,\Tilde{Q}_{t-1}^{\infty})\|^2_{\cF}\\
    &\quad + 3\|f(\cdot, \Tilde{Q}_{t-1}^{\infty})-f(\cdot,\Tilde{Q}_{t-1})\|^2_{\cF}.
\end{align*}
We now show that $\E{\|f(\cdot, Q_t)-f(\cdot,Q_{t}^{\infty})\|^2_{\cF}|\Tilde{P}_{t-1}, \Tilde{Q}_{t-1},P_{1:t-1},Q_{1:t-1}}$ is $O(1/m)$. To simplify the notation, we let \mbox{$\zeta_t = \{\Tilde{P}_{t-1}, \Tilde{Q}_{t-1},P_{1:t-1},Q_{1:t-1}\}$.} 
Let $\cN_{\epsilon}$ be the $\epsilon$-net of $\cX$ w.r.t $\|\cdot\|$. 
Then
\begin{align*}
    \|f(\cdot, Q_t)-f(\cdot,Q_{t}^{\infty})\|_{\cF} &\stackrel{(a)}{=} \sup_{\x\in\cX}\|\nabla_{\x}f(\x,Q_t) - \nabla_{\x}f(\x,Q_t^{\infty})\|_*\\
    & \stackrel{(b)}{\leq}  \sup_{\x\in\cN_{\epsilon}}\|\nabla_{\x}f(\x,Q_t) - \nabla_{\x}f(\x,Q_t^{\infty})\|_* + 2L\epsilon,
\end{align*}
where  $(a)$ follows from the definition of Lipschitz constant and $(b)$ follows from our smoothness assumption on $f$. Using this, we get
\begin{align*}
    &\E{\|f(\cdot, Q_t)-f(\cdot,Q_{t}^{\infty})\|^2_{\cF}|\zeta_t} 
     \leq 2\E{\sup_{\x\in\cN_{\epsilon}}\|\nabla_{\x}f(\x,Q_t) - \nabla_{\x}f(\x,Q_t^{\infty})\|_*^2\Big|\zeta_t}  + 8L^2\epsilon^2,
\end{align*}
Since $f$ is Lipschitz, $\|\grad_{\x}f(\x,\y)\|_*$ is bounded by $G$.  So \mbox{$\|\nabla_{\x}f(\x,Q_t) - \nabla_{\x}f(\x,Q_t^{\infty})\|_*$} is bounded by $2G$ and \mbox{$\|\nabla_{\x}f(\x,Q_t) - \nabla_{\x}f(\x,Q_t^{\infty})\|_2$} is bounded by $2\normcompf G$. Moreover, conditioned on past randomness ($\zeta_t$), $\nabla_{\x}f(\x,Q_t) - \nabla_{\x}f(\x,Q_t^{\infty})$ is a sub-Gaussian random vector and satisfies the following bound
\begin{align*}
    \E{\iprod{\u}{\nabla_{\x}f(\x,Q_t) - \nabla_{\x}f(\x,Q_t^{\infty})}|\zeta_t} \leq \exp\left(2\normcompf^2 G^2\|\u\|_2^2/m\right).
\end{align*}
From tail bounds of sub-Gaussian random vectors~\citep{hsu2012tail}, we have
\begin{align*}
    \Pr\left(\|\nabla_{\x}f(\x,Q_t) - \nabla_{\x}f(\x,Q_t^{\infty})\|_2^2 > \frac{4\normcompf^2G^2}{m}(d+2\sqrt{ds} + 2s)\Big|\zeta_t\right) \leq e^{-s},
\end{align*}
for any $s>0$. 
Using union bound, and the fact that $\log|\cN_{\epsilon}|$ is upper bounded by $d\log\left(1+2D/\epsilon\right)$, we get
\begin{align*}
    \Pr\left(\sup_{\x\in\cN_{\epsilon}}\|\nabla_{\x}f(\x,Q_t) - \nabla_{\x}f(\x,Q_t^{\infty})\|_2^2 > \frac{4\normcompf^2G^2}{m}(d+2\sqrt{ds} + 2s)\Big|\zeta_t\right) \leq e^{-s+d\log(1+2D/\epsilon)}.
\end{align*}
Let $Z = \sup_{\x\in\cN_{\epsilon}}\|\nabla_{\x}f(\x,Q_t) - \nabla_{\x}f(\x,Q_t^{\infty})\|_2^2$. 
The expectation of $Z$ can be bounded as follows
\begin{align*}
    \E{Z|\zeta_t} &= \Pr(Z \leq a|\zeta_t) \E{Z|\zeta_t, Z \leq a} + \Pr(Z > a|\zeta_t) \E{Z|\zeta_t, Z > a}\\
    &\leq a + 4\normcompf^2G^2\Pr(Z > a|\zeta_t).
\end{align*}
Choosing $\epsilon=Dm^{-1/2}, s = 3d\log(1+2m^{1/2})$, and $a= \frac{44d\normcompf^2G^2 \log(1+2m^{1/2})}{m}$, we get
\[
\E{Z|\zeta_t} \leq \frac{48d\normcompf^2G^2 \log(1+2m^{1/2})}{m}.
\]

This shows that $\E{\|f(\cdot, Q_t)-f(\cdot,Q_{t}^{\infty})\|^2_{\cF}|\zeta_t} \leq \frac{96d\normcompf^2\normcompb^2G^2 \log(1+2m^{1/2})}{m} + \frac{8D^2L^2}{m}$.
Note that another trivial upper bound for $\|f(\cdot, Q_t)-f(\cdot,Q_{t}^{\infty})\|_{\cF}$ is $DL$, which can obtained as follows
\begin{align*}
    \|f(\cdot, Q_t)-f(\cdot,Q_{t}^{\infty})\|_{\cF} &=\sup_{\x\in\cX} \|\grad_{\x}f(\x,Q_t) - \grad_{\x}f(\x,Q_t^{\infty})\|_*\\
    & = \|\Eover{\y_1\sim Q_t,\y_2\sim Q_t^{\infty}}{\grad_{\x}f(\x,\y_1) - \grad_{\x}f(\x,\y_2)}\|_*\\
    &\stackrel{(a)}{\leq} LD,
\end{align*}
where $(a)$ follows from the smoothness assumption on $f$ and the fact that the diameter of $\cX$ is $D$. 
When $L$ is close to $0$, this bound can be much better than the above bound. 
So we have
\[
\E{\|f(\cdot, Q_t)-f(\cdot,Q_{t}^{\infty})\|^2_{\cF}|\zeta_t} \leq \min\left(\frac{96d\normcompf^2\normcompb^2G^2 \log(1+2m^{1/2})}{m} + \frac{8D^2L^2}{m}, L^2D^2\right).
\]
Using this, the regret of the $\x$ player can be bounded as follows
\begin{align*}
    \sup_{\x\in\cX} \E{\sum_{t=1}^Tf(P_t,Q_t) - f(\x,Q_t)} &\leq \eta D +  \frac{24\stability D^2L^2T}{\eta m}\\
    &\quad+\min\left(\frac{288d\stability\normcompf^2\normcompb^2G^2T \log(1+2m^{1/2})}{\eta m}, \frac{3CD^2L^2T}{\eta}\right)\\
    &\quad + \sum_{t=1}^T \frac{3\stability }{2\eta}\E{\|f(\cdot,Q_t^{\infty})-f(\cdot,\Tilde{Q}_{t-1}^{\infty})\|_{\cF}^2}\\
    &\quad -\frac{\eta}{2\stability }\sum_{t=1}^T \E{\gammaF(P_t^{\infty},\Tilde{P}_{t-1}^{\infty})^2}.
\end{align*}
A similar analysis shows that the regret of $\y$ player can be bounded as
\begin{align*}
    \sup_{\y\in\cY} \E{\sum_{t=1}^T f(P_t,\y)- f(P_t,Q_t)} &\leq \eta D +  \frac{24\stability D^2L^2T}{\eta m}\\
    &\quad+\min\left(\frac{288d\stability\normcompf^2\normcompb^2G^2T \log(1+2m^{1/2})}{\eta m}, \frac{3CD^2L^2T}{\eta}\right)\\
    &\quad + \sum_{t=1}^T \frac{3\stability }{2\eta}\E{\|f(P_t^{\infty},\cdot)-f(\Tilde{P}_{t-1}^{\infty},\cdot)\|_{\cF'}^2}\\
    &\quad -\frac{\eta}{2\stability }\sum_{t=1}^T \E{\gammaFp(Q_t^{\infty},\Tilde{Q}_{t-1}^{\infty})^2},
\end{align*}
Summing the above two inequalities, we get
\begin{align*}
    \sup_{\x\in\cX,\y\in\cY} \E{\sum_{t=1}^T f(P_t,\y)- f(P,Q_t)}&\leq 2\eta D + \frac{48\stability D^2L^2T}{\eta m}\\
    &\quad+\min\left(\frac{576d\stability\normcompf^2\normcompb^2G^2T \log(1+2m^{1/2})}{\eta m}, \frac{6CD^2L^2T}{\eta}\right)\\
    &\quad + \sum_{t=1}^T \frac{3\stability }{2\eta}\E{\|f(\cdot,Q_t^{\infty})-f(\cdot,\Tilde{Q}_{t-1}^{\infty})\|_{\cF}^2}\\
    &\quad + \sum_{t=1}^T \frac{3\stability }{2\eta}\E{\|f(P_t^{\infty},\cdot)-f(\Tilde{P}_{t-1}^{\infty},\cdot)\|_{\cF'}^2}\\
    &\quad -\frac{\eta}{2\stability }\sum_{t=1}^T \left(\E{\gammaF(P_t^{\infty},\Tilde{P}_{t-1}^{\infty})^2} + \E{\gammaFp(Q_t^{\infty},\Tilde{Q}_{t-1}^{\infty})^2}\right).
\end{align*}
From our assumption on smoothness of $f$, we have
\[
\|f(\cdot,Q_t^{\infty})-f(\cdot,\Tilde{Q}_{t-1}^{\infty})\|_{\cF} \leq L \gammaFp(Q_t^{\infty},\Tilde{Q}_{t-1}^{\infty}),\quad \|f(P_t^{\infty},\cdot)-f(\Tilde{P}_{t-1}^{\infty},\cdot)\|_{\cF'} \leq L \gammaF(P_t^{\infty},\Tilde{P}_{t-1}^{\infty}).
\]
To see this, consider the following
\begin{align*}
    \|f(\cdot,Q_t^{\infty})-f(\cdot,\Tilde{Q}_{t-1}^{\infty})\|_{\cF}&=\sup_{\x\in\cX}\|\grad_{\x}f(\x,Q_t^{\infty})-\grad_{\x}f(\x,\Tilde{Q}_{t-1}^{\infty})\|_{*}\\
    &= \sup_{\x\in\cX, \|\u\|\leq 1} \iprod{\u}{\grad_{\x}f(\x,Q_t^{\infty})-\grad_{\x}f(\x,\Tilde{Q}_{t-1}^{\infty})}\\
    &= \sup_{\x\in\cX, \|\u\|\leq 1} \Eover{\y\sim Q_t^{\infty}}{\iprod{\u}{\grad_{\x}f(\x,\y)}}-\Eover{\y\sim \Tilde{Q}_{t-1}^{\infty}}{\iprod{\u}{\grad_{\x}f(\x,\y)}}\\
    &\leq \gammaFp(Q_t^{\infty}, \Tilde{Q}_{t-1}^{\infty})\sup_{\x\in\cX, \|\u\|\leq 1}\|\iprod{\u}{\grad_{\x}f(\x,\cdot)}\|_{\cF'}\\
    &= \gammaFp(Q_t^{\infty}, \Tilde{Q}_{t-1}^{\infty})\sup_{\x\in\cX, \|\u\|\leq 1}\left(\sup_{\y_1\neq \y_2 \in \cY}\frac{|\iprod{\u}{\grad_{\x}f(\x,\y_1)}-\iprod{\u}{\grad_{\x}f(\x,\y_2)}|}{\|\y_1-\y_2\|}\right)\\
    &\leq \gammaFp(Q_t^{\infty}, \Tilde{Q}_{t-1}^{\infty})\sup_{\x\in\cX}\left(\sup_{\y_1\neq \y_2 \in \cY}\frac{\|\grad_{\x}f(\x,\y_1)-\grad_{\x}f(\x,\y_2)\|_{*}}{\|\y_1-\y_2\|}\right)\\
    &\stackrel{(a)}{\leq} L  \gammaFp(Q_t^{\infty}, \Tilde{Q}_{t-1}^{\infty}),
\end{align*}
where $(a)$ follows from smoothness of $f$.
Substituting this in the previous equation, and choosing $\eta > \sqrt{3}\stability L$, we get
\begin{align*}
    \sup_{\x\in\cX,\y\in\cY} \E{\sum_{t=1}^T f(P_t,\y)- f(P,Q_t)}&\leq 2\eta D + \frac{48\stability D^2L^2T}{\eta m}\\
    &\quad+\min\left(\frac{576d\stability\normcompf^2\normcompb^2G^2T \log(1+2m^{1/2})}{\eta m}, \frac{6CD^2L^2T}{\eta}\right)\\
\end{align*}
This finishes the proof of the Theorem.
\end{proof}
\begin{remark}
We note that a similar result can be obtained for other choice of function classes such as the set of all bounded and Lipschitz functions. The only difference between proving such a result vs. proving Theorem~\ref{thm:oftpl_noncvx_smooth_games} is in bounding $\|f(\cdot, Q_t)-f(\cdot,Q_{t}^{\infty})\|_{\cF}$.
\end{remark}
\subsection{Proof of Theorem~\ref{thm:oftpl_noncvx_smooth_games_exp}}
To prove the Theorem, we instantiate Theorem~\ref{thm:oftpl_noncvx_smooth_games} for exponential noise distribution. Recall, in Corollary~\ref{cor:ftpl_noncvx_exp}, we showed that $\Eover{\sigma}{\|\sigma\|_{\cF}} = \eta\log{d}$ and OFTPL is $\order{d^2D\eta^{-1}}$ stable w.r.t $\|\cdot\|_{\cF}$, for this choice of perturbation distribution (similar results hold for $(\cF',\|\cdot\|_{\cF'})$). Substituting this in the bounds of Theorem~\ref{thm:oftpl_noncvx_smooth_games} and using the fact that $\normcompf=\sqrt{d},\normcompb = 1$, we get
\begin{align*}
 \sup_{\x\in\cX,\y\in\cY}\E{f\left(\frac{1}{T}\sum_{t=1}^TP_t,\y\right) - f\left(\x,\frac{1}{T}\sum_{t=1}^TQ_t\right)} &= \order{\frac{\eta D\log{d}}{T} + \frac{d^2 D^3L^2}{\eta m}}\\
 &\quad +\order{\min\left\lbrace\frac{d^4DG^2 \log(2m)}{\eta m}, \frac{d^2D^3L^2}{\eta}\right\rbrace}.
\end{align*}
Choosing $\eta = 10d^2D(L+1), m=T$, we get
\begin{align*}
 \sup_{\x\in\cX,\y\in\cY}\E{f\left(\frac{1}{T}\sum_{t=1}^TP_t,\y\right) - f\left(\x,\frac{1}{T}\sum_{t=1}^TQ_t\right)} &= \order{\frac{d^2 D^2(L+1)\log{d}}{T}}\\
 &\quad +\order{\min\left\lbrace\frac{d^2G^2 \log(T)}{ LT}, D^2L\right\rbrace}.
\end{align*}
\section{Choice of Perturbation Distributions}
\label{sec:pert_dist_choice}
\paragraph{Regularization of some Perturbation Distributions.}
We first study the regularization effect of various perturbation distributions. 
Table~\ref{tab:reg_linf} presents the regularizer $R$ corresponding to some commonly  used perturbation distributions, when the action space $\cX$ is $\ell_{\infty}$ ball of radius $1$ centered at origin.
\begin{table}[H]
\begin{center}
 \begin{tabular}{||c |c||} 
 \hline
 Perturbation Distribution $\pertdist$&  Regularizer \\ [0.5ex] 
 \hline\hline
 Uniform over $[0,\eta]^d$ &  $\eta \|\x-1\|_2^2$ \\ 
 \hline
 Exponential $P(\sigma > t)=\exp(-t/\eta)$ &  $\displaystyle\sum_i\eta(\x_i+1)\left[\log(\x_i+1) - (1+\log 2)\right]$ \\
 \hline 
 Gaussian $P(\sigma =t)\propto e^{-t^2/2\eta^2}$ &  $\displaystyle\sum_i \sup_{u\in\mathbb{R}} u\left[\x_i-1+2F(-u/\eta)\right]$ \\ 
 \hline
\end{tabular}
\caption{Regularizers corresponding to various perturbation distributions used in FTPL when the action space $\cX$ is $\ell_{\infty}$ ball of radius $1$ centered at origin. Here, $F$ is the CDF of a standard normal random variable.}
\label{tab:reg_linf}
\end{center}
\end{table}

\paragraph{Dimension independent rates.} Recall, the OFTPL algorithm described in Algorithm~\ref{alg:oftpl_cvx_games} converges at $\order{d/T}$ rate to a Nash equilibrium of smooth convex-concave games (see Theorem~\ref{thm:oftpl_cvx_smooth_games_uniform}). We now show that for certain constraint sets $\cX, \cY$, by choosing the perturbation distributions appropriately, the dimension dependence in the rates can \emph{potentially} be removed.

Suppose the action set is $\cX=\{\x:\|\x\|_2 \leq 1\}$. Suppose the perturbation distribution $\pertdist$ is the multivariate Gaussian distribution with mean $0$ and covariance $\eta^{2}I_{d\times d}$, where  $I_{d\times d}$ is the identity matrix.    We now try to explicitly compute the reguralizer corresponding to this perturbation distribution and action set. Define function $\Psi$ as
\[
\Psi(f) = \Eover{\sigma}{\max_{\x\in\cX} \iprod{f+\sigma}{\x}}= \Eover{\sigma}{\|f+\sigma\|_2}.
\]
As shown in Proposition~\ref{prop:ftpl_ftrl_connection}, the regularizer $R$ corresponding to any perturbation distribution  is given by the Fenchel conjugate of $\Psi$ 
\[
R(\x) = \sup_{f}\iprod{f}{\x} - \Psi(f).
\]
Since getting an exact expression for $R$ is a non-trivial task, we only compute an \emph{approximate expression} for $R$. Consider the high dimensional setting (\emph{i.e.,} very large $d$).
In this setting, $\|f+\sigma\|_2$, for $\sigma$ drawn from $\cN(0,\eta^2I_{d\times d})$, can be approximated as follows
\begin{align*}
    \|f+\sigma\|_2 &= \sqrt{\|f\|_2^2 + \|\sigma\|_2^2 + 2\iprod{f}{\sigma}}\\
    &\stackrel{(a)}{\approx} \sqrt{\|f\|_2^2 + \eta^2d + 2\iprod{f}{\sigma}}\\
    &\stackrel{(b)}{\approx} \sqrt{\|f\|_2^2 + \eta^2d}
\end{align*}
where $(a)$ follows from the fact that $\|\sigma\|_2^2$ is highly concentrated around $\eta^2d$~\citep{hsu2012tail}. To be precise
\[
\mathbb{P}(\|\sigma\|_2^2 \geq \eta^2(d + 2\sqrt{dt} + 2t)) \leq e^{-t}.
\]
A similar bound holds for the lower tail. Approximation $(b)$ follows from the fact that $\iprod{f}{\sigma}$ is a Gaussian random variable with mean $0$ and variance $\eta^2\|f\|_2^2$, and with high probability its magnitude is upper bounded by $\Tilde{O}(\eta\|f\|_2)$. Since $\eta\|f\|_2 \ll \sqrt{d}\eta\|f\|_2 \leq \|f\|_2^2 + \eta^2d$, approximation $(b)$ holds. This shows that $\Psi(f)$ can be approximated as 
\[
\Psi(f) \approx \sqrt{\|f\|_2^2 + \eta^2d}.
\]
Using this approximation, we now compute the reguralizer corresponding to the perturbation distribution
\[
R(\x) = \sup_{f}\iprod{f}{\x} - \Psi(f) \approx \sup_{f}\iprod{f}{\x} - \sqrt{\|f\|_2^2 + \eta^2d} = -\eta\sqrt{d}\sqrt{1-\|\x\|_2^2}.
\]
This shows that $R$ is $\eta\sqrt{d}$-strongly convex w.r.t $\|\cdot\|_2$ norm. Following duality between strong convexity and strong smoothness, $\Psi(f)$ is $(\eta^2d)^{-1/2}$ strongly smooth w.r.t $\|\cdot\|_2$ norm and satisfies
\[
\|\grad\Psi(f_1)-\grad\Psi(f_2)\|_2 \leq (\eta^2d)^{-1/2}\|f_1-f_2\|_2.
\]
This shows that the predictions of OFTPL are $(\eta^2d)^{-1/2}$ stable w.r.t $\|\cdot\|_2$ norm. 
We now instantiate Theorem~\ref{thm:oftpl_cvx_smooth_games} for this perturbation distribution and for constraint sets which are unit balls centered at origin, and use the above stability bound, together with the fact that $\Eover{\sigma}{\|\sigma\|_2} \approx \eta\sqrt{d}$. Suppose $f$ is smooth w.r.t $\|\cdot\|_2$ norm and satisfies
\begin{align*}
    \|\grad_\x f(\x,\y)-\grad_\x f(\x',\y')\|_{2}+ \|\grad_\y f(\x,\y)-\grad_\y f(\x',\y')\|_{2} \leq L\|\x-\x'\|_2 + L\|\y-\y'\|_2.
\end{align*}
Then Theorem~\ref{thm:oftpl_cvx_smooth_games} gives us the following rates of convergence to a NE
\begin{align*}
 \sup_{\x\in\cX,\y\in\cY}\E{f\left(\frac{1}{T}\sum_{t=1}^T\x_t,\y\right) - f\left(\x,\frac{1}{T}\sum_{t=1}^T\y_t\right)}\leq & \frac{ 2L_1}{m}+\frac{2\eta\sqrt{d}}{T} \\
 &+ \frac{20 L^2}{\eta\sqrt{d}} \left(\frac{ 1}{m}\right)+10L\left(\frac{5 L}{\eta\sqrt{d}}\right)^{\infty}
\end{align*}
Choosing $\eta = 6L/\sqrt{d}, m=T$, we get $\order{\frac{L}{T}}$ rate of convergence. Although, these rates are dimension independent, we note that our stability bound is only approximate. More accurate analysis is needed to actually claim that Algorithm~\ref{alg:oftpl_cvx_games} achieves dimension independent rates in this setting.
That being said, for general constraints sets, we believe  one can get dimension independent rates by choosing the perturbation distribution appropriately.

\section{High Probability Bounds}
\label{sec:hp_bounds}
In this section, we provide high probability bounds for Theorems~\ref{thm:oftpl_regret},~\ref{thm:oftpl_cvx_smooth_games_uniform}. Our results rely on the following concentration inequalities.
\begin{proposition}[\citet{jin2019short}]
\label{prop:azuma}
Let $X_1,\dots X_K$ be $K$ independent mean $0$ vector-valued random variables such that $\|X_i\|_2\leq B_i$. Then
\[
\mathbb{P}\left(\norm{\sum_{i=1}^KX_i}_2  \geq t\right) \leq 2\exp\left(-c\frac{t^2}{\sum_{i=1}^KB_i^2}\right),
\]
where $c>0$ is a universal constant.
\end{proposition}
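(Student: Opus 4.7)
This is the standard vector-valued (Hilbert-space) Hoeffding-Azuma inequality, and the main subtlety is getting the dimension-free constant: a coordinatewise reduction followed by a union bound would introduce a factor of $d$, so one must exploit the uniform $2$-smoothness of the Hilbert norm directly. Let $S_k = \sum_{i=1}^k X_i$ and let $\mathcal{F}_k$ be the $\sigma$-algebra generated by $X_1,\dots,X_k$. By independence and the mean-zero assumption, $(S_k)$ is a Hilbert-space martingale with bounded increments $\|S_k - S_{k-1}\|_2 = \|X_k\|_2 \leq B_k$. My plan is to control the exponential moment $\mathbb{E}\bigl[\exp(\lambda\|S_K\|_2^2)\bigr]$ for a suitably small $\lambda$ and then apply Markov's inequality.

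The expansion
\[
\|S_k\|_2^2 \;=\; \|S_{k-1}\|_2^2 \,+\, 2\langle S_{k-1}, X_k\rangle \,+\, \|X_k\|_2^2
\]
is the key identity (it encodes $2$-smoothness of $\|\cdot\|_2^2$ with constant $1$). Conditioning on $\mathcal{F}_{k-1}$, the one-dimensional random variable $\langle S_{k-1}, X_k\rangle$ is mean zero and bounded in magnitude by $\|S_{k-1}\|_2 B_k$, so Hoeffding's lemma gives
\[
\mathbb{E}\!\left[\exp\bigl(2\lambda \langle S_{k-1}, X_k\rangle\bigr)\,\big|\,\mathcal{F}_{k-1}\right] \;\leq\; \exp\bigl(2\lambda^2\|S_{k-1}\|_2^2\, B_k^2\bigr),
\]
while $\|X_k\|_2^2 \leq B_k^2$ deterministically. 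Combining these yields a one-step recursion of the form
\[
\mathbb{E}\!\left[\exp\bigl(\lambda\|S_k\|_2^2\bigr)\,\big|\,\mathcal{F}_{k-1}\right] \;\leq\; \exp(\lambda B_k^2)\cdot \exp\!\left(\lambda(1 + 2\lambda B_k^2)\,\|S_{k-1}\|_2^2\right).
\]
Choosing $\lambda$ small enough that $1+2\lambda B_k^2 \leq 1/(1-2\lambda B_k^2)$ for every $k$, and iterating, I would obtain a bound of the form $\mathbb{E}[\exp(\lambda\|S_K\|_2^2)] \leq \exp\bigl(c'\lambda \sum_k B_k^2\bigr)$ for a universal constant $c'$, valid for all $\lambda \lesssim 1/\sum_k B_k^2$. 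A Chernoff argument in $\lambda$ then gives $\mathbb{P}(\|S_K\|_2 \geq t) \leq 2\exp(-ct^2/\sum B_i^2)$.

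The main obstacle is the recursion: the factor $(1+2\lambda B_k^2)$ in front of $\|S_{k-1}\|_2^2$ can compound badly, so $\lambda$ must be tuned as a function of $\sum_i B_i^2$ (not per step) and the telescoping product controlled carefully. A cleaner alternative is to appeal to Pinelis' martingale inequality in uniformly $2$-smooth Banach spaces: a Hilbert space has $2$-smoothness constant $1$ via the parallelogram identity, so Pinelis' theorem yields directly
\[
\mathbb{P}\!\left(\sup_{1\leq k\leq K}\|S_k\|_2 \geq t\right) \;\leq\; 2\exp\!\left(-\,\frac{t^2}{2\sum_{i=1}^K B_i^2}\right),
\]
which is exactly the stated bound with explicit constant $c=1/2$ and in fact the stronger maximal version. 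This reduces the entire argument to verifying the hypotheses of Pinelis' theorem, which are immediate here.
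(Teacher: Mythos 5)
The paper itself does not prove this proposition --- it is imported verbatim from \citet{jin2019short} and invoked as a black box in the high-probability bounds of Appendix~\ref{sec:hp_bounds} --- so there is no in-paper argument to compare against; what you have written is a reconstruction. Your argument is correct. In the conditional-MGF route, the recursion does close if you let the coefficient vary with $k$: set $\lambda_k = \lambda\big/\big(1 - 2\lambda\sum_{j>k}B_j^2\big)$ so that $\lambda_K=\lambda$, then the inequality you flag, $1+2\lambda_k B_k^2 \le (1-2\lambda_k B_k^2)^{-1}$, gives exactly $\lambda_k(1+2\lambda_k B_k^2)\le\lambda_{k-1}$, the telescoping goes through without compounding, and taking $\lambda = 1/(4\sum_i B_i^2)$ yields $\mathbb{E}\big[\exp(\lambda\|S_K\|_2^2)\big]\le e^{1/2}$ and hence $\mathbb{P}(\|S_K\|_2\ge t)\le 2\exp\!\big(-t^2/(4\sum_i B_i^2)\big)$. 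The Pinelis route you cite as an alternative is indeed the standard shortcut: a Hilbert space is $(2,1)$-smooth, and his martingale inequality directly gives the stated bound with $c=1/2$, together with the maximal version, at the cost of invoking a nontrivial external theorem. Both are sound; the first is self-contained and elementary, the second is shorter. Since the paper only needs the result with an unspecified universal constant $c$, either suffices, and your identification of the coordinatewise/union-bound pitfall (which would smuggle in a dimension factor) is the right reason to insist on a genuinely vector-valued argument.
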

We also need the following concentration inequality for martingales.
\begin{proposition}[\citet{wainwright2019high}]
\label{prop:martingale_diff}
Let $X_1,\dots X_K \in \mathbb{R}$ be a martingale difference sequence, where $\E{X_i|\cF_{i-1}} = 0$. Assume that $X_i$ satisfy the following tail condition, for some scalar $B_i>0$
\[
\mathbb{P}\left(\Big|\frac{X_i}{B_i}\Big| \geq z\Big| \cF_{i-1}\right) \leq 2\exp(-z^2).
\]
Then 
\[
\mathbb{P}\left(\Big|\sum_{i=1}^K X_i\Big| \geq z\right)\leq 2\exp\left(-c\frac{z^2}{\sum_{i=1}^KB_i^2}\right),
\]
where $c>0$ is a universal constant.
\end{proposition}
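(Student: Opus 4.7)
The plan is to establish a conditional sub-Gaussian MGF bound for each $X_i$, then chain these MGF bounds across $i$ using the tower property, and finally apply the standard Chernoff argument. The tail hypothesis $\mathbb{P}(|X_i/B_i| \geq z \mid \cF_{i-1}) \leq 2e^{-z^2}$ says precisely that $X_i$ is conditionally sub-Gaussian with parameter $O(B_i)$ given $\cF_{i-1}$, so one expects an inequality of the form $\E{e^{\lambda X_i} \mid \cF_{i-1}} \leq \exp(c_1 \lambda^2 B_i^2)$ for some absolute $c_1$.

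First I would derive the conditional MGF bound. Write
\[
\E{e^{\lambda X_i} \mid \cF_{i-1}} = 1 + \sum_{k=2}^\infty \frac{\lambda^k}{k!}\E{X_i^k \mid \cF_{i-1}},
\]
using $\E{X_i \mid \cF_{i-1}}=0$, and bound each conditional moment via the tail hypothesis using the layer-cake identity $\E{|X_i|^k \mid \cF_{i-1}} = k\int_0^\infty z^{k-1}\mathbb{P}(|X_i| \geq z \mid \cF_{i-1}) dz$, which plugging in the Gaussian tail yields $\E{|X_i|^k \mid \cF_{i-1}} \leq C_1 B_i^k \Gamma(k/2+1)$. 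Summing the series gives the desired sub-Gaussian MGF bound $\E{e^{\lambda X_i} \mid \cF_{i-1}} \leq \exp(c_1 \lambda^2 B_i^2)$ for all $\lambda \in \mathbb{R}$.

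Next I would iterate this bound. Let $S_K = \sum_{i=1}^K X_i$. By the tower property,
\[
\E{e^{\lambda S_K}} = \E{e^{\lambda S_{K-1}} \E{e^{\lambda X_K} \mid \cF_{K-1}}} \leq \exp(c_1\lambda^2 B_K^2)\,\E{e^{\lambda S_{K-1}}},
\]
and unrolling this gives $\E{e^{\lambda S_K}} \leq \exp\bigl(c_1 \lambda^2 \sum_{i=1}^K B_i^2\bigr)$. Applying the Chernoff bound $\mathbb{P}(S_K \geq z) \leq e^{-\lambda z}\E{e^{\lambda S_K}}$ and optimizing over $\lambda > 0$ by the choice $\lambda = z / (2c_1 \sum_i B_i^2)$ yields $\mathbb{P}(S_K \geq z) \leq \exp\bigl(-z^2 / (4c_1 \sum_i B_i^2)\bigr)$; running the same argument on $-S_K$ and union bounding produces the two-sided bound with constant $c = 1/(4c_1)$.

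The main obstacle is carrying out the first step carefully: the conditional tail bound must be converted into a conditional MGF bound that is uniform in the realization of $\cF_{i-1}$, so one has to ensure the layer-cake/moment calculation produces a deterministic (not just almost-sure) upper bound on the conditional MGF. Beyond this, all remaining steps are the standard Azuma--Hoeffding template; the only other bookkeeping issue is tracking universal constants to end up with a single $c>0$ in the stated form.
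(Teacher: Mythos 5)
The paper does not prove this proposition; it is quoted verbatim as a black-box result from Wainwright's \emph{High-Dimensional Statistics}, so there is no ``paper's own proof'' to compare against. Your outline is the standard derivation and is essentially the one found in that reference: convert the conditional Gaussian-type tail bound into a conditional sub-Gaussian MGF bound via moment comparison, iterate with the tower property to get $\E{e^{\lambda S_K}}\leq\exp(c_1\lambda^2\sum_i B_i^2)$, then Chernoff and a two-sided union bound. Two small bookkeeping points worth keeping explicit if you flesh it out: (i) after dropping the $k=1$ term you should pass to $|\E{X_i^k\mid\cF_{i-1}}|\leq\E{|X_i|^k\mid\cF_{i-1}}$ before applying the layer-cake bound, so odd powers of $\lambda$ are handled correctly; and (ii) the resulting series bound $1+C_1\sum_{k\geq2}\frac{t^k}{k!}\Gamma(k/2+1)\leq\exp(c_1 t^2)$ is itself a nontrivial (though standard) estimate, equivalent to the tail-to-MGF direction of the sub-Gaussian characterization theorem, and deserves either a citation or a short argument of its own rather than being treated as immediate. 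With those caveats, the argument is sound.
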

\subsection{Online Convex Learning}
In this section, we present a high probability version of Theorem~\ref{thm:oftpl_regret}.
\begin{theorem}
\label{thm:oftpl_regret_hp}
Suppose the perturbation distribution $\pertdist$ is absolutely continuous w.r.t Lebesgue measure. 
Let $D$ be the diameter of $\cX$ w.r.t $\|\cdot\|$, which is defined as $D= \sup_{\x_1,\x_2\in\cX} \|\x_1-\x_2\|.$ 
Let \mbox{$\eta=\Eover{\sigma}{\|\sigma\|_*},$} and suppose the predictions of OFTPL are $\stability \eta^{-1}$-stable w.r.t $\|\cdot\|_*$, where $\stability $ is a constant that depends on the set $\mathcal{X}.$
Suppose, the sequence of loss functions $\{f_t\}_{t=1}^T$ are $G$-Lipschitz w.r.t $\|\cdot\|$ and satisfy $\sup_{\x \in \mathcal{X}} \|\grad f_t(\x)\|_* \leq G$.
Moreover, suppose $\{f_t\}_{t=1}^T$ are Holder smooth and satisfy
\begin{align*}
    \forall \x_1,\x_2\in\cX\quad \|\grad f_t(\x_1)-\grad f_t(\x_2)\|_* \leq L\|\x_1-\x_2\|^{\alpha},
\end{align*}
for some constant $\alpha \in [0,1]$.
Then the regret of Algorithm~\ref{alg:oftpl_cvx} satisfies the following with probability at least $1-\delta$
\begin{align*}
\sup_{\x\in\cX}\sum_{t=1}^Tf_t(\x_t) - f_t(\x) 
&\leq   \eta D  + \sum_{t=1}^T\frac{\stability}{2\eta} \|\grad_t-g_{t}\|_{*}^2-\sum_{t=1}^T \frac{\eta}{2\stability }\|\x_t^{\infty}-\Tilde{\x}_{t-1}^{\infty}\|^2\\
&\quad+cGD\sqrt{\frac{T\log{2/\delta}}{m}} +  cLT\left(\frac{\normcompf^2 \normcompb^2 D^2\log{4T/\delta}}{m}\right)^{\frac{1+\alpha}{2}},
\end{align*}
where $c$ is a universal constant, $\x_t^{\infty} = \E{\x_t|g_t,f_{1:t-1}, \x_{1:t-1}}$ and $\Tilde{\x}_{t-1}^{\infty} = \E{\Tilde{\x}_{t-1}|f_{1:t-1}, \x_{1:t-1}}$ and $\Tilde{\x}_{t-1}$ denotes the prediction in the $t^{th}$ iteration of Algorithm~\ref{alg:oftpl_cvx}, if guess $g_{t}=0$ was used. Here, $\normcompf, \normcompb$ denote the norm compatibility constants of $\|\cdot\|.$
\end{theorem}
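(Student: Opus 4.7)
The plan is to follow the proof of Theorem~\ref{thm:oftpl_regret} step-by-step, but stop short of taking expectation. All manipulations in that proof up to the penultimate line are pathwise (deterministic), including the FTPL-FTRL duality, the Bregman telescoping, and the deterministic bound $B(\x,\Tilde{\x}_0^\infty) - B(\x,\Tilde{\x}_T^\infty) \leq \eta D$. After AM-GM on $\|\x_t^\infty - \Tilde{\x}_t^\infty\|\|\grad_t - g_t\|_*$, this yields the pathwise inequality
\begin{align*}
\sum_{t=1}^T f_t(\x_t) - f_t(\x) &\leq \sum_{t=1}^T \iprod{\x_t - \x_t^\infty}{\grad_t} + \eta D + \sum_{t=1}^T \frac{\stability}{2\eta}\|\grad_t - g_t\|_*^2 \\
&\quad -\sum_{t=1}^T \frac{\eta}{2\stability} \|\x_t^\infty - \Tilde{\x}_{t-1}^\infty\|^2.
\end{align*}
All that remains is to bound the stochastic residual $\sum_t \iprod{\x_t-\x_t^\infty}{\grad_t}$ with high probability, and the two extra terms in the statement are exactly what come out of this.

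Using the Holder smoothness of $f_t$ exactly as in the expectation proof, I split $\iprod{\x_t-\x_t^\infty}{\grad_t} \leq \iprod{\x_t-\x_t^\infty}{\grad f_t(\x_t^\infty)} + L\|\x_t-\x_t^\infty\|^{1+\alpha}$. Let $\cF_{t-1}$ denote the filtration capturing all randomness up to but not including the coin flips used for $\x_t$; then $g_t, f_t$ and $\x_t^\infty$ are $\cF_{t-1}$-measurable, while $\x_t$ is the average of $m$ i.i.d.\ samples with $\E{\x_t|\cF_{t-1}}=\x_t^\infty$. For the first piece, $X_t \defeq \iprod{\x_t-\x_t^\infty}{\grad f_t(\x_t^\infty)}$ is a martingale difference, and since $X_t = \tfrac{1}{m}\sum_{j=1}^m \iprod{\x_{t,j}-\x_t^\infty}{\grad f_t(\x_t^\infty)}$ is an average of $m$ independent bounded zero-mean terms (each bounded in magnitude by $DG$), Hoeffding's lemma shows that conditionally on $\cF_{t-1}$, $X_t$ is sub-Gaussian with parameter $O(D^2G^2/m)$. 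Plugging $B_t = O(DG/\sqrt{m})$ into Proposition~\ref{prop:martingale_diff} then yields $|\sum_t X_t| \leq cGD\sqrt{T\log(2/\delta)/m}$ with probability at least $1-\delta/2$.

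For the Holder residual, I bound $\|\x_t-\x_t^\infty\|$ uniformly in $t$. Conditionally on $\cF_{t-1}$, the vectors $\{\x_{t,j}-\x_t^\infty\}_{j=1}^m$ are independent, mean zero, and satisfy $\|\x_{t,j}-\x_t^\infty\|_2 \leq \normcompb D$, so Proposition~\ref{prop:azuma} applied to their sum gives $\|\x_t-\x_t^\infty\|_2 \leq c'\normcompb D\sqrt{\log(4T/\delta)/m}$ with probability at least $1-\delta/(2T)$ for each fixed $t$. Converting via $\|\cdot\| \leq \normcompf\|\cdot\|_2$ and taking a union bound over $t=1,\dots,T$ produces, with probability at least $1-\delta/2$, the uniform-in-$t$ bound $\|\x_t-\x_t^\infty\|^{1+\alpha} \leq \bigl(c'\normcompf\normcompb D\bigr)^{1+\alpha}\bigl(\log(4T/\delta)/m\bigr)^{(1+\alpha)/2}$, and summing gives the claimed $cLT\bigl(\normcompf^2\normcompb^2 D^2 \log(4T/\delta)/m\bigr)^{(1+\alpha)/2}$ term. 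A final union bound over the two stochastic events, combined with the pathwise inequality above, yields the stated result.

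The main obstacle is purely bookkeeping: one must verify that $f_t$ (potentially chosen by an adaptive adversary) is measurable with respect to the pre-$\x_t$ filtration, and that the $m$ perturbations $\{\sigma_{t,j}\}_{j=1}^m$ are genuinely i.i.d.\ given the past, so that the conditional Hoeffding/Azuma arguments go through with the crucial $1/m$ variance improvement rather than a naive pathwise bound of $DG$. Both are immediate from the sampling structure of Algorithm~\ref{alg:oftpl_cvx}, but they determine whether one gets the sharp $\sqrt{T/m}$ and $m^{-(1+\alpha)/2}$ dependencies or the coarser $\sqrt{T}$ and $1$ dependencies.
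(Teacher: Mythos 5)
Your proof follows the paper's own approach almost line for line: keep the pathwise inequality from Theorem~\ref{thm:oftpl_regret} before taking expectation, split $\iprod{\x_t-\x_t^\infty}{\grad_t}$ via Hölder smoothness into a martingale-difference term and a $\|\x_t-\x_t^\infty\|^{1+\alpha}$ term, bound the former via Propositions~\ref{prop:azuma}--\ref{prop:martingale_diff} (each $\zeta_t$ an average of $m$ conditionally i.i.d.\ terms bounded by $GD$), bound the latter via Proposition~\ref{prop:azuma} on $\x_t-\x_t^\infty$ with a union bound over $t$, and finish with a union bound over the two events. The measurability and filtration bookkeeping you flag is handled identically in the paper (its $\xi_i = \{g_{i+1}, f_{i+1}, \x_i\}$ encodes exactly your ``pre-$\x_t$'' filtration), so this is the same proof.
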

\begin{proof}
Our proof uses the same notation and similar arguments as in the proof Theorem~\ref{thm:oftpl_regret}. Recall, in Theorem~\ref{thm:oftpl_regret} we showed that the regret of OFTPL is upper bounded by 
\begin{align*}
\sum_{t=1}^Tf_t(\x_t) - f_t(\x) &\leq \sum_{t=1}^T\iprod{\x_t-\x_t^{\infty}}{\grad_t} +  \eta D + \sum_{t=1}^T \|\x_t^{\infty}-\Tilde{\x}_t^{\infty}\|\|\grad_t-g_{t}\|_{*}\\
&\quad -\frac{\eta}{2\stability }\sum_{t=1}^T\left(\|\Tilde{\x}_t^{\infty}-\x_t^{\infty}\|^2 + \|\x_t^{\infty}-\Tilde{\x}_{t-1}^{\infty}\|^2\right)\\
&\leq \sum_{t=1}^T\iprod{\x_t-\x_t^{\infty}}{\grad_t} +  \eta D + \sum_{t=1}^T\frac{\stability}{2\eta} \|\grad_t-g_{t}\|_{*}^2-\sum_{t=1}^T \frac{\eta}{2\stability }\|\x_t^{\infty}-\Tilde{\x}_{t-1}^{\infty}\|^2.
\end{align*}
From Holder's smoothness assumption, we have
\[
\iprod{\x_t-\x_t^{\infty}}{\grad_t - \grad f_t(\x_t^{\infty})} \leq L\|\x_t-\x_t^{\infty}\|^{1+\alpha}.
\]
Substituting this in the previous bound gives us
\begin{align*}
\sum_{t=1}^Tf_t(\x_t) - f_t(\x) 
&\leq  \underbrace{\sum_{t=1}^T\iprod{\x_t-\x_t^{\infty}}{\grad f_t(\x_t^{\infty})}}_{S_1}+\sum_{t=1}^TL\underbrace{\|\x_t-\x_t^{\infty}\|^{1+\alpha}}_{S_2} +  \eta D \\
&\quad + \sum_{t=1}^T\frac{\stability}{2\eta} \|\grad_t-g_{t}\|_{*}^2-\sum_{t=1}^T \frac{\eta}{2\stability }\|\x_t^{\infty}-\Tilde{\x}_{t-1}^{\infty}\|^2.
\end{align*}
We now provide high probability bounds for $S_1$ and $S_2$.
\paragraph{Bounding $S_1$.} Let $\xi_i = \{g_{i+1}, f_{i+1}, \x_{i}\}$ and let $\xi_{0:t}$ denote the union of sets $\xi_0,\xi_1,\dots, \xi_t$. Let $\zeta_t = \iprod{\x_t-\x_t^{\infty}}{\grad f_t(\x_t^{\infty})}$ with $\zeta_0=0$. Note that $\{\zeta_t\}_{t=0}^T$ is a martingale difference sequence w.r.t $\xi_{0:T}$. This is because $\E{\x_t|\xi_{0:t-1}} = \x_t^{\infty}$ and $\grad f_t(\x_t^{\infty})$ is a deterministic quantity conditioned on $\xi_{0:t-1}$. As a result $\E{\zeta_t|\xi_{0:t-1}}=0$. Moreover, conditioned on $\xi_{0:t-1}$, $\zeta_t$ is the average of $m$ independent mean $0$ random variables, each of which is bounded by $GD$. Using Proposition~\ref{prop:azuma}, we get
\[
\mathbb{P}\left(|\zeta_t| \geq s\Big| \xi_{0:t-1}\right) \leq 2\exp\left(-\frac{ms^2}{G^2D^2}\right).
\]
Using Proposition~\ref{prop:martingale_diff} on the martingale difference sequence $\{\zeta_t\}_{t=0}^T$, we get
\[
\mathbb{P}\left(\Big|\sum_{t=1}^T\zeta_t\Big| \geq s\right)\leq 2\exp\left(-c\frac{ms^2}{G^2D^2T}\right),
\]
where $c>0$ is a universal constant. 
This shows that with probability at least $1-\delta/2$, $S_1$ is upper bounded by
$ \order{\sqrt{\frac{G^2D^2T\log{\frac{2}{\delta}}}{m}}}.$
\paragraph{Bounding $S_2$.} Conditioned on $\{g_t,f_{1:t-1}, \x_{1:t-1}\}$, $\x_t-\x_t^{\infty}$ is the average of $m$ independent mean $0$ random variables which are bounded by $D$ in $\|\cdot\|$ norm. From our definition of norm compatibility constant $\normcompb$, this implies the random variables are bounded by $\normcompb D$ in $\|\cdot\|_2$.  Using Proposition~\ref{prop:azuma}, we get
\[
\mathbb{P}\left(\|\x_t-\x_t^{\infty}\|_2 \geq  \normcompb D\sqrt{\frac{c\log{4T/\delta}}{m}}\Bigg| g_t,f_{1:t-1}, \x_{1:t-1}\right) \leq  \frac{\delta}{2T}.
\]
Since the above bound holds for any set of $\{g_t,f_{1:t}, \x_{1:t-1}\}$, the same tail bound also holds without the conditioning. This shows that
\[
\mathbb{P}\left(\|\x_t-\x_t^{\infty}\|^{1+\alpha} \geq  \left(\frac{c\normcompf^2 \normcompb^2 D^2\log{4T/\delta}}{m}\right)^{\frac{1+\alpha}{2}}\right) \leq  \frac{\delta}{2T},
\]
where we converted back to $\|\cdot\|$ by introducing the norm compatibility constant $\normcompf$.
\paragraph{Bounding the regret.} Plugging the above high probability bounds for $S_1,S_2$ in the previous regret bound and using union bound, we get the following regret bound which holds with probability at least $1-\delta$
\begin{align*}
\sum_{t=1}^Tf_t(\x_t) - f_t(\x) 
&\leq  cGD\sqrt{\frac{T\log{2/\delta}}{m}}  +  cLT\left(\frac{\normcompf^2 \normcompb^2 D^2\log{4T/\delta}}{m}\right)^{\frac{1+\alpha}{2}} + \eta D \\
&\quad + \sum_{t=1}^T\frac{\stability}{2\eta} \|\grad_t-g_{t}\|_{*}^2-\sum_{t=1}^T \frac{\eta}{2\stability }\|\x_t^{\infty}-\Tilde{\x}_{t-1}^{\infty}\|^2,
\end{align*}
where $c>0$ is a universal constant.
\end{proof}

\subsection{Convex-Concave Games}
In this section, we present a high probability version of Theorem~\ref{thm:oftpl_cvx_smooth_games_uniform}.

\begin{theorem}
\label{thm:oftpl_cvx_smooth_games_uniform_hp}
Consider the minimax game in Equation~\eqref{eqn:minimax_game}. Suppose both the domains $\cX,\cY$ are compact subsets of $\mathbb{R}^d$, with diameter \mbox{$D = \max\{\sup_{\x_1,\x_2\in\cX} \|\x_1-\x_2\|_2, \sup_{\y_1,\y_2\in\cY} \|\y_1-\y_2\|_2\}$.} Suppose $f$ is convex in $\x$, concave in $\y$ and is Lipschitz w.r.t $\|\cdot\|_2$ and satisfies 
\begin{align*}
\max\left\lbrace\sup_{\x\in\cX, \y\in\cY} \|\grad_{\x}f(\x,\y)\|_{2}, \sup_{\x\in\cX,\y\in\cY}\|\grad_{\y}f(\x,\y)\|_{2}\right\rbrace\leq G.
\end{align*}
Moreover, suppose $f$ is  smooth w.r.t $\|\cdot\|_2$
\begin{align*}
    \|\grad_\x f(\x,\y)-\grad_\x f(\x',\y')\|_{2}+ \|\grad_\y f(\x,\y)-\grad_\y f(\x',\y')\|_{2} \leq L\|\x-\x'\|_2 + L\|\y-\y'\|_2.
\end{align*}
Suppose Algorithm~\ref{alg:oftpl_cvx_games} is used to solve the minimax game. Suppose the perturbation distributions used by both the players are the same and equal to the uniform distribution over $\{\x:\|\x\|_2 \leq (1+d^{-1})\eta\}.$  Suppose the guesses used by $\x,\y$ players in the $t^{th}$ iteration are $\grad_{\x}f(\Tilde{\x}_{t-1},\Tilde{\y}_{t-1}), \grad_{\y}f(\Tilde{\x}_{t-1},\Tilde{\y}_{t-1})$, where  $\Tilde{\x}_{t-1},\Tilde{\y}_{t-1}$ denote the predictions of $\x,\y$ players in the $t^{th}$ iteration, if guess $g_t = 0$ was used. If Algorithm~\ref{alg:oftpl_cvx_games} is run with $\eta = 6dD(L+1), m = T$, then the iterates $\{(\x_t,\y_t)\}_{t=1}^T$ satisfy the following bound with probability at least $1-\delta$
\begin{align*}
 \sup_{\x\in\cX,\y\in\cY}\left[f\left(\frac{1}{T}\sum_{t=1}^T\x_t,\y\right) - f\left(\x,\frac{1}{T}\sum_{t=1}^T\y_t\right)\right]=  \order{\frac{GD\sqrt{\log{\frac{8}{\delta}}}}{T}+ \frac{D^2(L+1)\left(d + \log{\frac{16T}{\delta}}\right)}{T}}.
\end{align*}
\end{theorem}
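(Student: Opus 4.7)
The plan is to invoke the high-probability single-player regret bound of Theorem~\ref{thm:oftpl_regret_hp} for each of the two players, sum the resulting inequalities, and then convert the joint regret into the duality gap using the standard identity
\[
\sup_{\x\in\cX,\y\in\cY}\left[f\!\left(\tfrac{1}{T}\sum_t \x_t,\y\right)-f\!\left(\x,\tfrac{1}{T}\sum_t\y_t\right)\right]\leq \tfrac{1}{T}\sup_{\x,\y}\sum_{t=1}^T\left[f(\x_t,\y)-f(\x,\y_t)\right],
\]
which follows from convex-concavity. For each player the gradient $\grad_t$ is $\grad_{\x}f(\x_t,\y_t)$ or $\grad_{\y}f(\x_t,\y_t)$ and the guess $g_t$ is $\grad_{\x}f(\Tilde{\x}_{t-1},\Tilde{\y}_{t-1})$ or $\grad_{\y}f(\Tilde{\x}_{t-1},\Tilde{\y}_{t-1})$. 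From Corollary~\ref{cor:ftpl_cvx_gaussian} applied to the uniform perturbation distribution, the stability constant is $\stability=dD$, so Theorem~\ref{thm:oftpl_regret_hp} (with $\alpha=1$, $\normcompf=\normcompb=1$) already gives the $\eta D$, the $cGD\sqrt{T\log(2/\delta)/m}$, and the $cLTD^2\log(4T/\delta)/m$ contributions that appear in the final bound.

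The remaining work is to handle the $\frac{\stability}{2\eta}\|\grad_t-g_t\|_2^2$ term and to cancel it against the negative $-\frac{\eta}{2\stability}\|\x_t^\infty-\Tilde{\x}_{t-1}^\infty\|_2^2$ (and the analogous $\y$-term). I would follow the five-way triangle-inequality decomposition used in the proof of Theorem~\ref{thm:oftpl_cvx_smooth_games}, namely
\begin{align*}
\|\grad_{\x}f(\x_t,\y_t)-\grad_{\x}f(\Tilde{\x}_{t-1},\Tilde{\y}_{t-1})\|_2^2
&\leq 5L^2\left(\|\x_t-\x_t^\infty\|_2^2+\|\y_t-\y_t^\infty\|_2^2\right.\\
&\qquad\left.+\|\Tilde{\x}_{t-1}-\Tilde{\x}_{t-1}^\infty\|_2^2+\|\Tilde{\y}_{t-1}-\Tilde{\y}_{t-1}^\infty\|_2^2\right)\\
&\quad+5\|\grad_{\x}f(\x_t^\infty,\y_t^\infty)-\grad_{\x}f(\Tilde{\x}_{t-1}^\infty,\Tilde{\y}_{t-1}^\infty)\|_2^2,
\end{align*}
and the analogous inequality for the $\y$-player. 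The last piece is bounded by $2L^2(\|\x_t^\infty-\Tilde{\x}_{t-1}^\infty\|_2^2+\|\y_t^\infty-\Tilde{\y}_{t-1}^\infty\|_2^2)$ by smoothness of $f$, so choosing $\eta\geq\sqrt{20}\,\stability L=\sqrt{20}\,dDL$ lets the negative stability terms absorb this part deterministically. The choice $\eta=6dD(L+1)$ satisfies this and ensures the absorption.

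The key new obstacle is that the four "noise" norms $\|\x_t-\x_t^\infty\|_2$, $\|\y_t-\y_t^\infty\|_2$, $\|\Tilde{\x}_{t-1}-\Tilde{\x}_{t-1}^\infty\|_2$, $\|\Tilde{\y}_{t-1}-\Tilde{\y}_{t-1}^\infty\|_2$ must be controlled with high probability, not merely in expectation. For each of these, the quantity is a normalized sum of $m$ independent, conditionally mean-zero vectors in $\cX-\cX$, so Proposition~\ref{prop:azuma} yields
\[
\Pr\!\left(\|\x_t-\x_t^\infty\|_2\geq D\sqrt{c\log(16T/\delta)/m}\right)\leq \delta/(16T),
\]
and similarly for the other three sequences and the other player. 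A union bound over the $8T$ events (plus the two applications of Theorem~\ref{thm:oftpl_regret_hp}) gives a total failure probability bounded by $\delta$. On the complementary event every noise norm squared is $O(D^2\log(16T/\delta)/m)$, so the corresponding contribution to the summed regret is $O(\stability L^2 T D^2\log(16T/\delta)/(\eta m))$.

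Putting everything together and plugging in $\stability=dD$, $\eta=6dD(L+1)$, $m=T$, all $d$-dependent factors collapse to an additive $dD^2(L+1)/T$ inside the big-$O$, the $\log(16T/\delta)$ factor persists through the noise terms, and the $GD\sqrt{\log(8/\delta)/T}$ term survives from Theorem~\ref{thm:oftpl_regret_hp}. The resulting duality-gap bound matches the claim
\[
\order{\frac{GD\sqrt{\log(8/\delta)}}{\sqrt{T}\cdot\sqrt{T}}+\frac{D^2(L+1)(d+\log(16T/\delta))}{T}},
\]
where I have used $m=T$ to turn $\sqrt{T/m}$ into $1$. The main obstacle throughout is bookkeeping: correctly allocating the constants across the $8T+2$ high-probability events and verifying that the choice $\eta=6dD(L+1)$ simultaneously (i) is large enough for the $\sqrt{20}\,dDL$ threshold that enables the stability absorption and (ii) keeps the $\eta D/T$ and $\stability L^2 D^2\log(16T/\delta)/(\eta m)$ terms matched at the stated rate.
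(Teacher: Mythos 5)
Your proposal follows essentially the same route as the paper's own proof: apply Theorem~\ref{thm:oftpl_regret_hp} to each player with the stability constant $\stability=dD$ from Corollary~\ref{cor:ftpl_cvx_gaussian}, decompose the guess error $\|\grad_t - g_t\|_2^2$ via the five-way triangle inequality from the proof of Theorem~\ref{thm:oftpl_cvx_smooth_games}, control the four empirical-mean noise norms per round with high probability via Proposition~\ref{prop:azuma}, union-bound, and absorb the smooth gradient-gap terms into the negative stability terms before plugging in $\eta=6dD(L+1)$, $m=T$. The only differences from the paper are bookkeeping (your allocation uses $\delta/(16T)$ over an overcounted ``$8T$ events'', the paper uses $\delta/(8T)$ over the four distinct noise sequences per round), which do not affect the stated rate.
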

\begin{proof}
We use the same notation and proof technique as Theorems~\ref{thm:oftpl_cvx_smooth_games},~\ref{thm:oftpl_cvx_smooth_games_uniform}. From Theorem~\ref{cor:ftpl_cvx_gaussian} we know that the predictions of OFTPL are $dD\eta^{-1}$ stable w.r.t $\|\cdot\|_2$, for the particular perturbation distribution we consider here. We use this stability bound in our proof. From Theorem~\ref{thm:oftpl_regret_hp}, we have  the following regret bound for both the players, which holds with probability at least $1-\delta/2$
\begin{align*}
    \sup_{\x\in\cX} \left[\sum_{t=1}^Tf(\x_t,\y_t) - f(\x,\y_t)\right] &\leq cGD\sqrt{\frac{T\log{8/\delta}}{m}} +  cLT\left(\frac{D^2\log{16T/\delta}}{m}\right) +  \eta D  \\
    &\quad+ \frac{dD }{2\eta}\sum_{t=1}^T \left[\|\grad_{\x}f(\x_t,\y_t)-\grad_{\x}f(\Tilde{\x}_{t-1},\Tilde{\y}_{t-1})\|_2^2\right] \\ 
    &\quad -\frac{\eta}{2dD }\sum_{t=1}^T \left[\|\x_t^{\infty}-\Tilde{\x}_{t-1}^{\infty}\|_2^2\right].
\end{align*}
\begin{align*}
    \sup_{\y\in\cY} \left[\sum_{t=1}^T f(\x_t,\y)- f(\x_t,\y_t)\right] &\leq cGD\sqrt{\frac{T\log{8/\delta}}{m}} +  cLT\left(\frac{D^2\log{16T/\delta}}{m}\right) +  \eta D  \\
    &\quad+ \frac{dD }{2\eta}\sum_{t=1}^T \left[\|\grad_{\y}f(\x_t,\y_t)-\grad_{\y}f(\Tilde{\x}_{t-1},\Tilde{\y}_{t-1})\|_{2}^2\right] \\ 
    &\quad -\frac{\eta}{2dD }\sum_{t=1}^T \left[\|\y_t^{\infty}-\Tilde{\y}_{t-1}^{\infty}\|_2^2\right].
\end{align*}
First, consider the regret of the $\x$ player. From the proof of Theorem~\ref{thm:oftpl_cvx_smooth_games}, we have 
\begin{align*}
    \|\grad_{\x}f(\x_t,\y_t)-\grad_{\x}f(\Tilde{\x}_{t-1},\Tilde{\y}_{t-1})\|_{2}^2 &\leq 5L^2\|\x_t-\x_t^{\infty}\|_2^{2}+5L^2\|\Tilde{\x}_{t-1}-\Tilde{\x}_{t-1}^{\infty}\|_2^{2}\\
    &\quad + 5L^2\|\y_t-\y_t^{\infty}\|_2^{2}+5L^2\|\Tilde{\y}_{t-1}-\Tilde{\y}_{t-1}^{\infty}\|_2^{2}\\
    &\quad + 5\|\grad_{\x}f(\x_t^{\infty},\y_t^{\infty})-\grad_{\x}f(\Tilde{\x}_{t-1}^{\infty},\Tilde{\y}_{t-1}^{\infty})\|_{2}^2.
\end{align*}
Moreover, from the proof of Theorem~\ref{thm:oftpl_regret_hp}, we know that $\|\x_t-\x_t^{\infty}\|_2^{2}$ satisfies the following tail bound 
\[
\mathbb{P}\left(\|\x_t-\x_t^{\infty}\|_2^{2} \geq  \frac{c D^2\log{16T/\delta}}{m}\right) \leq  \frac{\delta}{8T}.
\]
Similar bounds hold for the quantities appearing in the regret bound of $\y$ player. Plugging this in the previous regret bounds, we get the following which hold with probability at least $1-\delta$
\begin{align*}
    \sup_{\x\in\cX} \left[\sum_{t=1}^Tf(\x_t,\y_t) - f(\x,\y_t)\right] &\leq cGD\sqrt{\frac{T\log{8/\delta}}{m}} +  \left(L+\frac{10dDL^2}{\eta}\right)\left(\frac{cD^2\log{16T/\delta}}{m}\right)T  \\
    &\quad +  \eta D+ \frac{5dD }{2\eta}\sum_{t=1}^T \left[\|\grad_{\x}f(\x_t^{\infty},\y_t^{\infty})-\grad_{\x}f(\Tilde{\x}_{t-1}^{\infty},\Tilde{\y}_{t-1}^{\infty})\|_2^2\right] \\ 
    &\quad -\frac{\eta}{2dD }\sum_{t=1}^T \left[\|\x_t^{\infty}-\Tilde{\x}_{t-1}^{\infty}\|_2^2\right].
\end{align*}
\begin{align*}
    \sup_{\y\in\cY} \left[\sum_{t=1}^T f(\x_t,\y)- f(\x_t,\y_t)\right] &\leq cGD\sqrt{\frac{T\log{8/\delta}}{m}} +  \left(L+\frac{10dDL^2}{\eta}\right)\left(\frac{cD^2\log{16T/\delta}}{m}\right)T  \\
    &\quad+  \eta D + \frac{5dD }{2\eta}\sum_{t=1}^T \left[\|\grad_{\y}f(\x_t^{\infty},\y_t^{\infty})-\grad_{\y}f(\Tilde{\x}_{t-1}^{\infty},\Tilde{\y}_{t-1}^{\infty})\|_{2}^2\right] \\ 
    &\quad -\frac{\eta}{2dD }\sum_{t=1}^T \left[\|\y_t^{\infty}-\Tilde{\y}_{t-1}^{\infty}\|_2^2\right].
\end{align*}
Summing these two regret bounds, we get
\begin{align*}
    \sup_{\x\in\cX,\y\in\cY} \left[\sum_{t=1}^Tf(\x_t,\y) - f(\x,\y_t)\right] &\leq 2cGD\sqrt{\frac{T\log{8/\delta}}{m}} +  \left(L+\frac{10dDL^2}{\eta}\right)\left(\frac{2cD^2\log{16T/\delta}}{m}\right)T +  2\eta D  \\
    &\quad+ \frac{10dD }{2\eta}\sum_{t=1}^T \left[\|\grad_{\x}f(\x_t^{\infty},\y_t^{\infty})-\grad_{\x}f(\Tilde{\x}_{t-1}^{\infty},\Tilde{\y}_{t-1}^{\infty})\|_2^2\right] \\ 
    &\quad+ \frac{10dD }{2\eta}\sum_{t=1}^T \left[\|\grad_{\y}f(\x_t^{\infty},\y_t^{\infty})-\grad_{\y}f(\Tilde{\x}_{t-1}^{\infty},\Tilde{\y}_{t-1}^{\infty})\|_{2}^2\right] \\ 
    &\quad -\frac{\eta}{2dD }\sum_{t=1}^T \left[\|\x_t^{\infty}-\Tilde{\x}_{t-1}^{\infty}\|_2^2+\|\y_t^{\infty}-\Tilde{\y}_{t-1}^{\infty}\|_2^2\right].
\end{align*}
From Holder's smoothness assumption on $f$, we have
\begin{align*}
    \|\grad_{\x}f(\x_t^{\infty},\y_t^{\infty})-\grad_{\x}f(\Tilde{\x}_{t-1}^{\infty},\Tilde{\y}_{t-1}^{\infty})\|_{2}^2 & \leq 2\|\grad_{\x}f(\x_t^{\infty},\y_t^{\infty})-\grad_{\x}f(\x_t^{\infty},\Tilde{\y}_{t-1}^{\infty})\|_{2}^2\\
    &\quad + 2\|\grad_{\x}f(\x_t^{\infty},\Tilde{\y}_{t-1}^{\infty})-\grad_{\x}f(\Tilde{\x}_{t-1}^{\infty},\Tilde{\y}_{t-1}^{\infty})\|_{2}^2\\
    &\leq 2L^2 \|\x_t^{\infty}-\Tilde{\x}_{t-1}^{\infty}\|_2^{2} + 2L^2\|\y_t^{\infty}-\Tilde{\y}_{t-1}^{\infty}\|_2^{2},
\end{align*}
Using a similar argument, we get
\begin{align*}
    \|\grad_{\y}f(\x_t^{\infty},\y_t^{\infty})-\grad_{\y}f(\Tilde{\x}_{t-1}^{\infty},\Tilde{\y}_{t-1}^{\infty})\|_{2}^2 \leq 2L^2 \|\x_t^{\infty}-\Tilde{\x}_{t-1}^{\infty}\|_2^{2} + 2L^2\|\y_t^{\infty}-\Tilde{\y}_{t-1}^{\infty}\|_2^{2}.
\end{align*}
Plugging this in the previous bound, and setting $\eta = 6d D(L+1), m=T$, we get the following bound which holds with probability at least $1-\delta$
\begin{align*}
    \sup_{\x\in\cX,\y\in\cY} \left[\sum_{t=1}^Tf(\x_t,\y) - f(\x,\y_t)\right] &\leq \order{GD\sqrt{\log{\frac{8}{\delta}}}+ D^2(L+1)\left(d + \log{\frac{16T}{\delta}}\right)}.
\end{align*}
\end{proof}
\subsection{Nonconvex-Nonconcave Games}
In this section, we present a high probability version of Theorem~\ref{thm:oftpl_noncvx_smooth_games_exp}.
\begin{theorem}
\label{thm:oftpl_noncvx_smooth_games_exp_hp}
Consider the minimax game in Equation~\eqref{eqn:minimax_game}. Suppose the domains $\cX,\cY$ are compact subsets of $\mathbb{R}^d$ with diameter $D = \max\{\sup_{\x_1,\x_2\in\cX} \|\x_1-\x_2\|_1, \sup_{\y_1,\y_2\in\cY} \|\y_1-\y_2\|_1\}$.  Suppose $f$ is Lipschitz w.r.t $\|\cdot\|_1$ and satisfies 
\begin{align*}
\max\left\lbrace\sup_{\x\in\cX, \y\in\cY} \|\grad_{\x}f(\x,\y)\|_{\infty}, \sup_{\x\in\cX,\y\in\cY}\|\grad_{\y}f(\x,\y)\|_{\infty}\right\rbrace\leq G.
\end{align*}
Moreover, suppose $f$ satisfies the following smoothness property
\begin{align*}
    \|\grad_\x f(\x,\y)-\grad_\x f(\x',\y')\|_{\infty} + \|\grad_\y f(\x,\y)-\grad_\y f(\x',\y')\|_{\infty} \leq L\|\x-\x'\|_1 + L\|\y-\y'\|_1.
\end{align*}
Suppose both $\x$ and $\y$ players use Algorithm~\ref{alg:oftpl_noncvx_games} to solve the game with linear perturbation functions $\sigma(\z)=\iprod{\bar{\sigma}}{\z}$, where $\bar{\sigma} \in \mathbb{R}^d$ is such that each of its entries is sampled independently from $\text{Exp}(\eta)$.  Suppose the guesses used by $\x$ and $\y$ players in the $t^{th}$ iteration are $f(\cdot,\Tilde{Q}_{t-1}), f(\Tilde{P}_{t-1},\cdot)$, where $\Tilde{P}_{t-1},\Tilde{Q}_{t-1}$ denote the predictions of $\x,\y$ players in the $t^{th}$ iteration, if guess $g_t = 0$ was used. If Algorithm~\ref{alg:oftpl_noncvx_games} is run with $\eta = 10d^2D(L+1), m = T$, then the iterates $\{(P_t,Q_t)\}_{t=1}^T$ satisfy the following with probability at least $1-\delta$
\begin{align*}
 \sup_{\x\in\cX,\y\in\cY}\sum_{t=1}^Tf(P_t,\y) - f(\x,Q_t)& = \order{\frac{d^2D^2(L+1)\log{d}}{T} + \frac{GD}{T}\sqrt{\log{\frac{8}{\delta}}}}\\
 &\quad + \order{\min\left\lbrace D^2L, \frac{d^2G^2\log{T}+dG^2\log{\frac{8}{\delta}}}{LT}\right\rbrace}.
\end{align*}
\end{theorem}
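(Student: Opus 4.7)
The plan is to mirror the proof of Theorem~\ref{thm:oftpl_noncvx_smooth_games} (the in-expectation nonconvex-nonconcave result) but replace each expectation argument with a high-probability concentration argument, using exactly the martingale machinery (Propositions~\ref{prop:azuma} and~\ref{prop:martingale_diff}) that was used to derive Theorem~\ref{thm:oftpl_regret_hp} from Theorem~\ref{thm:oftpl_regret}. So first I would derive a high-probability version of the nonconvex OFTPL regret bound (Theorem~\ref{thm:oftpl_noncvx_regret}): writing $\sum_t f_t(\x_t)-f_t(P) = \sum_t \iprod{P_t-P}{f_t}$ and splitting into $\iprod{P_t-P_t^\infty}{f_t} + \iprod{P_t^\infty - P}{f_t}$, the second piece is handled deterministically (same Bregman-telescope argument as in Theorem~\ref{thm:oftpl_noncvx_regret}, retaining the negative $\gammaF(P_t^\infty,\Tilde{P}_{t-1}^\infty)^2$ term), while the first piece forms a martingale difference sequence with $|\iprod{P_t-P_t^\infty}{f_t}|\leq GD$ and bounded-by-$GD/\sqrt{m}$ sub-Gaussian increments (by Proposition~\ref{prop:azuma} applied to the $m$ i.i.d.\ samples defining $P_t$). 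Proposition~\ref{prop:martingale_diff} then yields an $O(GD\sqrt{T\log(1/\delta)/m})$ contribution with probability $1-\delta$.

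Next I would handle the OFTPL stability term $\|f_t-g_t\|_{\cF}^2 = \|f(\cdot,Q_t)-f(\cdot,\Tilde{Q}_{t-1})\|_{\cF}^2$, split as in the proof of Theorem~\ref{thm:oftpl_noncvx_smooth_games} into sampling pieces $\|f(\cdot,Q_t)-f(\cdot,Q_t^\infty)\|_{\cF}^2$ (and its $\Tilde{Q}$ analog) plus the ``infinite-$m$'' piece $\|f(\cdot,Q_t^\infty)-f(\cdot,\Tilde{Q}_{t-1}^\infty)\|_{\cF}^2$. The latter is deterministically $\leq L^2\gammaFp(Q_t^\infty,\Tilde{Q}_{t-1}^\infty)^2$ by smoothness and cancels against the negative terms once $\eta\geq\sqrt{c}CL$. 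For the former, I would rerun the covering plus sub-Gaussian vector tail argument from the proof of Theorem~\ref{thm:oftpl_noncvx_smooth_games}: taking an $\epsilon$-net of $\cX$ with $\epsilon=D/\sqrt{m}$, applying the Hsu--Kakade--Zhang tail bound to $\nabla_\x f(\x,Q_t)-\nabla_\x f(\x,Q_t^\infty)$ (which is a mean-zero sub-Gaussian vector of norm-squared proxy $\normcompf^2 G^2/m$) and union-bounding over the net. Where the earlier proof integrated the tail, here I just fix a deviation level, giving
\[
\|f(\cdot,Q_t)-f(\cdot,Q_t^\infty)\|_{\cF}^2 = O\!\left(\frac{\normcompf^2\normcompb^2 G^2\bigl(d\log(1+2\sqrt{m})+\log(T/\delta)\bigr)}{m} + \frac{D^2L^2}{m}\right),
\]
uniformly over $t\in[T]$ with probability $1-\delta$, alongside the trivial alternative bound $D^2L^2$.

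After summing these $\x$-player and $\y$-player regret bounds (two $\delta/4$ events each), performing the usual cancellation of the $L^2\gammaF^2,L^2\gammaFp^2$ terms against the strongly-convex negatives when $\eta>\sqrt{3}CL$, and plugging in the stability constant $C=\order{d^2D}$ and mean perturbation $\eta\log d$ from Corollary~\ref{cor:ftpl_noncvx_exp}, the regret reduces to
\[
\order{\eta D\log d + \frac{CD^2L^2T}{\eta m} + \min\!\left\{\frac{CT}{\eta}\!\left(\frac{d\normcompf^2\normcompb^2 G^2\log m+\normcompf^2\normcompb^2 G^2\log(T/\delta)}{m}\right),\,\frac{CD^2L^2T}{\eta}\right\} + GD\sqrt{\tfrac{T\log(1/\delta)}{m}}}.
\]
Substituting $\normcompf=\sqrt{d},\normcompb=1$ (from $\|\cdot\|=\|\cdot\|_1$), $C=\order{d^2D}$, $\eta=10d^2D(L+1)$, and $m=T$ collapses this to the stated bound, dividing by $T$ at the end.

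The main obstacle is the uniform-in-$t$ high-probability control of $\|f(\cdot,Q_t)-f(\cdot,Q_t^\infty)\|_{\cF}^2$. In the in-expectation proof one integrates the tail and obtains a clean $\log m$ factor, but for the high-probability version the net cardinality $d\log(1+2\sqrt{m})$ and the union bound over $T$ iterations must both enter the exponent additively; a careful accounting is needed to ensure the $d\log$ and $\log(T/\delta)$ pieces combine to give the advertised $d^2 G^2\log T + dG^2\log(8/\delta)$ scaling after multiplying by $C/\eta$ and summing over $t$. Everything else is a mechanical replacement of expectations by their $1-\delta$ tail counterparts via Propositions~\ref{prop:azuma} and~\ref{prop:martingale_diff}, following the template of Theorem~\ref{thm:oftpl_cvx_smooth_games_uniform_hp}.
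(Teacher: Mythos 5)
Your proposal correctly mirrors the paper's own proof: the same martingale decomposition for $S_1=\sum_t\iprod{P_t-P_t^\infty}{f(\cdot,Q_t)}$ handled via Propositions~\ref{prop:azuma} and~\ref{prop:martingale_diff}, the same $\epsilon$-net plus sub-Gaussian-vector covering argument for $S_2=\|f(\cdot,Q_t)-f(\cdot,\tilde{Q}_{t-1})\|_{\cF}^2$ alongside the trivial $D^2L^2$ alternative, the same $L^2\gammaF^2$-versus-negative-stability-term cancellation once $\eta\gtrsim CL$, and the same instantiation of $C=\order{d^2D}$, $\normcompf=\sqrt{d}$, $\normcompb=1$ from Corollary~\ref{cor:ftpl_noncvx_exp}. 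Your use of $\log(T/\delta)$ in the per-iteration covering bound is in fact a bit more careful than the paper, whose proof states the $S_2$ tail bound ``with probability at least $1-\delta/8$'' for a single $t$ and does not visibly union-bound over $t\in[T]$; since the extra $\log T$ this introduces is already dominated by the $d\log m=d\log T$ from the net cardinality, the final rate is unaffected.
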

\begin{proof}
We use the same notation used in the proofs of Theorems~\ref{thm:oftpl_noncvx_regret},~\ref{thm:oftpl_noncvx_smooth_games}. Let $\cF,\cF'$ be the set of Lipschitz functions over $\cX,\cY$, and $\|g_1\|_{\cF},\|g_2\|_{\cF'}$ be the Lipschitz constants of functions \mbox{$g_1:\cX\to\mathbb{R}$,} \mbox{$g_2:\cY\to\mathbb{R}$} w.r.t $\|\cdot\|_1$. Recall, in Corollary~\ref{cor:ftpl_noncvx_exp} we showed that for our choice of perturbation distribution, $\Eover{\sigma}{\|\sigma\|_{\cF}} = \eta \log{d}$ and OFTPL is $\order{d^2D\eta^{-1}}$ stable. We use this in our proof. 

 From Theorem~\ref{thm:oftpl_noncvx_regret}, we know that the regret of $\x,\y$ players satisfy
\begin{align*}
    \sum_{t=1}^Tf(P_t,Q_t) - f(\x,Q_t) &\leq  \eta D\log{d} + \underbrace{\sum_{t=1}^T\iprod{P_t-P_t^{\infty}}{f(\cdot, Q_t)}}_{S_1} \\
    &\quad + \sum_{t=1}^T\frac{cd^2D}{2\eta}\underbrace{\|f(\cdot,Q_t)-f(\cdot,\Tilde{Q}_{t-1})\|_{\cF}^2}_{S_2}\\
    &\quad- \sum_{t=1}^T\frac{\eta}{2cd^2D}\gammaF(P_t^{\infty},\Tilde{P}_{t-1}^{\infty})^2
\end{align*}
\begin{align*}
    \sum_{t=1}^T f(P_t,\y)- f(P_t,Q_t) &\leq  \eta D\log{d}+ \sum_{t=1}^T\iprod{Q_t-Q_t^{\infty}}{f(P_t,\cdot)}\\ &\quad +\sum_{t=1}^T\frac{cd^2D}{2\eta}\|f(P_t,\cdot)-f(\Tilde{P}_{t-1},\cdot)\|_{\cF'}^2\\
    &\quad- \sum_{t=1}^T\frac{\eta}{2cd^2D}\gammaFp(Q_t^{\infty},\Tilde{Q}_{t-1}^{\infty})^2,
\end{align*}
where $c>0$ is a positive constant. 
We now provide high probability bounds for $S_1,S_2$. 
\paragraph{Bounding $S_1$.} Let $\xi_i = \{\Tilde{P}_{i}, \Tilde{Q}_i, P_{i},  Q_{i+1}\}$ with $\xi_0=\{Q_1\}$ and let $\xi_{0:t}$ denote the union of sets $\xi_0,\dots, \xi_t$. Let $\zeta_t = \iprod{P_t-P_t^{\infty}}{f(\cdot, Q_t)}$ with $\zeta_0 = 0$. Note that $\{\zeta_t\}_{t=0}^T$ is a martingale difference sequence w.r.t $\xi_{0:T}$. This is because $\E{P_t|\xi_{0:t-1}} = P_t^{\infty}$ and $f(\cdot,Q_t)$ is a deterministic quantity conditioned on $\xi_{0:t-1}$. As a result $\E{\zeta_t|\xi_{0:t-1}}=0$. Moreover, conditioned on $\xi_{0:t-1}$, $\zeta_t$ is the average of $m$ independent mean $0$ random variables, each of which is bounded by $2GD$. Using Proposition~\ref{prop:azuma}, we get
\[
\mathbb{P}\left(|\zeta_t| \geq s\Big| \xi_{0:t-1}\right) \leq 2\exp\left(-\frac{ms^2}{4G^2D^2}\right).
\]
Using Proposition~\ref{prop:martingale_diff} on the martingale difference sequence $\{\zeta_t\}_{t=0}^T$, we get
\[
\mathbb{P}\left(\Big|\sum_{t=1}^T\zeta_t\Big| \geq s\right)\leq 2\exp\left(-c\frac{ms^2}{G^2D^2T}\right),
\]
where $c>0$ is a universal constant. 
This shows that with probability at least $1-\delta/8$, $S_1$ is upper bounded by
$ \order{\sqrt{\frac{G^2D^2T\log{\frac{8}{\delta}}}{m}}}.$
\paragraph{Bounding $S_2$.} We upper bound $S_2$ as
\begin{align*}
    \|f(\cdot, Q_t)-f(\cdot,\Tilde{Q}_{t-1})\|^2_{\cF} &\leq 3\|f(\cdot, Q_t)-f(\cdot,Q_{t}^{\infty})\|^2_{\cF} \\
    &\quad + 3\|f(\cdot, Q_t^{\infty})-f(\cdot,\Tilde{Q}_{t-1}^{\infty})\|^2_{\cF}\\
    &\quad + 3\|f(\cdot, \Tilde{Q}_{t-1}^{\infty})-f(\cdot,\Tilde{Q}_{t-1})\|^2_{\cF}.
\end{align*}
We first provide a high probability bound for $\|f(\cdot, Q_t)-f(\cdot,Q_{t}^{\infty})\|^2_{\cF}$. A trivial bound for this quantity is $L^2D^2$, which can be obtained as follows
\begin{align*}
    \|f(\cdot, Q_t)-f(\cdot,Q_{t}^{\infty})\|_{\cF} &=\sup_{\x\in\cX} \|\grad_{\x}f(\x,Q_t) - \grad_{\x}f(\x,Q_t^{\infty})\|_{\infty}\\
    & = \|\Eover{\y_1\sim Q_t,\y_2\sim Q_t^{\infty}}{\grad_{\x}f(\x,\y_1) - \grad_{\x}f(\x,\y_2)}\|_{\infty}\\
    &\stackrel{(a)}{\leq} LD,
\end{align*}
where $(a)$ follows from the smoothness assumption on $f$ and the fact that the diameter of $\cX$ is $D$. 
A better bound for this quantity can be obtained as follows.
From proof of Theorem~\ref{thm:oftpl_noncvx_smooth_games}, we have
\begin{align*}
    &\|f(\cdot, Q_t)-f(\cdot,Q_{t}^{\infty})\|^2_{\cF} 
     \leq 2\sup_{\x\in\cN_{\epsilon}}\|\nabla_{\x}f(\x,Q_t) - \nabla_{\x}f(\x,Q_t^{\infty})\|_{\infty}^2  + 8L^2\epsilon^2.
\end{align*}
where   $\cN_{\epsilon}$ be the $\epsilon$-net of $\cX$ w.r.t $\|\cdot\|$. 
Recall, in the proof of Theorem~\ref{thm:oftpl_noncvx_smooth_games}, we showed the following high probability bound for the RHS quantity
\begin{align*}
    \Pr\left(\sup_{\x\in\cN_{\epsilon}}\|\nabla_{\x}f(\x,Q_t) - \nabla_{\x}f(\x,Q_t^{\infty})\|_2^2 > \frac{4dG^2}{m}(d+2\sqrt{ds} + 2s)\right) \leq e^{-s+d\log(1+2D/\epsilon)}.
\end{align*}
Choosing $\epsilon=Dm^{-1/2}, s = \log{\frac{8}{\delta}}+d\log(1+2m^{1/2})$, we get the following bound for $\sup_{\x\in\cN_{\epsilon}}\|\nabla_{\x}f(\x,Q_t) - \nabla_{\x}f(\x,Q_t^{\infty})\|_2^2$ which holds with probability at least $1-\delta/8$
\[
\sup_{\x\in\cN_{\epsilon}}\|\nabla_{\x}f(\x,Q_t) - \nabla_{\x}f(\x,Q_t^{\infty})\|_2^2 \leq \frac{20dG^2}{m}\left(\log{\frac{8}{\delta}}+d\log(1+2m^{1/2})\right).
\]
Together with our trivial bound of $D^2L^2$, this gives us the following bound for $\|f(\cdot, Q_t)-f(\cdot,Q_{t}^{\infty})\|^2_{\cF} $, which holds with probability at least $1-\delta/8$
\[
\|f(\cdot, Q_t)-f(\cdot,Q_{t}^{\infty})\|^2_{\cF}  \leq \min\left(\frac{20dG^2}{m}\left(\log{\frac{8}{\delta}}+d\log(1+2m^{1/2})\right), D^2L^2\right) + \frac{8D^2L^2}{m}.
\]
Next, we bound $\|f(\cdot, Q_t^{\infty})-f(\cdot,\Tilde{Q}_{t-1}^{\infty})\|^2_{\cF}$. From our smoothness assumption on $f$, we have
\[
\|f(\cdot,Q_t^{\infty})-f(\cdot,\Tilde{Q}_{t-1}^{\infty})\|_{\cF} \leq L \gammaFp(Q_t^{\infty},\Tilde{Q}_{t-1}^{\infty}).
\]
Combining the previous two results, we get the following upper bound for $S_2$ which holds with probability at least $1-\delta/8$
\begin{align*}
    \|f(\cdot, Q_t)-f(\cdot,\Tilde{Q}_{t-1})\|^2_{\cF} &\leq 3L^2\gammaFp(Q_t^{\infty},\Tilde{Q}_{t-1}^{\infty})^2  + \frac{48D^2L^2}{m} \\
    &\quad + \min\left(\frac{120dG^2}{m}\left(\log{\frac{8}{\delta}}+d\log(1+2m^{1/2})\right), 6D^2L^2\right).
\end{align*}
\paragraph{Regret bound.} Substituting the above bounds for $S_1,S_2$ in the regret bound for $\x$ player gives us the following bound, which holds with probability at least $1-\delta/2$
\begin{align*}
    \sum_{t=1}^Tf(P_t,Q_t) - f(\x,Q_t) &\leq  \eta D\log{d} + \order{GD\sqrt{\frac{T\log{\frac{8}{\delta}}}{m}}+\frac{d^2D^3L^2T}{\eta m}} \\
    &\quad +\order{\min\left(\frac{d^3DG^2T}{\eta m}\left(\log{\frac{8}{\delta}}+d\log(2m)\right), \frac{d^2D^3L^2T}{\eta}\right)}\\
    &\quad+\sum_{t=1}^T\frac{3cd^2DL^2}{2\eta}\gammaFp(Q_t^{\infty},\Tilde{Q}_{t-1}^{\infty})^2- \sum_{t=1}^T\frac{\eta}{2cd^2D}\gammaF(P_t^{\infty},\Tilde{P}_{t-1}^{\infty})^2
\end{align*}
Using a similar analysis, we get the following regret bound for the $\y$ player
\begin{align*}
    \sum_{t=1}^Tf(P_t,Q_t) - f(\x,Q_t) &\leq  \eta D\log{d} + \order{GD\sqrt{\frac{T\log{\frac{8}{\delta}}}{m}}+\frac{d^2D^3L^2T}{\eta m}} \\
    &\quad +\order{\min\left(\frac{d^3DG^2T}{\eta m}\left(\log{\frac{8}{\delta}}+d\log(2m)\right), \frac{d^2D^3L^2T}{\eta}\right)}\\
    &\quad+\sum_{t=1}^T\frac{3cd^2DL^2}{2\eta}\gammaF(P_t^{\infty},\Tilde{P}_{t-1}^{\infty})^2- \sum_{t=1}^T\frac{\eta}{2cd^2D}\gammaFp(Q_t^{\infty},\Tilde{Q}_{t-1}^{\infty})^2
\end{align*}
Choosing, $\eta = 10d^2D(L+1), m= T$, and adding the above two regret bounds, we get
\begin{align*}
 \sup_{\x\in\cX,\y\in\cY}\sum_{t=1}^Tf(P_t,\y) - f(\x,Q_t)& = \order{d^2D^2(L+1)\log{d} + GD\sqrt{\log{\frac{8}{\delta}}}}\\
 &\quad + \order{\min\left\lbrace D^2LT, \frac{d^2G^2\log{T}}{L} + \frac{dG^2\log{\frac{8}{\delta}}}{L}\right\rbrace}.
\end{align*}
\end{proof}
\section{Background on Convex Analysis}
\paragraph{Fenchel Conjugate.}
\label{sec:fenchel_conjugate}
The Fenchel conjugate of a function $f$ is defined as
\[
f^*(x^*) = \sup_{x}\iprod{x}{x^*} - f(x).
\]
We now state some useful properties of Fenchel conjugates. These properties can be found in~\citet{rockafellar1970convex}.
\begin{theorem}
\label{thm:fenchel_prop1}
Let $f$ be a proper convex function. The conjugate function $f^*$ is then a closed and proper convex function. Moreover, if $f$ is lower semi-continuous then $f^{**} = f$.
\end{theorem}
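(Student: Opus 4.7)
The plan is to decompose the theorem into three claims and handle them in order: (i) $f^*$ is convex and lower semi-continuous (hence closed), (ii) $f^*$ is proper, and (iii) $f^{**}=f$ under the additional lower semi-continuity assumption. Claims (i) and (ii) establish that $f^*$ is itself a good target for a second conjugation in (iii).

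For (i), convexity and lower semi-continuity follow immediately from the representation
\[
f^*(x^*) = \sup_{x \in \mathbb{R}^d} \bigl(\langle x, x^* \rangle - f(x)\bigr),
\]
which expresses $f^*$ as a pointwise supremum of the affine and continuous maps $x^* \mapsto \langle x, x^* \rangle - f(x)$ indexed by $x$; such a sup is automatically convex and lower semi-continuous. For (ii), one direction is routine: since $f$ is proper, there exists $x_0$ with $f(x_0) < \infty$, yielding $f^*(x^*) \geq \langle x_0, x^*\rangle - f(x_0) > -\infty$ for every $x^*$. The nontrivial direction requires exhibiting some $x^*$ with $f^*(x^*) < \infty$, i.e., an affine minorant of $f$. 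I would obtain this by applying the supporting hyperplane theorem to the nonempty convex set $\mathrm{epi}(f)$, then ruling out the vertical case (using that $f$ is not identically $+\infty$) to produce $\langle \cdot, x^*\rangle - c \leq f(\cdot)$, whence $f^*(x^*) \leq c$.

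For (iii), the inequality $f^{**} \leq f$ is immediate from the Fenchel--Young inequality $\langle x, x^*\rangle \leq f(x) + f^*(x^*)$ after taking the supremum over $x^*$. For the reverse, I argue by contradiction: suppose $f^{**}(x_0) < f(x_0)$, pick $\alpha$ strictly between them, and note that $(x_0, \alpha) \notin \mathrm{epi}(f)$. Since $f$ is lower semi-continuous, $\mathrm{epi}(f)$ is closed and convex, so Hahn--Banach strictly separates $(x_0, \alpha)$ from $\mathrm{epi}(f)$ by a closed affine hyperplane. The goal is then to extract an affine minorant $\ell(x) = \langle x, x^*\rangle - c \leq f(x)$ satisfying $\ell(x_0) > \alpha$; this would give $f^*(x^*) \leq c$ and hence $f^{**}(x_0) \geq \ell(x_0) > \alpha > f^{**}(x_0)$, a contradiction.

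The hard part is precisely this last extraction: the separating hyperplane in $\mathbb{R}^d \times \mathbb{R}$ might be \emph{vertical} (no component along the epigraph axis), in which case it does not directly describe an affine function on $\mathbb{R}^d$. The standard remedy is to form a convex combination of the vertical hyperplane with the non-vertical supporting hyperplane of $\mathrm{epi}(f)$ that is guaranteed to exist by the affine-minorant argument in step (ii); choosing the mixing coefficient small and appropriate tilts the hyperplane off vertical while preserving strict separation of $(x_0, \alpha)$. This non-vertical perturbation is the classical subtlety in the Fenchel--Moreau theorem and the step I would take the most care over; everything else reduces to routine bookkeeping with the definitions.
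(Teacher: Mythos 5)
The paper does not supply a proof of this statement; it is quoted as a standard fact from Rockafellar (1970) (it is essentially Theorem 12.2 there). There is therefore no in-paper argument to compare against, and the fair comparison is with the cited reference, whose proof is essentially the outline you give. Your decomposition into (i) closedness and convexity of $f^*$ via pointwise supremum of affine maps, (ii) properness of $f^*$ via existence of an affine minorant of $f$, and (iii) $f^{**}=f$ via separation from a closed epigraph, is the standard route and is correct overall.

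One spot where your sketch is loose is part (ii). You propose to apply the supporting hyperplane theorem to $\mathrm{epi}(f)$ and then ``rule out the vertical case using that $f$ is not identically $+\infty$.'' As stated this does not work: a supporting hyperplane at a boundary point of $\mathrm{epi}(f)$ lying above the boundary of $\mathrm{dom}(f)$ can genuinely be vertical even for a proper convex $f$ (for instance $f(x)=-\sqrt{x}$ on $[0,1]$, $+\infty$ elsewhere, supported at $(0,0)$), and ``$f$ not identically $+\infty$'' does not exclude it. The standard fix is either to separate a point strictly \emph{below} the graph, say $(x_1, f(x_1)-1)$ for some $x_1\in\mathrm{dom}(f)$, from $\mathrm{cl}\,\mathrm{epi}(f)$ --- plugging $(x_1,f(x_1))\in\mathrm{epi}(f)$ into the resulting strict inequality then forces the epigraph coordinate of the separating functional to be strictly positive, so the hyperplane is automatically non-vertical --- or to invoke the existence of a subgradient of $f$ at any point of the relative interior of $\mathrm{dom}(f)$, which directly gives an affine minorant. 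Part (iii) is the Fenchel--Moreau theorem, and your handling of the possibly vertical separating hyperplane by mixing it with a non-vertical affine minorant from (ii) is exactly the correct classical device; together with the one-line inequality $f^{**}\le f$ from Fenchel--Young this closes the argument. So modulo tightening (ii) as above, your proposal is a sound and essentially standard proof of the stated result.
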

\begin{theorem}
\label{thm:fenchel_prop3}
For any proper convex function $f$ and any vector $x$, the following conditions on a vector $x^*$  are equivalent to each other
\begin{itemize}
    \item $x^* \in \partial f(x)$
    \item $\iprod{z}{x^*} - f(z)$ achieves its supremum in $z$ at $z=x$
    \item $f(x) + f^*(x^*) = \iprod{x}{x^*}$
\end{itemize}
If $(\text{cl}f)(x) = f(x)$, the following condition can be added to the list
\begin{itemize}
    \item $x\in\partial f^*(x^*)$
\end{itemize}
\end{theorem}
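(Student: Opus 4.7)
The plan is to establish the three-way equivalence (1) $\Leftrightarrow$ (2) $\Leftrightarrow$ (3) directly from the definitions, and then obtain the conditional fourth item by applying the equivalence already proved, but to $f^*$ in place of $f$, invoking Theorem~\ref{thm:fenchel_prop1} to handle biconjugation.

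First I would prove (1) $\Leftrightarrow$ (2) by unpacking the definition of subgradient. The statement $x^* \in \partial f(x)$ is exactly $f(z) \geq f(x) + \iprod{x^*}{z - x}$ for every $z$, which rearranges to
\[
\iprod{z}{x^*} - f(z) \;\leq\; \iprod{x}{x^*} - f(x) \qquad \text{for all } z.
\]
This is precisely the statement that $z \mapsto \iprod{z}{x^*} - f(z)$ attains its supremum at $z = x$. Next, (2) $\Leftrightarrow$ (3) is immediate from the definition of the Fenchel conjugate: $f^*(x^*) = \sup_z \bigl(\iprod{z}{x^*} - f(z)\bigr)$, and the supremum being attained at $z = x$ with value $\iprod{x}{x^*} - f(x)$ is the same statement as the Fenchel--Young equality $f(x) + f^*(x^*) = \iprod{x}{x^*}$.

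For the conditional fourth item, I would apply the equivalence (1) $\Leftrightarrow$ (3) not to $f$ but to the conjugate function $f^*$. By Theorem~\ref{thm:fenchel_prop1}, $f^*$ is itself a proper, closed, convex function, so the equivalence already established applies: $x \in \partial f^*(x^*)$ if and only if $f^*(x^*) + f^{**}(x) = \iprod{x^*}{x}$. Now Theorem~\ref{thm:fenchel_prop1} also gives $f^{**} = \mathrm{cl}\, f$ (the biconjugate is the lower semicontinuous hull). Under the hypothesis $(\mathrm{cl}\, f)(x) = f(x)$, this reduces to $f^*(x^*) + f(x) = \iprod{x^*}{x}$, which is condition (3). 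Hence all four conditions coincide at such $x$.

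The only delicate point is the biconjugation step: the equivalence (1) $\Leftrightarrow$ (3) for $f^*$ yields an identity involving $f^{**}$, not $f$, and these differ precisely on the set where $f$ fails to agree with its lower semicontinuous hull. The closure assumption $(\mathrm{cl}\, f)(x) = f(x)$ is exactly what is needed to bridge that gap at the point $x$ of interest, which is why the fourth condition is added only conditionally rather than as an unqualified equivalent. No deeper machinery is required beyond the basic conjugacy facts recorded in Theorem~\ref{thm:fenchel_prop1}.
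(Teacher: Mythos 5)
The paper does not prove this statement at all; it is recorded as background in Appendix G with a citation to Rockafellar (1970), so there is no paper proof to compare against. Your argument is the standard textbook proof (essentially Rockafellar's Theorem 23.5) and is correct: the chain (1) $\Leftrightarrow$ (2) $\Leftrightarrow$ (3) is exactly the rearrangement of the subgradient inequality into the Fenchel--Young equality, and the conditional fourth item follows by applying the already-established equivalence to $f^*$.

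One small point worth flagging. To close the argument for the fourth condition you invoke $f^{**} = \mathrm{cl}\, f$, attributing it to Theorem~\ref{thm:fenchel_prop1}. That theorem as stated only gives $f^{**} = f$ \emph{when $f$ is closed}, so the identity $f^{**} = \mathrm{cl}\, f$ for a general proper convex $f$ is slightly beyond its literal wording. The gap is harmless and easily bridged: since conjugation only sees the closure (i.e.\ $(\mathrm{cl}\, f)^* = f^*$, because the supremum defining $f^*$ is unchanged on replacing $f$ by its lsc hull), one has $f^{**} = ((\mathrm{cl}\, f)^*)^* = (\mathrm{cl}\, f)^{**} = \mathrm{cl}\, f$, the last step applying Theorem~\ref{thm:fenchel_prop1} to the closed proper convex function $\mathrm{cl}\, f$. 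With that one extra sentence your proof is airtight and self-contained relative to the paper's stated background facts.
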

\begin{theorem}
\label{thm:fenchel_prop4}
If $f$ is a closed proper convex function, $\partial f^*$ is the inverse of $\partial f$ in the sense of multivalued mappings, \emph{i.e.,} $x \in \partial f^*(x^*)$ iff $x^* \in \partial f(x).$
\end{theorem}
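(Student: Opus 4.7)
The plan is to derive this biduality directly from the Young--Fenchel equality characterization of subgradients already established in Theorem~\ref{thm:fenchel_prop3}. That theorem shows, for any proper convex $f$, that $x^* \in \partial f(x)$ is equivalent to the Fenchel equality $f(x) + f^*(x^*) = \iprod{x}{x^*}$. Since this equation is manifestly symmetric in the roles of $x$ and $x^*$, the idea is to apply the same theorem once to $f$ and once to $f^*$, and then collapse the resulting double conjugate using biconjugacy.

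Concretely, I would proceed in three steps. First, invoke Theorem~\ref{thm:fenchel_prop1}: because $f$ is closed, proper, and convex, $f^*$ is also closed, proper, and convex, and $f^{**} = f$. Second, apply the third bullet of Theorem~\ref{thm:fenchel_prop3} to $f$ to obtain
\[
x^* \in \partial f(x) \iff f(x) + f^*(x^*) = \iprod{x}{x^*}.
\]
Third, apply that same theorem to $f^*$ in place of $f$, swapping the roles of the primal and dual variable, to obtain
\[
x \in \partial f^*(x^*) \iff f^*(x^*) + f^{**}(x) = \iprod{x^*}{x},
\]
and use $f^{**} = f$ from the first step to identify the right-hand side with the equality characterizing $\partial f$. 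Chaining the two equivalences yields the desired biimplication $x \in \partial f^*(x^*) \iff x^* \in \partial f(x)$.

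There is essentially no hard step here: the entire content of the statement is that biconjugacy turns the asymmetric-looking subgradient relation into the symmetric Fenchel equality, at which point the inverse relationship is automatic. The only bookkeeping subtlety is the closedness hypothesis on $f$, which is used in two equivalent ways: to invoke $f^{**} = f$, and to guarantee the side condition $(\mathrm{cl}\, f)(x) = f(x)$ required by the last bullet of Theorem~\ref{thm:fenchel_prop3}. In fact, that last bullet already supplies the forward direction $x^* \in \partial f(x) \Rightarrow x \in \partial f^*(x^*)$ for free, so the only real work is the reverse direction, which the Fenchel-equality route handles symmetrically by interchanging $f$ and $f^*$.
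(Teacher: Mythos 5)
The paper does not actually prove this statement; Theorems~\ref{thm:fenchel_prop1}--\ref{thm:fenchel_prop2} are stated as background facts with a blanket citation to Rockafellar, so there is no internal proof to compare against. That said, your argument is correct and self-contained given those quoted theorems: applying the Fenchel-equality characterization of Theorem~\ref{thm:fenchel_prop3} first to $f$ and then to $f^*$ (which is itself closed, proper, and convex by Theorem~\ref{thm:fenchel_prop1}), and collapsing $f^{**}=f$ using closedness of $f$, does give the biimplication. One small observation: the fourth bullet of Theorem~\ref{thm:fenchel_prop3} says the condition $x\in\partial f^*(x^*)$ ``can be added to the list'' of equivalent conditions when $(\mathrm{cl}\,f)(x)=f(x)$; for a closed $f$ that hypothesis holds at every $x$, so under the natural reading that bullet already yields the full equivalence $x^*\in\partial f(x)\iff x\in\partial f^*(x^*)$ in one step. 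You read it as supplying only the forward implication, which is a defensible cautious interpretation, and your route via biconjugacy supplies the converse cleanly in any case. Either way the argument is sound; yours has the advantage of not leaning on the phrasing of that last bullet.
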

\begin{theorem}
\label{thm:fenchel_prop2}
Let $f$ be a closed proper convex function. Let $\partial f$ be the subdifferential mapping. The effective domain of $\partial f$, which is the set $\dom{\partial f} = \{x|\partial f \neq 0\},$ satisfies
    \[
    \text{ri}(\dom{ f}) \subseteq \dom{\partial f} \subseteq \dom{ f}.
    \]
    The range of $\partial f$ is defined as $\text{range} \partial f=\cup\{\partial f(x)|x\in \mathbb{R}^d\}$.
    The range of $\partial f$ is the effective domain of $\partial f^*$, so
    \[
    \text{ri}(\dom{ f^*}) \subseteq \text{range} \partial f \subseteq \dom{ f^*}.
    \]
\end{theorem}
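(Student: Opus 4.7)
}
The plan is to prove the two chains of inclusions essentially in parallel, reducing the second one to the first via Fenchel duality. I will dispose of the easy inclusions first, then concentrate effort on the single nontrivial step: showing that $\partial f(x)$ is nonempty at every point $x$ of $\text{ri}(\dom{f})$.

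First I would handle the right inclusion $\dom{\partial f} \subseteq \dom{f}$ directly from the definition. If $x^\ast \in \partial f(x)$, then by definition $f(z) \geq f(x) + \iprod{x^\ast}{z-x}$ for all $z \in \mathbb{R}^d$. Since $f$ is proper there exists some $z_0$ with $f(z_0) < \infty$, and choosing $z = z_0$ in the subgradient inequality forces $f(x) \leq f(z_0) - \iprod{x^\ast}{z_0-x} < \infty$. Because $f$ is proper (so never takes value $-\infty$), $x \in \dom{f}$. The right inclusion in the second chain, namely $\text{range}\,\partial f \subseteq \dom{f^\ast}$, will follow immediately once I identify $\text{range}\,\partial f = \dom{\partial f^\ast}$ (see the last paragraph) and apply this same argument to $f^\ast$, which is closed and proper convex by Theorem~\ref{thm:fenchel_prop1}.

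The main work is the inclusion $\text{ri}(\dom{f}) \subseteq \dom{\partial f}$. Fix $x \in \text{ri}(\dom{f})$; I must produce a subgradient. The approach is to apply a supporting hyperplane argument to the epigraph $\text{epi}\,f = \{(z,t) : f(z) \leq t\}$, which is a convex subset of $\mathbb{R}^{d+1}$. The point $(x, f(x))$ lies on the boundary of $\text{epi}\,f$ but not in its relative interior, so there is a nontrivial supporting hyperplane at $(x,f(x))$: vectors $(a, b) \neq 0$ and scalar $c$ with $\iprod{a}{z} + b t \geq c$ for all $(z,t) \in \text{epi}\,f$ and equality at $(x,f(x))$. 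The hard part, and the only place where $x \in \text{ri}(\dom{f})$ is really used, is ruling out the vertical case $b = 0$: if $b = 0$ then $\iprod{a}{z} \geq \iprod{a}{x}$ for all $z \in \dom{f}$, and since $x$ is a relative interior point of $\dom{f}$ this forces $a$ to be orthogonal to the affine hull of $\dom{f}$, contradicting $(a,b) \neq 0$ after passing to the affine hull. With $b \neq 0$ secured, the fact that $t$ can be taken arbitrarily large in the epigraph forces $b > 0$, and after rescaling so that $b = 1$ the supporting inequality becomes $f(z) \geq f(x) + \iprod{-a}{z-x}$, exhibiting $-a \in \partial f(x)$.

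Finally, to derive the second chain I would invoke Theorem~\ref{thm:fenchel_prop4}: $x \in \partial f^\ast(x^\ast)$ iff $x^\ast \in \partial f(x)$. Consequently $x^\ast \in \text{range}\,\partial f$ iff $\partial f^\ast(x^\ast) \neq \emptyset$, i.e.\ iff $x^\ast \in \dom{\partial f^\ast}$, so $\text{range}\,\partial f = \dom{\partial f^\ast}$. Applying the first chain (already proven) with $f$ replaced by the closed proper convex function $f^\ast$ yields $\text{ri}(\dom{f^\ast}) \subseteq \dom{\partial f^\ast} \subseteq \dom{f^\ast}$, which upon substitution gives the desired inclusions for $\text{range}\,\partial f$. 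The anticipated obstacle is entirely concentrated in the nonverticality step of the supporting hyperplane construction; everything else is either definitional or a clean application of the already-cited Fenchel duality results.
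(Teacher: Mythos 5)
Your proposal is correct in outline, but note that the paper never proves this statement at all: it is imported as background from Rockafellar's \emph{Convex Analysis} (it is essentially Theorem 23.4 there, combined with the inversion rule you cite as Theorem~\ref{thm:fenchel_prop4}), with the appendix simply saying the properties "can be found in" that reference. So there is no in-paper argument to compare against; what you have written is the standard textbook proof, and its skeleton is sound: the two right-hand inclusions are definitional (or follow from Theorem~\ref{thm:fenchel_prop3}, since $x^*\in\partial f(x)$ gives $f^*(x^*)=\iprod{x}{x^*}-f(x)<\infty$), the second chain reduces to the first via $\text{range}\,\partial f=\dom{\partial f^*}$ using Theorem~\ref{thm:fenchel_prop4} together with the fact (Theorem~\ref{thm:fenchel_prop1}) that $f^*$ is again closed proper convex, and the only real content is $\text{ri}(\dom{f})\subseteq\dom{\partial f}$ via a supporting hyperplane to $\text{epi}\,f$ at $(x,f(x))$.

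The one step you should tighten is the nonverticality argument, which as phrased conflates two ways of finishing. A supporting hyperplane at $(x,f(x))$ with $b=0$ and $a$ orthogonal to the direction space of $\text{aff}(\dom{f})$ is \emph{not} by itself a contradiction when $\dom{f}$ is lower-dimensional: such a trivial hyperplane (containing all of $\text{epi}\,f$) exists whenever $a\neq 0$ is orthogonal to the affine hull. You must either (i) invoke proper separation of $(x,f(x))$ from $\text{epi}\,f$ (possible exactly because the point is not in $\text{ri}(\text{epi}\,f)$), so the hyperplane does not contain $\text{epi}\,f$, and then $b=0$ yields some $z_1\in\dom{f}$ with $\iprod{a}{z_1}>\iprod{a}{x}$, contradicted by moving from $x$ slightly in the direction $-(z_1-x)$, which stays in $\dom{f}$ since $x\in\text{ri}(\dom{f})$; or (ii) carry out the whole construction inside $\text{aff}(\dom{f})\times\mathbb{R}$, where $x$ is an interior point, conclude $a=0$ there, and then say explicitly that a subgradient of the restricted function extends to one on $\mathbb{R}^d$ because $z-x$ lies in the direction space of the affine hull (and $f=+\infty$ off $\dom{f}$), so components of $x^*$ orthogonal to that space are irrelevant. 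Your phrase "after passing to the affine hull" gestures at (ii) but omits the extension remark; with that filled in, the proof is complete and matches the classical argument the paper is implicitly relying on.
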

\paragraph{Strong Convexity and Smoothness.} We now define strong convexity and strong smoothness and show that these two properties are duals of each other.
\begin{defn}[Strong Convexity]
\label{def:strong_convexity}
A function $f:\cX \to \mathbb{R}\cup\{\infty\}$ is $\beta$-strongly convex w.r.t a norm $\|\cdot\|$ if for all $x,y \in \text{ri}(\dom{ f})$  and $\alpha\in (0,1)$ we have
\[
f(\alpha x + (1-\alpha)y) \leq \alpha f(x) + (1-\alpha)f(y) - \frac{1}{2}\beta \alpha (1-\alpha) \|x-y\|^2.
\]
\end{defn}
This definition of strong convexity is equivalent to the following condition on $f$~\citep[see Lemma 13 of][]{shalev2007thesis}
\[
f(y) \geq f(x) + \iprod{g}{y-x} + \frac{1}{2}\beta\|y-x\|^2, \quad \text{for any } x,y\in \text{ri}(\dom{ f}), g\in\partial f(x)
\]
\begin{defn}[Strong Smoothness]
\label{def:strong_smoothness}
A function $f:\cX \to \mathbb{R}\cup\{\infty\}$ is $\beta$-strongly smooth w.r.t a norm $\|\cdot\|$ if $f$ is everywhere differentiable and if for all $x,y$ we have
\[
f(y) \leq f(x) + \iprod{\grad f(x)}{y-x} + \frac{1}{2}\beta\|y-x\|^2.
\]
\end{defn}
\begin{theorem}[\citet{kakade2009duality}]
\label{thm:fenchel_strong_convex_weak}
Assume that $f$ is a proper closed and convex function. Suppose $f$ is $\beta$-strongly smooth w.r.t a norm $\|\cdot\|$. Then its conjugate $f^*$ satisfies the following for all $a,x$ with $u = \grad f(x)$
\[
f^*(a+u)  \geq f^*(u) + \iprod{x}{a}+\frac{1}{2\beta}\|a\|_*^2.
\]
\end{theorem}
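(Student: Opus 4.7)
The plan is to start from the variational definition $f^*(a+u) = \sup_{z}\{\iprod{z}{a+u} - f(z)\}$, insert the strong-smoothness upper bound on $f(z)$ (equivalently, a lower bound on $-f(z)$), and recognize the remaining quadratic supremum as the Fenchel conjugate of $\tfrac{\beta}{2}\|\cdot\|^2$. Since $\beta$-smoothness of $f$ controls $f(z)$ in terms of $f(x)$, $u=\grad f(x)$ and $\|z-x\|^2$, after substitution the $z$-dependence inside the supremum decouples into a linear term in $a$ and a penalty $\tfrac{\beta}{2}\|z-x\|^2$.

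Concretely, for any $z$ the smoothness inequality gives
\[
\iprod{z}{a+u} - f(z) \;\geq\; \iprod{z}{a+u} - f(x) - \iprod{u}{z-x} - \tfrac{\beta}{2}\|z-x\|^2,
\]
and the cross term $\iprod{z}{u}$ on the right cancels, leaving $\iprod{u}{x}-f(x) + \iprod{z}{a} - \tfrac{\beta}{2}\|z-x\|^2$. Taking the supremum over $z$ and shifting by $w=z-x$ produces
\[
f^*(a+u) \;\geq\; \bigl(\iprod{u}{x}-f(x)\bigr) + \iprod{x}{a} + \sup_{w}\Bigl\{\iprod{w}{a} - \tfrac{\beta}{2}\|w\|^2\Bigr\}.
\]
The inner supremum is the conjugate of $\tfrac{\beta}{2}\|\cdot\|^2$, which a standard radial optimization (optimizing over direction to pull out $\|a\|_*$ and then over scale) evaluates to $\tfrac{1}{2\beta}\|a\|_*^2$.

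It remains to identify $\iprod{u}{x}-f(x)$ with $f^*(u)$. Because $f$ is smooth (hence differentiable) and proper closed convex, $u=\grad f(x)$ implies $u\in\partial f(x)$, and Theorem~\ref{thm:fenchel_prop3} then gives the Fenchel--Young equality $f(x)+f^*(u)=\iprod{x}{u}$. Substituting back yields $f^*(a+u)\ge f^*(u)+\iprod{x}{a}+\tfrac{1}{2\beta}\|a\|_*^2$, as required. The only steps needing a touch of care are the dual-norm identity for the quadratic conjugate and the appeal to Theorem~\ref{thm:fenchel_prop3}; the rest is a rearrangement of the smoothness inequality and a change of variables, so I do not anticipate a substantive obstacle.
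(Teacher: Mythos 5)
Your proof is correct. Note that the paper itself does not prove Theorem~\ref{thm:fenchel_strong_convex_weak}; it is stated as a known background result with a citation to \citet{kakade2009duality}, so there is no in-paper argument to compare against. Your derivation is the standard one: insert the strong-smoothness upper bound on $f(z)$ into the conjugate supremum, note that the $\iprod{z}{u}$ terms cancel, shift variables to $w = z - x$, evaluate $\sup_w\{\iprod{w}{a} - \tfrac{\beta}{2}\|w\|^2\} = \tfrac{1}{2\beta}\|a\|_*^2$ as the conjugate of the squared norm, and close with the Fenchel--Young equality $f(x) + f^*(u) = \iprod{x}{u}$ (justified because differentiability gives $u = \grad f(x) \in \partial f(x)$, so Theorem~\ref{thm:fenchel_prop3} applies). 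One cosmetic point: since only a lower bound on $f^*(a+u)$ is needed, the ``radial optimization'' step only requires exhibiting a single maximizing $w$ (take $w = \beta^{-1}\|a\|_*\hat{w}$ where $\hat{w}$ is a unit vector attaining the dual norm), so you do not even need the full conjugate identity.
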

\begin{theorem}[\citet{kakade2009duality}]
\label{thm:fenchel_strong_convex}
Assume that $f$ is a closed and convex function.
Then $f$ is $\beta$-strongly convex w.r.t a norm $\|\cdot\|$ iff $f^*$ is $\frac{1}{\beta}$-strongly smooth w.r.t the dual norm $\|\cdot\|_{*}$.
\end{theorem}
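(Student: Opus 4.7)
The plan is to prove the two implications separately, with the forward direction (strong convexity of $f$ $\Rightarrow$ strong smoothness of $f^*$) being the substantive one and the backward direction following almost for free from Theorem~\ref{thm:fenchel_strong_convex_weak} combined with the biconjugate identity $f^{**}=f$ (Theorem~\ref{thm:fenchel_prop1}).

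For the forward direction, I would fix arbitrary $u,v\in\dom{f^*}$ and select any $x_u\in\partial f^*(u)$; the Fenchel--Young equivalences of Theorems~\ref{thm:fenchel_prop3}--\ref{thm:fenchel_prop4} then give $u\in\partial f(x_u)$ and $f^*(u)=\iprod{u}{x_u}-f(x_u)$. The $\beta$-strong convexity of $f$ produces the subgradient lower bound $f(y)\geq f(x_u)+\iprod{u}{y-x_u}+\tfrac{\beta}{2}\|y-x_u\|^2$. Plugging this into the definition $f^*(v)=\sup_y[\iprod{v}{y}-f(y)]$ and simplifying via the identity for $f^*(u)$ reduces the problem to maximizing the quadratic $\iprod{v-u}{z}-\tfrac{\beta}{2}\|z\|^2$ in $z$, whose value is the standard $\tfrac{1}{2\beta}\|v-u\|_*^2$ (obtained by writing $z=ts$ with $\|s\|=1$ and optimizing over $t$ first, then taking the supremum over unit-norm directions). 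This yields
\begin{equation*}
f^*(v)\;\leq\; f^*(u)+\iprod{x_u}{v-u}+\tfrac{1}{2\beta}\|v-u\|_*^2.
\end{equation*}
A separate, crucial step is to upgrade $x_u$ to an honest gradient: strong convexity of $f$ forces the maximizer in $\sup_y[\iprod{u}{y}-f(y)]$ to be unique, so $\partial f^*(u)=\{x_u\}$ and $f^*$ is in fact differentiable with $\grad f^*(u)=x_u$. The inequality above then becomes exactly Definition~\ref{def:strong_smoothness}.

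For the backward direction, I would invoke Theorem~\ref{thm:fenchel_strong_convex_weak} with the roles of $f$ and $f^*$ interchanged. Applying it to $g\defeq f^*$, which is $1/\beta$-smooth w.r.t.\ $\|\cdot\|_*$, yields $g^*(u+a)\geq g^*(u)+\iprod{x}{a}+\tfrac{\beta}{2}\|a\|^2$ for every $a$ and every $u,x$ satisfying $u=\grad g(x)=\grad f^*(x)$---equivalently $x\in\partial f(u)$. Since $g^*=f^{**}=f$, this is precisely the subgradient characterization of $\beta$-strong convexity of $f$ at every $u\in\dom{\partial f}$, and Theorem~\ref{thm:fenchel_prop2} together with lower semicontinuity of $f$ extends the bound to all of $\dom{f}$.

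I expect the main obstacle to lie in the forward direction's differentiability upgrade. A direct plug-in of one subgradient $x_u$ only yields a quadratic majorant through that particular subgradient, which is weaker than the two-sided condition in Definition~\ref{def:strong_smoothness}; what makes the equivalence sharp is the uniqueness argument that turns $\partial f^*$ into a single-valued mapping everywhere on $\dom{f^*}$ under strong convexity of $f$. The quadratic sup computation itself, as well as the reduction in the backward direction, are then routine.
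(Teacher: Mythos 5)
The paper never proves this statement; it records it as a cited background fact from Kakade et al.\ (2009) and only ever invokes the smoothness-to-strong-convexity half (Theorem~\ref{thm:fenchel_strong_convex_weak}) in its own arguments. There is therefore no in-paper proof to compare against, and your proposal should be judged on its own. It is essentially correct and is the standard duality argument. The forward direction's quadratic-supremum computation is right, and the "differentiability upgrade" via uniqueness of the maximizer under strong concavity is exactly the closing step that makes the majorant an honest tangent. The one point left implicit, which you should make explicit, is that $\beta$-strong convexity of $f$ forces $\dom{f^*}=\mathbb{R}^d$ and forces the supremum in $f^*(u)=\sup_y\left[\iprod{u}{y}-f(y)\right]$ to be attained for every $u$: the subgradient lower bound at any fixed $x_0\in\text{ri}(\dom{f})$ makes the supremand coercive, and closedness of $f$ makes it upper semicontinuous. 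You need this both to select $x_u$ for an arbitrary $u$ and to conclude the everywhere-differentiability required by Definition~\ref{def:strong_smoothness}. For the backward direction, specializing Theorem~\ref{thm:fenchel_strong_convex_weak} to $g=f^*$ and using $g^*=f^{**}=f$ (Theorem~\ref{thm:fenchel_prop1}) and $\|\cdot\|_{**}=\|\cdot\|$ is correct; the fact that $f^*$ is differentiable under strong smoothness, together with Theorems~\ref{thm:fenchel_prop3}--\ref{thm:fenchel_prop4}, is what lets you run the argument at every $w\in\partial f(u)$ and not just at a single preferred subgradient. Your final remark about extending from $\dom{\partial f}$ to $\dom{f}$ is actually unnecessary: Definition~\ref{def:strong_convexity} only quantifies over $\text{ri}(\dom{f})$, which is already covered by the inclusion $\text{ri}(\dom{f})\subseteq\text{range}(\partial f^*)$ from Theorem~\ref{thm:fenchel_prop2}.
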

 \end{document}